\documentclass{article}

\usepackage[final,nonatbib]{neurips_2022}




\usepackage[utf8]{inputenc} 
\usepackage[T1]{fontenc}    
\usepackage{url}            
\usepackage{booktabs}       
\usepackage{nicefrac}       
\usepackage{microtype}      
\usepackage[dvipsnames]{xcolor}         
\usepackage{todonotes}

\usepackage{array}
\usepackage{microtype}
\usepackage{graphicx}
\usepackage{booktabs}
\usepackage{multirow, rotating}
\usepackage[flushleft]{threeparttable}
\usepackage{pifont}

%
%

\usepackage{amsthm}
\usepackage{amsmath}
\usepackage{amsfonts}
\usepackage{amssymb}
\usepackage{caption}
\usepackage{subcaption}
\usepackage{dsfont}
\usepackage{mathtools}
\usepackage{bbm}
\newcommand{\der}{\operatorname{d\!}{}}
\usepackage{enumitem}
\usepackage{thmtools,thm-restate}

\newtheorem{corollary}{Corollary}
\newtheorem{example}{Example}
\newtheorem{lemma}{Lemma}
\newtheorem{property}{Property}
\newtheorem{proposition}{Proposition}
\newtheorem*{remark}{Remark}
\newtheoremstyle{break}
  {\topsep}{\topsep}%
  {\itshape}{}%
  {\bfseries}{}%
  {\newline}{}%
\theoremstyle{break}

\newtheorem{definition}{Definition}

\usepackage{wrapfig}

\newcommand{\Reals}{\mathbb{R}}

\newcommand{\Natural}{\mathbb{N}}

\newcommand{\Ball}{\mathfrak{B}}
\newcommand{\Prob}{\mathbb{P}}
\newcommand{\Expect}{\mathbb{E}}
\newcommand{\Loss}{\mathcal{L}}
\newcommand{\Empirical}{\mathcal{E}}
\newcommand{\Hypothesis}{\mathcal{H}}
\newcommand{\Lip}{\text{Lip}_1(\Reals^n,\Reals)}
\newcommand{\Params}{\theta}
\newcommand{\Znc}{\mathcal{Z}}
\newcommand{\Xsub}{\mathcal{X}}
\newcommand{\Labels}{\mathcal{Y}}
\newcommand{\Classifiers}{\mathcal{C}}
\newcommand{\BigO}{\mathcal{O}}
\newcommand{\sign}{\text{sign}}
\newcommand{\Jacobian}{J}
\newcommand{\indicator}{\mathds{1}}
\newcommand{\diam}[1]{\text{diam }#1}
\newcommand{\Wasserstein}{\mathcal{W}}
\newcommand{\supp}[1]{\text{supp }#1}

\newcommand*\closure[1]{\overline{#1}} 
\newcommand{\LipCl}{\text{LipNet1 }}
\newcommand{\LipInf}{\text{AllNet }}

\newcommand{\veryshortarrow}[1][3pt]{\mathrel{%
   \vcenter{\hbox{\rule[-.5\fontdimen8\textfont3]{#1}{\fontdimen8\textfont3}}}%
   \mkern-4mu\hbox{\usefont{U}{lasy}{m}{n}\symbol{41}}}}

\DeclareMathOperator*{\argmax}{arg\,max}
\DeclareMathOperator*{\argmin}{arg\,min}
\DeclarePairedDelimiter\ceil{\lceil}{\rceil}

\let\emptyset\varnothing
\newcommand{\defeq}{\vcentcolon=}

\newcommand{\yes}[1]{\textcolor{OliveGreen}{#1}}
\newcommand{\no}[1]{\textcolor{BrickRed}{#1}}


\usepackage[breaklinks=true,bookmarks=false]{hyperref}




\title{Pay attention to your loss: understanding misconceptions about 1-Lipschitz neural networks}

%

\author{Louis Béthune, \footnotemark[2]\\
IRIT, Université Paul-Sabatier\\
Toulouse, France
\And Thibaut Boissin, \footnotemark[2]\\
IRT Saint-Exupéry\\
Toulouse, France
\And Mathieu Serrurier\\
IRIT, Université Paul-Sabatier\\
Toulouse, France
\And Franck Mamalet\\
IRT Saint-Exupéry\\
Toulouse, France
\And Corentin Friedrich\\
IRT Saint-Exupéry\\
Toulouse, France
\And Alberto Gonz\'alez-Sanz\\
IMT, Université Paul-Sabatier\\
Toulouse, France
}

\begin{document}

\maketitle

\begin{abstract}

    Lipschitz constrained networks have gathered considerable attention in the deep learning community, with usages ranging from Wasserstein distance estimation to the training of certifiably robust classifiers. However they remain commonly considered as less accurate, and their properties in learning are still not fully understood. In this paper we clarify the matter: when it comes to classification 1-Lipschitz neural networks enjoy several advantages over their unconstrained counterpart. First, we show that these networks are as accurate as classical ones, and can fit arbitrarily difficult boundaries. Then, relying on a robustness metric that reflects operational needs we characterize the most robust classifier: the WGAN discriminator. Next, we show that 1-Lipschitz neural networks generalize well under milder assumptions. Finally, we show that hyper-parameters of the loss are crucial for controlling the accuracy-robustness trade-off. We conclude that they exhibit appealing properties to pave the way toward provably accurate, and provably robust neural networks.    

\end{abstract}

\section{Introduction}\label{sec:intro}
    1-Lipschitz neural networks have drawn great attention in the last decade, with motivation ranging from adversarial robustness to Wasserstein distance computation. In the following, we denote by \textbf{\LipCl} the class of 1-Lipschitz neural networks, by \textbf{\LipInf} the class of neural networks without constraints on their Lipschitz constant, i.e conventional neural networks.

Roughly speaking, the Lipschitz constant of neural networks quantifies how much their outputs can change when inputs are perturbed. When this constant is high, as it is often the case for neural networks of \LipInf, they become vulnerable to adversarial attacks (see~\cite{szegedy2013intriguing,yuan2019adversarial} and references therein): a carefully chosen small noise added to the inputs, usually imperceptible, can change the class prediction. One possible defense against adversarial attacks is to constrain the network to be 1-Lipschitz~(in \LipCl)~\cite{li2019preventing}, which provides provable robustness guarantees, together with an improvement of generalization~\cite{Sokolic_2017} and interpretability of the model~\cite{tsipras2019robustness}. \LipCl networks are also used to estimate Wasserstein distance, thanks to Kantorovich-Rubinstein duality in the seminal work of WGAN~\cite{arjovsky2017wasserstein}.  

  
Despite their competitiveness with networks of \LipInf on medium scale problems~\cite{cisse2017parseval,serrurier2020achieving}, they still suffer from misconceptions. A belief commonly invoked against networks of \LipCl is that they are less expressive: ``Lipschitz-based approaches suffer from some representational limitations that may prevent them from achieving higher levels of performance and being applicable to more complicated problems''~\cite{huster2018limitations}.

Although this claim seems rational at first glance, the link between Lipschitz constant and expressiveness is not trivial. While there is an obvious lack of expressiveness for regression tasks, this intuition fades when it comes to classification. Indeed, every \LipInf network $g:\Reals^n\rightarrow\Reals^K$ is $L$-Lipschitz for some (generally unknown) $L>0$. Then $f=\frac{1}{L}g$ is a \mbox{1-Lipschitz} neural network with the same decision boundary, since prediction $\argmax_k g_k$ is invariant by positive rescaling of the logits. In particular, $f$ has the same accuracy and also the same robustness to adversarial attacks as $g$. We illustrate this empirically by \textbf{training a \LipCl network until it reaches 99.96\% accuracy on CIFAR-100 with random labels} (see Appendix~\ref{section:cifar100_random}).  


We demonstrate that \LipCl networks are theoretically better grounded than \LipInf networks when it comes to classification, through our threefold contribution on Expressiveness (Section~\ref{sec:classificationpower}), Robustness (Section~\ref{sec:robustness}) and Generalization (Section~\ref{sec:generalize}).  

\textbf{First, in Section~\ref{sec:classificationpower}}  we confirm that \LipCl are as expressive as \LipInf networks for classification, and can learn arbitrary complex decision boundary. We show that hyper-parameters of the loss are of crucial importance, and control the ability to fit properly the train set.  
  
\textbf{Then, in Section~\ref{sec:robustness}} we show that accuracy and robustness are often antipodal objectives. We characterize the robustness of the highest accuracy \LipCl classifier: it is achieved by the Signed Distance Function~(Definition~\ref{def:nlctwoclasses} in Appendix \ref{app:classificationpower}). We also characterize the classifier of highest certifiable robustness, and we show it corresponds to the dual potential of Wasserstein-1 distance (i.e the discriminator of a WGAN~\cite{arjovsky2017wasserstein}).  
  
\textbf{Finally, in Section~\ref{sec:generalize}} we show that \LipCl benefit from several generalization guarantees. They are consistent estimators: contrary to \LipInf, we prove that their train loss will converge to test loss as the size of the train set increases. Moreover, we show that \LipCl classifiers with margin are PAC-learnable~\cite{valiant1984theory}: it provides bounds on the number of train examples required to reach a targeted test accuracy. Interestingly, this bound is independent of the architecture size, which allows to train enormous \LipCl networks without risking overfitting.

\vspace{-2mm}
\section{Notations and experimental setting}\label{sec:notations}
\vspace{-2mm}
    The core of the paper mainly deal with  binary classification over $\Reals^n$ with label set $\Labels=\{-1,+1\}$. Let $(X,Y)$ be a random variable taking values on $\Xsub\times\Labels$, where $\Xsub\subset\Reals^n$ is assumed to be a compact set. Such a pair follows the joint distribution $\Prob_{XY}$, defined on the space of probability measures $\mathcal{P}(\Xsub\times\Labels)$. The marginal distribution of $X$ is denoted by $\Prob_X\in\mathcal{P}(\Xsub)$ and its support by $\supp \Prob_X$. We suppose the observation of a sample $(x_1,y_1), \dots, (x_p,y_p)$ i.i.d. with common law $\Prob_{XY}$, and the goal is to learn a classifier $c:\Xsub\rightarrow\Labels$ modeling the optimal Bayes classifier $\argmax_{y\in\Labels}\Prob_{Y|X}(y|x)$. $P$ (resp. $Q$) denotes the input distribution of label $+1$ (resp. $-1$).  
  
The Lipschitz constant $\text{Lip}(f)$ of a function $f:\Reals^n\rightarrow\Reals^K$ is defined as the smallest $L\geq 0$ such that for all $x,z\in\Reals^n$ we have $\|f(x)-f(z)\|\leq L\|x-z\|$. In the rest of the paper, we focus on euclidean norm $\|\cdot\|$ for vectors and spectral norm $\|\cdot\|_2$ for matrices. The set of $L$-Lipschitz functions over $\Xsub\subset\Reals
^n$ with image in $\Reals^K$ is denoted $\text{Lip}_L(\Xsub,\Reals^K)$. 
  
\begin{definition}[Class of \LipInf networks]
\LipInf denotes the set of unconstrained neural networks. It includes any feed-forward network of fixed depth (without recurrent mechanisms) using affine layers (including convolutions and batch normalization) with weight matrices $W_1,W_2,\ldots W_d$ 
and Lipschitz activation function $\sigma$ (such as ReLU, sigmoid, tanh, etc). No constraint is enforced on their Lipschitz constant during training.  
\end{definition}

\begin{definition}[Class of \LipCl networks]
\LipCl denotes the set of feed-forward neural networks $f$ defined as in Theorem 3 of Anil et al.~\cite{anil2019sorting}: $\|W_1\|_{2\rightarrow\infty}\leq 1$ (see~\cite{cape2019two} for details on the mixed norm $\|\cdot\|_{2\rightarrow\infty}$) and $\|W_i\|_{\infty}\leq 1$ for $i\geq 2$, and GroupSort2 activation function. They fulfill $\text{Lip}(f)\leq 1$.  
\end{definition}  
 \begin{remark}
 \LipInf networks benefit from universal approximation theorem in $C(\Xsub,\Reals^K)$, a classical result of literature~\cite{hassoun1995fundamentals}. \LipCl networks also benefit from an universal approximation theorem in $\text{Lip}_1(\Xsub,\Reals)$ with respect to uniform convergence~\cite{anil2019sorting}. Note that $\text{Lip}_L(\Xsub,\Reals^K)=\{Lf\mid f\in\text{Lip}_1(\Xsub,\Reals^K)\}$ so \LipCl can be used to approximate functions in $\text{Lip}_L(\Xsub,\Reals^K)$.
 \end{remark}
  
In practice authors of~\cite{anil2019sorting} noticed that using orthogonal weight matrices (i.e $W_i^TW=I$) yielded the best results. All our experiments use the Deel.Lip\footnote{\url{https://github.com/deel-ai/deel-lip} distributed under MIT License (MIT).} library~\cite{serrurier2020achieving}, following ideas of~\cite{anil2019sorting}. The networks use 1) orthogonal weight matrices and 2) GroupSort2 activations~\cite{anil2019sorting}. Orthogonalization is enforced using Spectral normalization~\cite{miyato2018spectral} and Bj{\"o}rck algorithm~\cite{bjorck1971iterative}. 
These networks belong to \LipCl by construction (see Appendix~\ref{ap:deellip} for our choice of architecture and relevant related work).   
  
  \LipCl networks provide robustness radius certificates against adversarial attacks~\cite{NEURIPS2018_48584348}. Computing these certificates is straightforward and does not increase runtime, contrary to methods based on bounding boxes or abstract interpretation~\cite{latorre2019lipschitz,weng2018towards,weng2018evaluating,zhang2021towards}. There is no need for adversarial training~\cite{madry2018towards} that fails to produce guarantees, or for randomized smoothing~\cite{pmlr-v97-cohen19c} which is costly.  
    
  Confusingly, any network of \LipInf has a finite Lipschitz constant, but computing it is NP-hard~\cite{scaman2018lipschitz}. Only a loose upper bound can be cheaply estimated: $\text{Lip}(f)\leq \text{Lip}(\sigma)^d\Pi_{i=1}^d \|W_i\|_2$ using the property that $\text{Lip}(f_d\circ f_{d-1}\circ\ldots \circ f_1)\leq\Pi_{i=1}^d\text{Lip}(f_i)$. In practice, this bound is often too high to provide meaningful certificates and besides, \LipInf networks have usually very small robustness radius~\cite{szegedy2013intriguing}.  
      
    \begin{definition}[Adversarial Attack] 
        For any classifier $c:\Xsub\rightarrow\Labels$, any $x\in\Reals^n$, consider the following optimization problem:
        \begin{equation}
            \epsilon=\min_{\delta\in\Reals^n} \|\delta\|\text{ such that }c(x+\delta)\neq c(x).
        \end{equation}  
        $\delta$ is an \textit{adversarial attack}, $x+\delta$ is an \textit{adversarial example}, and $\epsilon$ is the \textit{robustness radius} of $x$. 
    \end{definition}  
      
    \begin{property}[Local Robustness Certificates~\cite{NEURIPS2018_48584348}]\label{thm:certificates}
        For any $f\in\LipCl$ the robustness radius $\epsilon$ of binary classifier $\sign\circ f$ at example $x$ verifies $\epsilon\geq|f(x)|$.
    \end{property} 
  
\paragraph{Losses:} The Binary Cross-Entropy (BCE) loss (also called logloss) is among the most popular choices of loss within the deep learning community. Let $f:\Reals^n\rightarrow\Reals$ a neural network. For an example $x\in\Reals^n$ with label $y\in\Labels$, and $\sigma(x)=\frac{1}{1+\exp{(-x)}}$ the logistic function mapping logits to probabilities, the BCE is written $\Loss^{bce}_{\tau}(f(x),y)=-\log{\sigma(y\tau f(x))}$, with temperature scaling parameter $\tau>0$. This hyper-parameter of the loss defaults to $\tau=1$ in most frameworks such as Tensorflow or Pytorch. Note that $\Loss^{bce}_{\tau}(f(x),y)=\Loss^{bce}_{1}(\tau f(x),y)$ so we can equivalently tune $\tau$ or the Lipschitz constant $L$. We show in Section~\ref{sec:consistency} that \textbf{for \LipCl the temperature $\tau$ allow to control the generalization gap}. We also consider the Hinge loss $\Loss^{H}_{m}(f(x),y)=\max{(0, m-yf(x))}$ with margin $m>0$, as used in~\cite{li2019preventing} for \LipCl networks training.

We focus on binary classification for readability and clarity purposes; however, we prove in Appendices~\ref{app:mutliclassexpressive} and~\ref{app:multiclasshkr} that \textbf{the following theoretical results generalize to the multi-class case}, as done in experiments. The proofs of all propositions can be found in the appendix. 
\section{1-Lipschitz classifiers are expressive}\label{sec:classificationpower}
    In this section, we show that \LipCl are as powerful as any other classifier, like their unconstrained counterpart. In particular, when classes are separable they can achieve 100\% accuracy. 

\subsection{Boundary decision fitting}

    \begin{restatable}{proposition}{lippowerbinary}{\normalfont\textbf{Lipschitz Binary classification.}}\label{thm:lippowerbinary}
        For any binary classifier $c:\Xsub\rightarrow\Labels$ with closed pre-images ($c^{-1}(\{y\})$ is a closed set) there exists a 1-Lipschitz function $f:\Reals^n\rightarrow\Reals$ such that $\sign{(f(x))}=c(x)$ on $\Xsub$ and such that $\|\nabla_x f\|=1$ almost everywhere (w.r.t Lebesgue measure). 
    \end{restatable}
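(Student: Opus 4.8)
The plan is to build $f$ as a signed distance function to a separating surface. Write $A=c^{-1}(\{+1\})$ and $B=c^{-1}(\{-1\})$; by hypothesis these are closed, and since $c$ is a function they are disjoint with $A\cup B=\Xsub$. As $\Xsub$ is compact, both are closed in $\Reals^n$. If one of them is empty, say $B=\emptyset$, then $c\equiv+1$ and I simply take the affine map $f(x)=x_1-M$ with $M<\min_{x\in\Xsub}x_1$: it is $1$-Lipschitz, strictly positive on the compact set $\Xsub$, and satisfies $\|\nabla f\|\equiv 1$. So assume henceforth that both $A,B\neq\emptyset$.

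Set $d_A(x)=\mathrm{dist}(x,A)$ and $d_B(x)=\mathrm{dist}(x,B)$, consider the open set $U=\{x:d_A(x)<d_B(x)\}$, and define $f$ to be the signed distance to $\partial U$, positive inside $U$ and negative outside, i.e. $f(x)=\mathrm{dist}(x,U^c)-\mathrm{dist}(x,U)$, so that $|f|=\mathrm{dist}(\cdot,\partial U)$ with sign $+$ on $\overline U$ and $-$ on $U^c$. I first check the sign on $\Xsub$: on $A$ we have $d_A=0<d_B$ (using $A\cap B=\emptyset$ and $B$ closed), hence $A\subseteq U$ and $f>0$ there; on $B$ we have $d_A>0=d_B$, hence $B\subseteq\{d_A>d_B\}\subseteq\mathrm{int}(U^c)$ and $f<0$ there. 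Thus $\sign(f(x))=c(x)$ on all of $\Xsub$. Next, $f$ is $1$-Lipschitz: for two points with the same sign this is just the Lipschitzness of $\mathrm{dist}(\cdot,\partial U)$, while for two points of opposite sign the segment joining $x$ to $y$ meets $\partial U$ at some $z$, whence $|f(x)|+|f(y)|\le\|x-z\|+\|z-y\|=\|x-y\|$.

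The core of the argument is the eikonal property $\|\nabla f\|=1$ almost everywhere. Since the zero set of $f$ is exactly $\partial U$ and the sign of $f$ is locally constant off $\partial U$, near any point outside $\partial U$ the function $f$ coincides up to sign with the distance $\mathrm{dist}(\cdot,\partial U)$ evaluated at positive distance. By Rademacher's theorem such a distance function is differentiable almost everywhere, and at every differentiability point $x$ with (a.e.\ unique) nearest point $\pi(x)$ one has $\nabla\,\mathrm{dist}(\cdot,\partial U)(x)=(x-\pi(x))/\|x-\pi(x)\|$, a unit vector: the $1$-Lipschitz bound gives $\|\nabla\,\mathrm{dist}\|\le 1$, and probing the directional derivative along $x-\pi(x)$ supplies the matching lower bound $1$. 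Consequently $\|\nabla f\|=1$ at a.e.\ point outside $\partial U$, and it remains only to show that $\partial U$ itself is Lebesgue-null.

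I expect this last step to be the main obstacle, since the boundary of a Lipschitz sublevel set can a priori carry positive measure. Because $\partial U\subseteq E:=\{d_A=d_B\}$, it suffices to prove $|E|=0$, and I would argue by contradiction. The function $g=d_A-d_B$ is Lipschitz, so $\nabla g=0$ almost everywhere on the level set $E=\{g=0\}$; moreover $d_A$ and $d_B$ are themselves differentiable a.e. On $E$ both distances are strictly positive (equality to $0$ would place a point in $A\cap B=\emptyset$), so at a.e.\ point of $E$ the nearest points $\pi_A(x),\pi_B(x)$ are unique and $\nabla d_A(x)=(x-\pi_A(x))/d_A(x)$, $\nabla d_B(x)=(x-\pi_B(x))/d_B(x)$. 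If $|E|>0$, then at a.e.\ point of $E$ we would have $\nabla g=0$, i.e.\ $\nabla d_A=\nabla d_B$, which together with $d_A=d_B$ forces $\pi_A(x)=\pi_B(x)\in A\cap B$, a contradiction. Hence $|E|=0$, so $\partial U$ is null and $\|\nabla f\|=1$ almost everywhere, completing the construction.
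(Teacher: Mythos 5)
Your construction is the same as the paper's: the signed distance function to the equidistant boundary between the two closed pre-images, with the sign check on $\Xsub$, a same-sign/opposite-sign case split for 1-Lipschitzness, and a nearest-point directional-derivative argument for the eikonal property. Two details differ, and one of them is a genuine improvement. Minor: your opposite-sign Lipschitz argument (the segment from $x$ to $y$ must meet $\partial U$ at some $z$, so $|f(x)|+|f(y)|\leq\|x-z\|+\|z-y\|=\|x-y\|$) is more direct than the paper's proof by contradiction, and your explicit handling of the degenerate case where one class is empty is a case the paper's construction silently excludes. Major: your last step, showing that $E=\{d_A=d_B\}$ (hence $\partial U\subseteq E$) is Lebesgue-null, supplies a lemma the paper's proof does not establish. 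The paper only treats points off the boundary set: where the nearest boundary point is unique it gets $\|\nabla_x f\|=1$, and where it is non-unique it invokes Rademacher; it never rules out that the equidistant set itself has positive measure. This matters, because $f\equiv 0$ on that set, so if it had positive measure then $\nabla_x f=0$ at almost every density point of it where $f$ is differentiable, and the claimed a.e. eikonal property would fail. Your argument --- $\nabla(d_A-d_B)=0$ a.e. on the level set $E$, while differentiability of $d_A,d_B$ at a point of positive distance forces unique projections and gradients $(x-\pi_A(x))/d_A(x)$ and $(x-\pi_B(x))/d_B(x)$, so that $\nabla d_A=\nabla d_B$ together with $d_A=d_B$ would give $\pi_A(x)=\pi_B(x)\in A\cap B=\emptyset$ --- is exactly the missing piece, and it is correct. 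So your proof is not only valid but strictly more complete than the one in the paper.
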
 
    
      
    The level-sets of a $\text{Lip}_1(\Xsub,\Reals^K)$ functions (and especially the decision boundary) can be arbitrarily complex: restraining classifiers to $\text{Lip}_1(\Xsub,\Reals)$ does not affect the classification power. 
    \begin{definition}[\textbf{$\epsilon$-separated distributions}]
    Distributions $P$ and $Q$ are $\epsilon$-separated if the distance between $\supp P$ and $\supp Q$ exceeds $\epsilon>0$.
    \end{definition}
    \begin{restatable}{corollary}{zeroerror}{\normalfont\textbf{Separable classes implies zero error}.}\label{thm:zeroerror}
        If $P$ and $Q$ are $\epsilon$-separated, then there exists a network $f\in\LipCl$ such that \textbf{error} $E(\sign\circ f)\coloneqq\Expect_{(x,y)\sim\Prob_{XY}}[\indicator\{\sign(f(x))\neq y\}]=0$.  
    \end{restatable}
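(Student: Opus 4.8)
The plan is to reduce the corollary to the universal approximation property of \LipCl recorded in the Remark, after first exhibiting a \emph{margin}: an explicit $1$-Lipschitz function that not only classifies correctly but does so with values bounded away from zero on the relevant supports. Write $A\defeq\supp{P}$ and $B\defeq\supp{Q}$. As closed subsets of the compact set $\Xsub$ they are compact, and $\epsilon$-separation means $d(A,B)\defeq\inf_{a\in A,\,b\in B}\|a-b\|>\epsilon$ (the degenerate cases $A=\emptyset$ or $B=\emptyset$ are trivial). Define $f^\ast:\Reals^n\to\Reals$ by
\begin{equation*}
 f^\ast(x)=\tfrac{1}{2}\bigl(d(x,B)-d(x,A)\bigr),\qquad d(x,S)\defeq\inf_{s\in S}\|x-s\|.
\end{equation*}
Each $d(\cdot,S)$ is $1$-Lipschitz, so $f^\ast$ is $1$-Lipschitz. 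On $A$ we have $d(x,A)=0$ and $d(x,B)>\epsilon$, hence $f^\ast(x)>\epsilon/2$; symmetrically $f^\ast(x)<-\epsilon/2$ on $B$. Thus $\sign(f^\ast)=+1$ on $A$, $\sign(f^\ast)=-1$ on $B$, and crucially $|f^\ast(x)|>\epsilon/2$ for every $x\in A\cup B$.

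Next I would realize this target as a network. Restricted to the compact set $\Xsub$, the function $f^\ast$ lies in $\text{Lip}_1(\Xsub,\Reals)$, so the universal approximation theorem for \LipCl (Remark, following~\cite{anil2019sorting}) provides $f\in\LipCl$ with $\sup_{x\in\Xsub}|f(x)-f^\ast(x)|<\epsilon/2$. Because the approximation error is strictly smaller than the margin, signs are preserved exactly where it matters: for $x\in A$, $f(x)>f^\ast(x)-\epsilon/2>0$, and for $x\in B$, $f(x)<f^\ast(x)+\epsilon/2<0$. Hence $\sign(f(x))=+1$ on $A=\supp{P}$ and $\sign(f(x))=-1$ on $B=\supp{Q}$.

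Finally I would read off zero error. Since $P$ and $Q$ are the conditional input laws of the two labels, under $\Prob_{XY}$ we have $X\in\supp{P}$ almost surely on the event $\{Y=+1\}$ and $X\in\supp{Q}$ almost surely on $\{Y=-1\}$; on these sets $\sign(f(X))=Y$ by the previous step. Splitting the expectation over the two label events then gives $E(\sign\circ f)=\Expect_{(x,y)\sim\Prob_{XY}}[\indicator\{\sign(f(x))\neq y\}]=0$. The one delicate point is the approximation step: uniform closeness never by itself controls the sign of a function, so naively approximating an arbitrary correct classifier could flip labels along the decision boundary. The $\epsilon$-separation is precisely what removes this difficulty — it endows $f^\ast$ with the uniform margin $\epsilon/2$ on $\supp{P}\cup\supp{Q}$, and this set carries \emph{all} of the probability mass, so a sufficiently accurate approximation keeps every relevant sign correct. (One could equally start from Proposition~\ref{thm:lippowerbinary} rather than the explicit signed-distance function, but the latter has the advantage of displaying the margin directly.)
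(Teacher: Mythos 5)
Your proof is correct, and it takes a recognizably different (and in one respect more careful) route than the paper's. The paper reduces the corollary to Proposition~\ref{thm:lippowerbinary} via its multiclass version (Proposition~\ref{thm:lippowermulticlass}): it takes the optimal Bayes classifier, which has zero error under separability, notes that the closures of its pre-images are disjoint closed sets, and applies the signed-distance-function construction to obtain a $1$-Lipschitz function with the correct signs, asserting that this yields the desired \LipCl network. You instead bypass the SDF machinery with the simpler explicit function $f^\ast=\tfrac{1}{2}\bigl(d(\cdot,\supp{Q})-d(\cdot,\supp{P})\bigr)$, and, crucially, you make explicit the step the paper leaves implicit: a $1$-Lipschitz \emph{function} is not itself a member of \LipCl, so one must pass through the universal approximation theorem, and uniform approximation only preserves signs because of the $\epsilon/2$ margin you establish on $\supp{P}\cup\supp{Q}$. (The paper's SDF of the Bayes classifier does enjoy the same $\epsilon/2$ margin by a triangle-inequality argument, and the paper carries out exactly this approximation step in its non-separable ``bonus'' case, but for the separable case its one-line proof glosses over the function-to-network passage.) So the two proofs share the same skeleton---separation gives a margin, a margin-bearing $1$-Lipschitz classifier exists, and approximation transfers it into \LipCl---but yours is more elementary (no boundary set, no Bayes classifier) and closes a gap that the paper's own argument leaves open.
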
 
    
      The class of \LipCl networks does not suffer from bias for classification tasks. Some empirical studies show that indeed most datasets classes are separable~\cite{yang2020closer} such as CIFAR10 or MNIST. Furthermore, even if the classes are not separable, functions of \LipCl can nonetheless approximate the optimal Bayes classifier. Lipschitz constraint is not a constraint on the shape of the boundary (Figure~\ref{fig:signeddistancefunction}), but on the slope of the landscape of $f$. 
     
    
    \begin{figure}
     \centering
     \begin{subfigure}[b]{0.48\linewidth}
         \centering
         \includegraphics[width=0.9\linewidth]{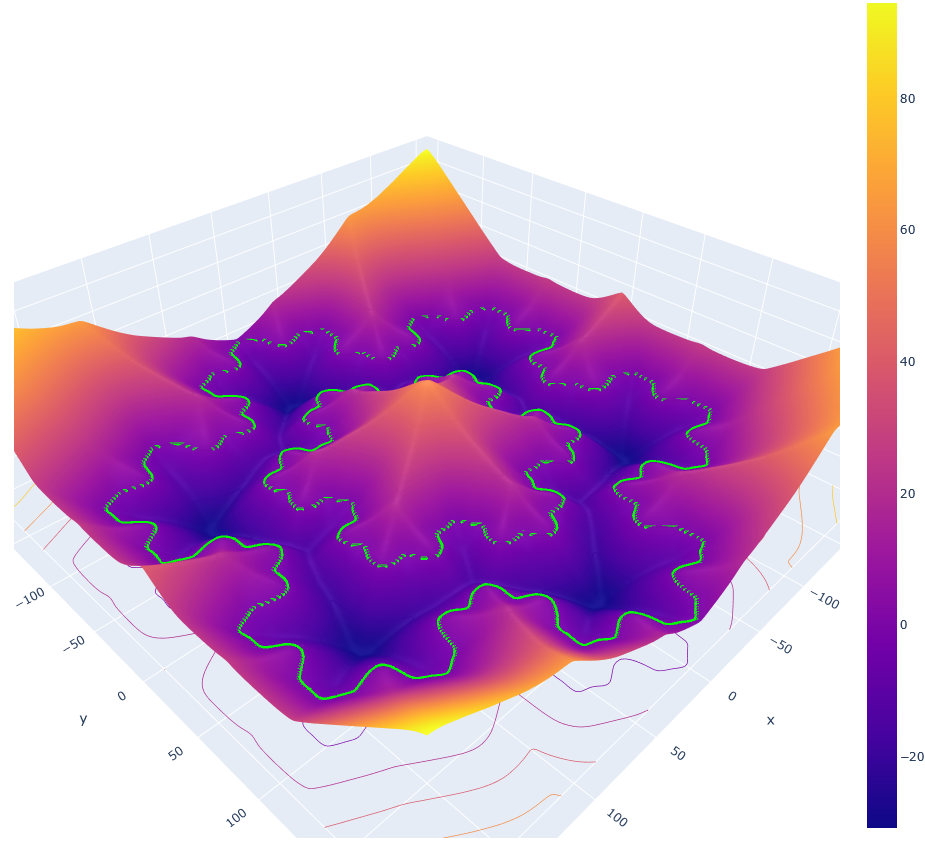}
         \caption{\textbf{Complex Decision Boundary $\partial$}. We chose $\partial$ as the fourth iteration of Von Koch Snowflake. We chose $P$ as the interior ring, while the center and the exterior correspond to $Q$. We train a \LipCl network with MSE to fit the SDF (Definition~\ref{def:nlctwoclasses} in Appendix) ground truth (160 000 pixels), until MAE is inferior to 1. It proves empirically that \LipCl networks can handle very sharp (almost fractal) decision boundary.}
         \label{fig:signeddistancefunction}
     \end{subfigure}
     \hfill
     \begin{subfigure}[b]{0.48\linewidth}
         \centering
         \includegraphics[width=1.\linewidth]{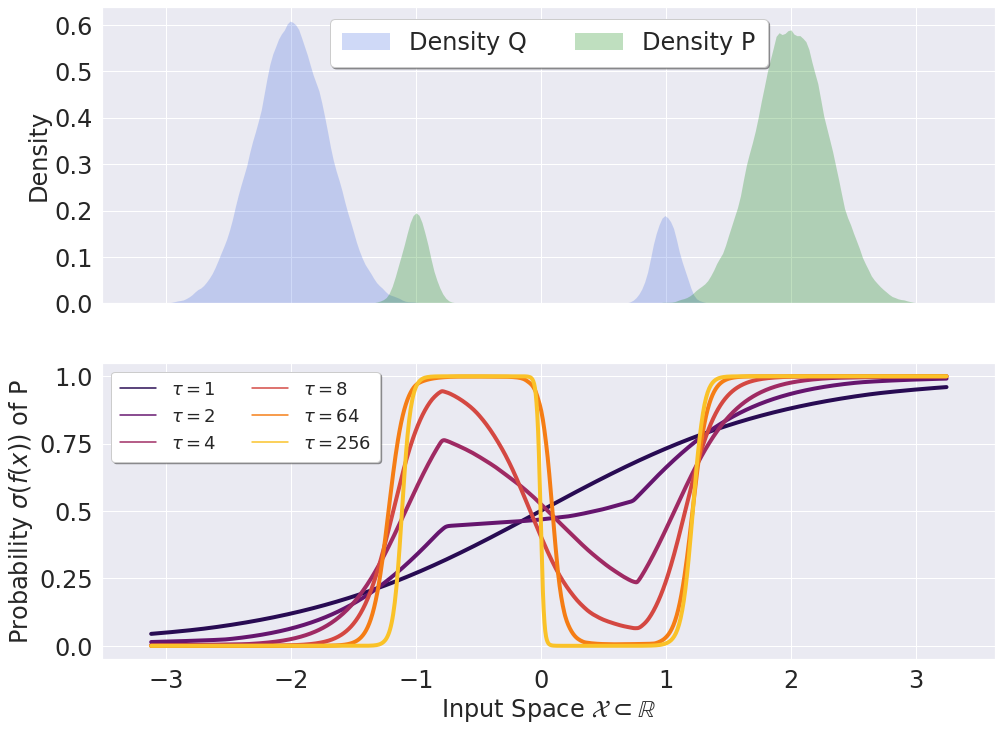}
         \caption{\textbf{Importance of $\tau$ in BCE}. We train a \LipCl network with BCE and different values for $\tau$. We chose a toy example where $P$ and $Q$ are Gaussian mixtures with two modes of weights $0.9$ and $0.1$. We highlight the different shapes of the minimizer $\sigma\circ f$ as function of $\tau$. \textbf{High values of $\tau$ leads to better fitting, whereas for lower $\tau$ the small weights Gaussian of the mixture are treated as noise and ignored}.}
         \label{ex:nosameminimum}
     \end{subfigure}
     \caption{}
    \end{figure}

\subsection{Understanding why \LipCl are often perceived as not expressive}\label{sec:lipclminexists}

    \LipCl networks cannot reach zero loss with BCE: this may explain why they are perceived as not expressive enough. Yet the minimizer of BCE exists and is well defined.
    
    \begin{restatable}{proposition}{minimizerattained}{\normalfont\textbf{BCE minimization for \mbox{1-Lipschitz} functions}.}\label{thm:minimizerattained}
        Let $\Xsub\subset\Reals^n$ be a compact and $\tau>0$. Then the infimum in Equation~\ref{eq:optim1lip} is a minimum, denoted $f^{\tau}\in\text{Lip}_1(\Xsub,\Reals)$:
        \begin{equation}\label{eq:optim1lip}
            f^{\tau}\in\arg\inf_{f\in\text{Lip}_1(\Xsub,\Reals)}\Expect_{(x,y)\sim\Prob_{XY}}[\Loss^{bce}_{\tau}(f(x),y)].
        \end{equation}
         Moreover, the \LipCl networks will not suffer of vanishing gradient of the loss (see Appendix~\ref{app:gradientpreserving}).
    \end{restatable}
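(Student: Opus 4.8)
The plan is to establish existence of a minimizer via the direct method in the calculus of variations, exploiting the compactness that the $1$-Lipschitz constraint provides through the Arzelà--Ascoli theorem. First I would observe that the functional
\[
\Phi(f) \defeq \Expect_{(x,y)\sim\Prob_{XY}}[\Loss^{bce}_{\tau}(f(x),y)]
\]
is bounded below by $0$, so the infimum $m \defeq \inf_{f\in\Lip} \Phi(f)$ is a finite nonnegative real number. I would then take a minimizing sequence $(f_k)_{k\in\Natural}$ in $\text{Lip}_1(\Xsub,\Reals)$ with $\Phi(f_k) \to m$. The key subtlety here is that $\text{Lip}_1(\Xsub,\Reals)$ is not compact on its own because functions can be shifted by an arbitrary additive constant; I would need to \emph{normalize} the sequence, for instance by fixing the value at a reference point $x_0\in\Xsub$ (say replacing $f_k$ by $f_k - f_k(x_0)$). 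The point is that this shift changes $\Phi$ in a controlled way, and an unbounded shift would drive the loss to $+\infty$, contradicting minimization, so the normalized sequence stays uniformly bounded.

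Once the sequence is uniformly bounded and equi-Lipschitz (hence equicontinuous), I would apply Arzelà--Ascoli on the compact set $\Xsub$ to extract a subsequence converging uniformly to some limit $f^\tau$. Uniform limits of $1$-Lipschitz functions are themselves $1$-Lipschitz, so $f^\tau \in \text{Lip}_1(\Xsub,\Reals)$, which secures admissibility of the candidate minimizer. The final step is to verify $\Phi(f^\tau) = m$: since $\Loss^{bce}_\tau(\cdot,y)$ is continuous in its first argument, uniform convergence $f_k \to f^\tau$ gives pointwise (indeed uniform) convergence of the integrands $\Loss^{bce}_\tau(f_k(x),y) \to \Loss^{bce}_\tau(f^\tau(x),y)$; combined with a uniform bound on the integrand over the compact $\Xsub\times\Labels$, the bounded (or dominated) convergence theorem yields $\Phi(f_k)\to\Phi(f^\tau)$, whence $\Phi(f^\tau)=m$ and the infimum is attained.

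I expect the main obstacle to be the normalization step, namely handling the translation-invariance-breaking behavior of the loss to guarantee the minimizing sequence stays uniformly bounded after re-centering. One must argue carefully that letting $f_k(x_0)\to\pm\infty$ forces $\Phi(f_k)\to+\infty$: because $\Xsub$ is compact and the functions are $1$-Lipschitz, a large value at $x_0$ propagates a lower bound on $|f_k|$ across all of $\Xsub$ (up to the diameter of $\Xsub$), so on a set of positive $\Prob_{XY}$-measure the margin $y\tau f_k(x)$ is driven to $-\infty$ on the misclassified mass, making $-\log\sigma(y\tau f_k(x))$ blow up. Quantifying this requires only that $\Prob_{XY}$ assigns positive mass to both labels in a way incompatible with an arbitrarily large constant offset; the compactness of $\Xsub$ together with the finite diameter makes the argument clean. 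The remaining ingredients (lower semicontinuity via continuity of the loss and bounded convergence) are standard and pose no real difficulty.
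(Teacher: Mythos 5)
Your proposal is correct and takes essentially the same route as the paper's proof: a minimizing sequence, a uniform boundedness argument showing that unbounded values combined with the $1$-Lipschitz constraint on the compact $\Xsub$ force the loss to blow up on one class's mass, then Arzelà--Ascoli to extract a uniform limit in $\text{Lip}_1(\Xsub,\Reals)$, and continuity of the loss to conclude. Your handling of the limit step via bounded convergence, and your explicit remark that both labels must carry positive mass, are in fact slightly more careful than the paper's write-up of the same argument.
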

    
    Machine learning practitioners are mostly interested in maximizing accuracy.
    However, the minimizer of BCE is not necessarily a minimizer of the error (see Figure~\ref{ex:nosameminimum}). Yet, BCE is notoriously a differentiable proxy of the error $E(\sign\circ f)$, and as $\tau\rightarrow\infty$ we get asymptotically closer to maximum empirical accuracy. Bigger value for $\tau$ might ultimately lead to overfitting, playing the same role as the Lipschitz constant $L$ (see Figure~\ref{ex:nosameminimum}).  
    
      
     \textbf{The implicit parameter $\tau=1$ of the loss is partially responsible of the poor accuracy of \LipCl networks in literature}, and not by any means the hypothesis space \LipCl itself. This can be observed in practice : when temperature $\tau$  (resp. margin $m$) of cross-entropy (resp. hinge loss) is correctly adjusted a small \LipCl CNN can reach a competitive \textbf{88.2\% validation accuracy on the CIFAR-10 dataset} (results synthetized and discussed in Figure~\ref{fig:pareto_hkr}) \textit{without} residual connections, batch normalization or dropout. Conversely, \LipInf networks are roughly equivalent to learning a \LipCl network with $\tau\rightarrow\infty$: without regularization or data augmentation, such a network can always reach 100\% train accuracy without generalization guarantees. 
      
\section{1-Lipschitz classifiers are certifiably robust}\label{sec:robustness}
    \paragraph{Is there a trade-off between accuracy and robustness?}

    Although the existence of a trade-off between accuracy and robustness is commonly admitted, some works argue that ``Robustness is not inherently at odds with accuracy''\cite{yang2020closer}. We propose a unified consideration by stating that for a given train accuracy, robustness can be maximized up to a certain point, but allowing a lower train accuracy helps achieving a higher robustness. Finally one must keep in mind that this trade-off lives in the shade of generalization (see Section~\ref{sec:generalize}).

    \begin{figure}
        \centering
        \includegraphics[width=0.8\textwidth]{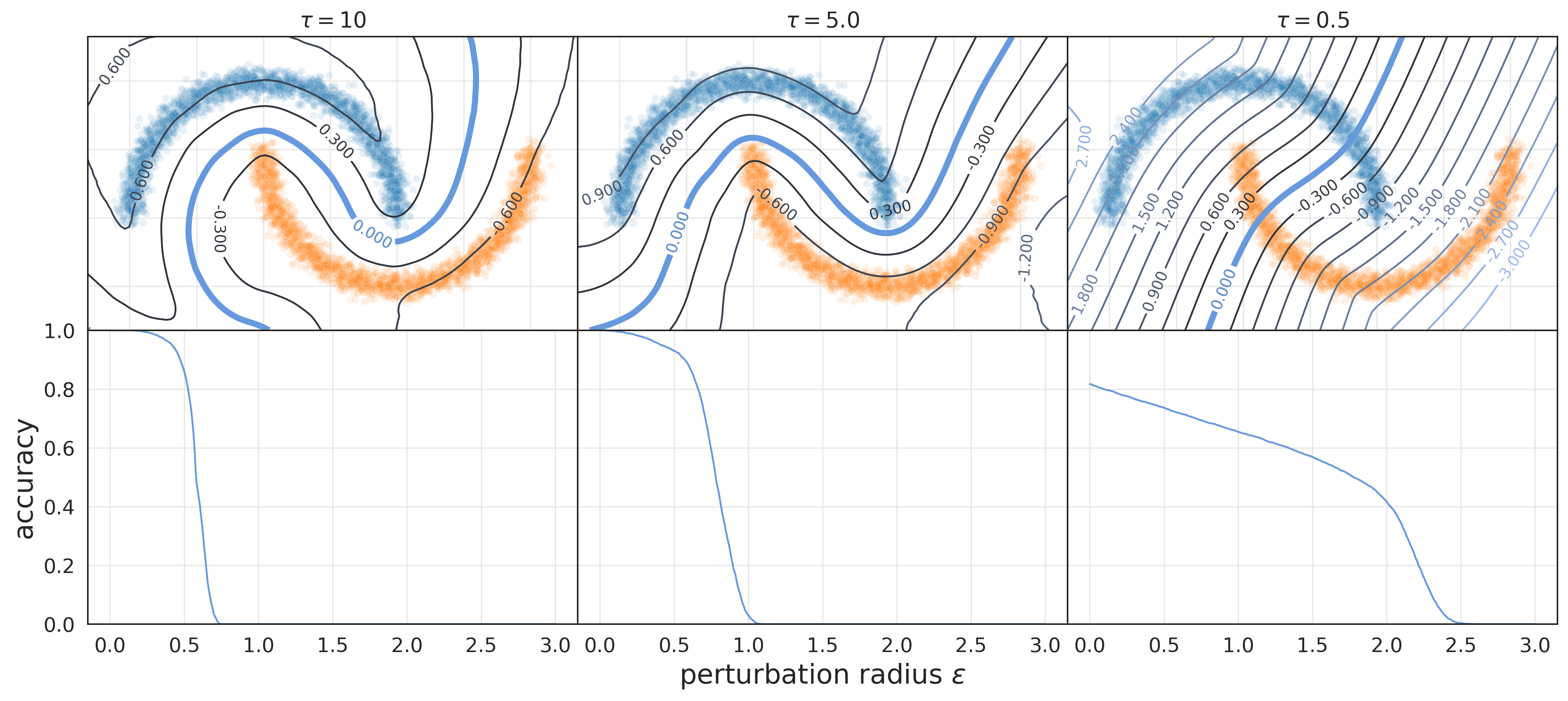}
        \caption{\textbf{Accuracy-robustness tradeoff:} Each network is optimal with respect to a certain criterion. The leftmost network is the most accurate at robustness radius $\epsilon\leq 0.3$, the rightmost maximizes the MCR at the cost of low clean accuracy. The center network corresponds to a compromise.
        }
        \label{fig:twomoonstradeoff}
    \end{figure}

\subsection{Improving the robustness of the maximally accurate classifier}
    \label{max_acc}
      
    The Signed Distance Function~\cite{rousson2002shape} (SDF) (see Definition~\ref{def:nlctwoclasses} in Appendix~\ref{app:classificationpower}) associated to the frontier $\partial$ of Bayes classifier $b$ is the 1-Lipschitz function that provides the largest certificates among the classifiers of maximum accuracy. Moreover, those certificates are exactly equal to the distance of adversarial samples. Iterative gradient based attacks (see~\cite{chakraborty2018adversarial} and references therein) can succeed in one step: far from being a weakness, this may improve the interpretability of the model~\cite{dong2017towards,tomsett2018failure,ross2018improving}. 
    
      
    \begin{restatable}{corollary}{thightboundcertificate}\label{thm:thightboundcertificate}
        For the SDF($b$), the bound of Property~\ref{thm:certificates} is tight: $\epsilon=|f(x)|$. In particular $\delta=-f(x)\nabla_xf(x)$ is guaranteed to be an adversarial attack. The risk is the smallest possible. There is no classifier with the same risk and better certificates. Said otherwise the SDF($b$) is the solution to:
        \begin{equation}
            \begin{aligned}
                \max_{f\in\Lip}\min_{x\in\Xsub}&\min_{\substack{\delta\in\Reals^n\\ \sign{(f(x+\delta))}\neq \sign{(f(x))}}} \|\delta\|,\\
                \textrm{under the constraint } &  f\in\argmin_{g\in\Lip}E(\sign\circ g).\\
            \end{aligned}
        \end{equation}
    \end{restatable}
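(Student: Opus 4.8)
The plan is to dispatch the three assertions separately, using Property~\ref{thm:certificates} for one direction of the tightness claim and the distance structure of the signed distance function (together with $\|\nabla_x f\|=1$ a.e. from Proposition~\ref{thm:lippowerbinary}) for everything else. Write $f=\mathrm{SDF}(b)$ and let $\partial=f^{-1}(\{0\})$ be the frontier of $b$. By definition of the signed distance function one has $|f(x)|=\mathrm{dist}(x,\partial)$ for every $x$ and $\sign\circ f=b$, so $f\in\Lip$ and $f$ realises the minimal risk $\min_{g}E(\sign\circ g)$ (this is exactly the displayed constraint, and it is the ``smallest possible'' risk). For the tightness $\epsilon=|f(x)|$, Property~\ref{thm:certificates} already gives $\epsilon\geq|f(x)|$. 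For the reverse inequality I would take a nearest boundary point $x^{\star}\in\partial$ with $\|x-x^{\star}\|=\mathrm{dist}(x,\partial)=|f(x)|$ (it exists because $\partial$ is closed and one may intersect it with a large closed ball around $x$); since $f(x^{\star})=0$, an arbitrarily small push beyond $x^{\star}$ flips the sign, so $\epsilon\leq\|x-x^{\star}\|=|f(x)|$ and hence $\epsilon=|f(x)|$.

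For the explicit attack $\delta=-f(x)\nabla_x f(x)$, I would invoke the eikonal geometry of distance functions: off a Lebesgue-null set (the medial axis) the nearest point $x^{\star}$ is unique, $\nabla_x f(x)=(x-x^{\star})/\|x-x^{\star}\|$, and the segment from $x$ to $x^{\star}$ is a straight gradient line along which $f$ varies with unit slope. Parametrising $t\mapsto f\bigl(x-t\,\nabla_x f(x)\bigr)$ and using $\|\nabla_x f(x)\|=1$, the value decreases linearly, so at $t=f(x)$ we land on $f(x+\delta)=f(x)-f(x)=0$, i.e. $x+\delta=x^{\star}\in\partial$ (the same computation works for $f(x)<0$ since then $\delta$ points along $+\nabla_x f$). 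Thus $\delta$, or $(1+\eta)\delta$ with $\eta\downarrow0$, realises the sign change with $\|\delta\|=|f(x)|=\epsilon$, re-proving tightness constructively.

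For the optimality statement I would first translate the risk constraint: for $\epsilon$-separated classes the minimal-risk members of $\Lip$ are precisely those whose sign agrees with $b$ on the data, i.e. whose frontier $\partial_g$ separates $\supp{P}$ from $\supp{Q}$, and $f=\mathrm{SDF}(b)$ is one of them. The displayed objective $\min_{x}\min_{\delta}\|\delta\|$ is exactly the worst-case adversarial radius $\min_{x}\mathrm{dist}(x,\partial_g)$, which depends on $g$ only through its frontier. The argument then splits in two: (i) a max-margin step showing that, among all separating frontiers, the equidistant (medial) locus between the two supports maximises the worst-case margin, and that this locus is the zero set of $\mathrm{SDF}(b)$; and (ii) a certificate-domination step, observing that for any admissible $g$ sharing this optimal frontier $\partial$ one has $|g(x)|=|g(x)-g(x^{\star})|\leq\|x-x^{\star}\|=\mathrm{dist}(x,\partial)=|f(x)|$ by $1$-Lipschitzness and $g(x^{\star})=0$. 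Combining (i)--(ii) with the tightness of the first part yields that $\mathrm{SDF}(b)$ maximises the worst-case radius among minimal-risk classifiers and simultaneously carries the largest, and tight, certificates, so that no same-risk classifier has better certificates.

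The main obstacle is this last optimality step. Concretely one must (a) reconcile the two readings of the problem — whether the Bayes frontier $\partial$ is held fixed, in which case the content reduces to certificate domination (ii), or whether the boundary is itself optimised over the margin, in which case the max-margin characterisation (i) must single out the medial locus as the unique maximiser of the worst-case margin — and (b) justify that the $\min_{x}$ over $\Xsub$ is meaningful, i.e. effectively taken over the supports (where it is bounded below by the margin) rather than over points arbitrarily close to $\partial$ where it would vanish. Some care is also needed at non-differentiability points of $f$, where $\nabla_x f$ must be read as a measurable selection of nearest-point directions so that the explicit attack of the second part is stated for a.e. $x$.
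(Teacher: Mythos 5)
Your proposal is correct, and it does substantially more work than the paper itself, whose entire proof is three sentences asserting that the properties ``hold by construction'': the risk is minimal because $f$ is built from the optimal Bayes classifier, and the bound of Property~\ref{thm:certificates} is tight because $f(x)$ is the distance to the frontier while $\nabla_x f(x)$ gives the direction of the attack. Your first two parts are rigorous versions of exactly these assertions — the nearest-point argument for $\epsilon\le|f(x)|$ (with the $(1+\eta)\delta$ trick to handle $\sign(0)$) and the eikonal identity $\nabla_x f(x)=(x-x^{\star})/\|x-x^{\star}\|$ off the medial axis — so on that portion you and the paper take the same route, yours with the details filled in. The genuine divergence is the optimality claim: the paper offers no argument at all for ``no classifier with the same risk has better certificates,'' whereas your certificate-domination inequality $|g(x)|=|g(x)-g(x^{\star})|\le\|x-x^{\star}\|=|f(x)|$, valid for any $1$-Lipschitz $g$ vanishing on $\partial$, is the step that actually proves it under the reading the paper implicitly adopts, namely that the minimal-risk constraint pins the frontier to the Bayes frontier (your reading (a)). Your max-margin step (i) would only be needed under the alternative reading in which the frontier itself varies among risk minimizers (e.g.\ separated classes), and the two obstacles you flag are real, but they are defects of the statement rather than of your argument: as written, $\min_{x\in\Xsub}$ is degenerate (any $f$ changing sign inside $\Xsub$ has worst-case radius $0$ at points approaching its own frontier), and the minimal-risk frontier is non-unique when the supports are separated — issues the paper's ``by construction'' proof silently ignores. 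In short, there is no gap in your proposal relative to the paper; you have instead exposed the gaps in the paper's own treatment.
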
 
    
    The SDF($b$) cannot be explicitly constructed since it relies on the (unknown) optimal Bayes classifier. 
    
\subsection{Improving the accuracy of the maximally robust classifier}
    \label{max_rob}
    
    On the opposite side, we exhibit a family of classifiers with lower accuracy but with higher certifiable robustness. We insist that the quantity of interest is the \textit{certifiable robustness} $|f(x)|$ and not the \textit{true empirical robustness} $\epsilon$ (which can be higher). The former is computed exactly and freely, while the latter is a difficult problem for which only upper bounds returned by attacks are available. In the literature, the robustness is only evaluated on well classified examples. The certificate can be both interpreted as a form of ``confidence'' of the network, and as the minimal perturbations required to switch the class. Hence, we shall weight negatively this certificate for the examples that are misclassified since confidence in presence of errors is worse. For this reason, we propose in Definition~\ref{def:meancertifiable} a new metric called the Mean Certifiable Robustness (MCR).
      
    \begin{definition}[\textbf{Mean Certifiable Robustness -- MCR}]\label{def:meancertifiable}
        For any function $f:\Xsub\rightarrow\Reals\in\LipCl$ we define its weighted mean certifiable robustness $\mathcal{R}_{(P,y)}(f)$ on class $P$ with label $y$ as:  
        \begin{equation}
            \begin{aligned}
                \mathcal{R}_{(P,y)}(f)&\defeq\Expect_{x\sim P}[{\mathds{1}\{yf(x)>0\}|f(x)|}]+\Expect_{x\sim P}[{-\mathds{1}\{yf(x)<0\}|f(x)|}]=\Expect_{x\sim P}{yf(x)}.
            \end{aligned}
        \end{equation} 
    \end{definition}  
      
    We can readily see from the definition that the classifier with highest MCR for class $P$ is the constant classifier $f=y\times\infty$. The interest of this notion arises when we consider minimizing the loss function $\Loss^{W}(f(x),y)\defeq-yf(x)$, i.e when looking for classifier with the highest MCR.   
      
    \begin{restatable}{property}{mcriswass}{\normalfont\textbf{Wasserstein classifiers (i.e WGAN discriminators) are optimally robust}.}\label{thm:mcriswass}
        The minimum of $\Loss^{W}(f(x),y)$ over $P$ and $Q$ is the Wasserstein-1 distance~\cite{villani2008optimal} between $P$ and $Q$ according to the Kantorovich-Rubinstein duality:
        \begin{equation}\label{eq:wassdef}
            \max_{f\in\text{Lip}_1(\Xsub,\Reals)}\mathcal{R}_{(P,+1)}(f)+\mathcal{R}_{(Q,-1)}(f)=\min_{f\in\Lip}\Expect_{\Prob_{XY}}[\Loss^{W}(f(x),y)]=\Wasserstein_1(P,Q).
        \end{equation}
    \end{restatable}  
    
    Even though the minimizer of $\Loss_{W}(f(x),y)$ can have low accuracy, it has the highest MCR. Interestingly, the minimizer $f^{*}$ of equation~\ref{eq:wassdef} is invariant by translation: $f^{*}-T$ is also a minimizer for any $T\in\Reals$. When $T\rightarrow\infty$ (resp. $-\infty$) the classifier has $100\%$ recall on $Q$ (resp. $P$), and $0\%$ on $P$ (resp. $Q$). Does it always exist $T^{*}$ with $100\%$ accuracy overall? Sadly, even when the $P$ and $Q$ have disjoint support, the answer is no. We precise this empirical observation of~\cite{serrurier2020achieving} in Proposition~\ref{thm:weakwass}.
      
    \begin{restatable}{proposition}{weakwass}{\normalfont\textbf{WGAN discriminators are weak classifiers}.}\label{thm:weakwass}
        For every $\frac{1}{2}\geq\epsilon>0$ there exist distributions $P$ and $Q$ with disjoint supports in $\Reals$ such that for any optimum $f$ of equation~\ref{eq:wassdef}, the error of classifier $\sign\circ f$ is superior to $\frac{1}{2}-\epsilon$.
    \end{restatable}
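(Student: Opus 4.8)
The plan is to work entirely in dimension $n=1$ with balanced priors $\Prob(Y{=}{+}1)=\Prob(Y{=}{-}1)=\tfrac12$, and to reduce the statement to the one-dimensional structure of Kantorovich potentials. By Property~\ref{thm:mcriswass}, a function $f$ is an optimum of Equation~\ref{eq:wassdef} iff it is a Kantorovich potential, i.e.\ a maximizer of $\Expect_P[f]-\Expect_Q[f]$ over $\text{Lip}_1(\Xsub,\Reals)$. In one dimension, writing $D\defeq F_Q-F_P$ for the difference of the cumulative distribution functions and integrating by parts (boundary terms vanish by compact support), $\Expect_P[f]-\Expect_Q[f]=\int f'(t)D(t)\,\der t$, so the maximizers are exactly the $1$-Lipschitz $f$ with $f'(t)=\sign D(t)$ wherever $D(t)\neq 0$, the slope being free on $\{D=0\}$ and $f$ determined only up to an additive constant. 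The crucial consequence is that $\sign\circ f$ is a threshold rule, and since the additive constant is free, to prove the claim I must show that \emph{every} threshold misclassifies at least a fraction $\tfrac12-\epsilon$ of the mass.

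First I would remove the slope ambiguity by choosing $P,Q$ so that $D>0$ strictly on the whole convex hull of $\supp P\cup\supp Q$ (and $D=0$ outside it); then $f'\equiv 1$ there is forced, so every optimum equals $f(t)=t+c$ on the supports and $\sign\circ f$ is exactly a threshold on the position $t$. Concretely I would take, for an even integer $N$, interleaved atoms $Q_1<P_1<Q_2<P_2<\dots<Q_N<P_N$ with $P$ uniform ($p_j=1/N$) and $Q$ given a ``ballot'' profile $q_j=\tfrac1N+\delta$ for $j\le N/2$ and $q_j=\tfrac1N-\delta$ for $j>N/2$, with $0<\delta<1/N$. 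This keeps $\supp P\cap\supp Q=\emptyset$, makes the partial sums $\sum_{j\le k}(q_j-p_j)=\min(k,N-k)\,\delta$ strictly positive for $1\le k\le N-1$ and zero at $k=N$, hence $D>0$ on every interior gap and $D=0$ at the ends; moreover $\sup_t D(t)=\tfrac{N}{2}\delta$ is small.

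Then I would compute the error as a function of the threshold. Because $f$ is increasing, the only relevant thresholds fall between consecutive atoms, and the minimal error is attained for a threshold between $Q_k$ and $P_k$, giving $E_k=\tfrac12\big[\tfrac{k-1}{N}+\sum_{j>k}q_j\big]$. A short computation shows $E_k=\tfrac12\big(1-\tfrac1N-\min(k,N-k)\,\delta\big)$, minimized at $k=N/2$, so the best threshold still has error $\tfrac12-\tfrac{1}{2N}-\tfrac{N\delta}{4}$. Choosing $N>1/\epsilon$ and then $\delta$ small enough that $\tfrac{1}{2N}+\tfrac{N\delta}{4}<\epsilon$ (compatible with $\delta<1/N$ since $\epsilon\le\tfrac12$) yields error $>\tfrac12-\epsilon$ for every optimum.

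I expect the main obstacle to be exactly the non-uniqueness of the optimum: translation invariance already forces us to beat all thresholds, and a priori flat stretches of $D$ would give further freedom — locally re-sloping $f$ on $\{D=0\}$ to recover a good classifier, as happens for naively interleaved masses. The construction is engineered to kill this freedom — the ballot condition makes $D$ strictly positive across the whole support, pinning $f$ to a single increasing affine shape up to translation — and the delicate point is to satisfy this strict-positivity-with-disjoint-supports requirement while keeping $\sup_t D=\tfrac{N}{2}\delta$ small enough that the two classes remain essentially indistinguishable under the induced ordering.
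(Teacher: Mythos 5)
Your proof is correct, and it reaches the conclusion by a genuinely different route than the paper. The paper also uses interleaved Dirac masses on $\Reals$, but with equal weights $1/n$ and asymmetric spacings ($P$-atoms at $4(i-1)$, $Q$-atoms at $4i-1$): it identifies the optimal transport matching (each $P$-atom pairs with the $Q$-atom at distance $3$ to its immediate right), invokes complementary slackness so that every optimal potential must satisfy $f(4i-1)=f(4(i-1))\pm 3$, deduces that $f$ is strictly monotone on $\supp P$ and on $\supp Q$, and then counts how many of the $2n$ atoms a threshold can classify correctly. Your argument replaces the matching step by the complete one-dimensional characterization of Kantorovich potentials through the CDF difference $D=F_Q-F_P$ (namely $f'=\sign D$ a.e.\ on $\{D\neq 0\}$), and your ballot-type weighting is engineered so that $D>0$ on the whole convex hull of the supports, which forces \emph{every} optimum to equal $t\mapsto t+c$ there. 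This buys complete rigidity: the ``for any optimum'' quantifier becomes trivial and the threshold-error computation is exact. That is precisely where the paper is slightly loose: in its construction $D=0$ on the length-$1$ gaps, so optima retain slope freedom there; exploiting it, two consecutive pairs (not just one) can simultaneously straddle the threshold, so the maximal number of correctly classified atoms is $n+2$ rather than the claimed $n+1$, and the guaranteed error is $\frac{1}{2}-\frac{1}{n}$ (requiring $n\geq\lceil 1/\epsilon\rceil$) rather than $\frac{1}{2}-\frac{1}{2n}$ with $n=\lceil 1/(2\epsilon)\rceil$ --- a harmless constant-factor slip that your construction avoids altogether, and exactly the ``flat stretches of $D$'' issue you flagged. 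In exchange, the paper's argument is more elementary: it needs no integration by parts or CDF machinery, only the triangle-type bounds coming from $1$-Lipschitzness.
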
  
    Note that this minimum also invariant by dilatation: any \textit{finite} upper bound $L$ can be chosen for Equation~\ref{eq:wassdef} (see Appendix~\ref{sec:krunit}).
      

    \subsection{Controlling the accuracy/robustness tradeoff with loss parameters}
    
    Now that the extrema of the accuracy robustness tradeoff were characterized in \ref{max_acc} and \ref{max_rob}, is yet to be answered if it is possible to control this tradeoff using conventional loss (and its parameters, as introduced in \ref{sec:lipclminexists}).
    
    Interestingly, observe that $\Loss^{bce}_{\tau}(f(x),y)=\log{2}-\frac{y\tau f(x)}{2}+\mathcal{O}(\tau^2 f^2(x))$ so when $\tau\rightarrow 0$ we get:
        $$\min_{f\in\text{Lip}_1(\Xsub,\Reals)}\frac{4}{\tau}\left(\Expect_{(x,y)\sim P_{XY}}[\Loss^{bce}_{\tau}(f(x),y)]-\log{2}\right)=-\Wasserstein_1(P,Q).$$
    In the limit of small temperatures, the BCE minimizer essentially behaves like the classifier of the highest MCR (see Figure~\ref{fig:pareto_hkr} and Appendix~\ref{ap:bceot}). Similarly, the HKR loss $\Loss^{hkr}$ introduced in~\cite{serrurier2020achieving} for \LipCl training allows fine grained control of the accuracy-robustness tradeoff: 
    \begin{equation}~\label{eq:hkrdef}
        \Loss^{hkr}_{m,\alpha}(f(x),y)=\Loss^W(f(x),y)+\alpha\Loss^H_{m}(f(x),y)=-yf(x)+\alpha\max{(0, m - yf(x))}.
    \end{equation}
    
    We recover $\Wasserstein_1$ behavior for $\alpha=0$, and hinge $\Loss^H_{m}$ behavior for $\alpha\rightarrow\infty$, in a fashion that reminds the role of $\tau$ for $\Loss^{bce}$.
    
    A key takeaway is that BCE, HKR and hinge loss have parameters that allow to control the accuracy robustness tradeoff, reaching on one side the maximum robustness of MCR, and the accuracy of unconstrained networks on the other. Empirically this tradeoff is observed as a Pareto front with accuracy on one axis, and robustness on the other. Figure \ref{fig:pareto_hkr} shows this on the CIFAR10 dataset using the $\epsilon$ robustness as robustness measures (other robustness measure yield similar observations, see fig \ref{fig:pareto-avg} and \ref{fig:pareto-MCR}).
    
    In conclusion, these last two sections demonstrate that restraining networks to be in \LipCl does not impact the classification capabilities while providing certificates of robustness; however, for these networks the loss parameters play an important role in this trade-off. 

    \begin{figure}
        \centering
        \includegraphics[width=0.95\textwidth]{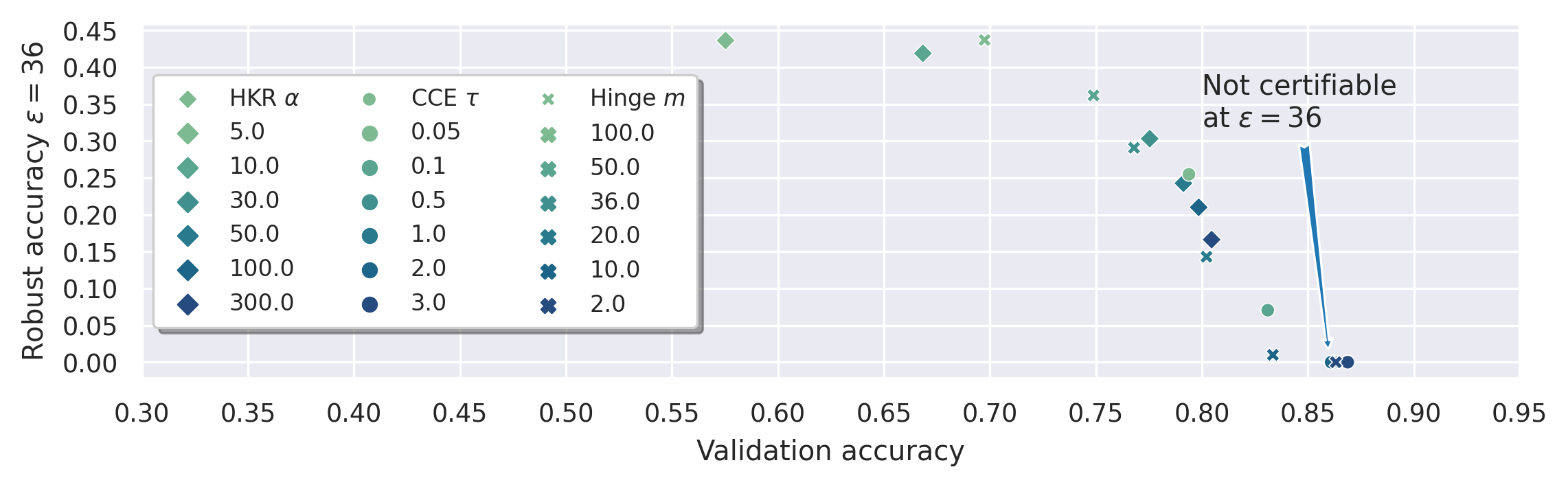}
        \caption{\textbf{Accuracy-Robustness trade-off on CIFAR10 with Hinge, HKR and Categorical Cross-Entropy (CCE) hyper-parameters.} Overall, for a given network architecture, a Pareto front appears between clean accuracy and robust accuracy. We move along it by tuning the parameters of each loss. We trained small \LipCl \textit{CNNs} (0.4M params) with basic data augmentation (see appendix~\ref{ap:pareto} for detailed experimental setting).}
        \label{fig:pareto_hkr}
    \end{figure}

\vspace{-2mm}
\section{1-Lipschitz classifiers have generalization guarantees}\label{sec:generalize}
    In this section, we explore the statistical and optimization properties of \LipCl networks, and we prove the assumption of~\cite{gouk2021regularisation} that ``adjusting the Lipschitz constant of a feed-forward neural network controls how well the model will generalise to new data''. 
    \vspace{-1mm}
    \subsection{Consistency of \LipCl class}\label{sec:consistency}
  
        \LipCl class enjoys another remarkable property since it is a Glivenko-Cantelli class: minimizers of Lipschitz losses are consistent estimators. In other words, as the size of the training set increases, the training loss becomes a proxy for the test loss: \LipCl neural networks will not overfit in the limit of (very) large sample sizes.  
          
        \begin{restatable}{proposition}{consistenceestimator}{\normalfont\textbf{Train Loss is a proxy of Test Loss}.}\label{thm:consistenceestimator}
            Let $\Prob_{XY}$ a probability measure on $\Xsub\times\Labels$ where $\Xsub\subset\Reals^n$ is a bounded set. Let $(x_i,y_i)_{1\leq i\leq p}$ be a sample of $p$ iid random variables with law $\Prob_{XY}$. Let $\Loss$ be a Lipschitz loss function over $\Reals\times\Labels$. We define:
            \begin{equation}
                \Empirical_p(f)\defeq\frac{1}{p}\sum_{i=1}^p\Loss(f(x_i),y_i)\text{ and }\Empirical_{\infty}(f)\defeq\Expect_{(x,y)\sim \Prob_{XY}}[\Loss(f(x),y)].
            \end{equation}
            Then the empirical loss $\Empirical_p(f)$ converges to the test loss $\Empirical_{\infty}(f)$ (taking the limit $p\rightarrow\infty$):
            \begin{equation}
                \min_{f\in\text{Lip}_L(\Xsub,\Reals)}\Empirical_p(f)\xrightarrow{a.s}\min_{f\in\text{Lip}_L(\Xsub,\Reals)}\Empirical_{\infty}(f).
            \end{equation}
        \end{restatable}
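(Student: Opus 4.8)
The plan is to derive the convergence of the minima from a uniform law of large numbers over the whole hypothesis class, that is, from the Glivenko--Cantelli property
\begin{equation*}
\sup_{f\in\text{Lip}_L(\Xsub,\Reals)}|\Empirical_p(f)-\Empirical_\infty(f)|\xrightarrow{a.s}0.
\end{equation*}
Granting this, the conclusion follows from a short sandwich argument. Let $f^\infty$ minimize $\Empirical_\infty$ and $f^p$ minimize $\Empirical_p$. On one side $\min_f\Empirical_p(f)\leq\Empirical_p(f^\infty)\leq\Empirical_\infty(f^\infty)+\sup_f|\Empirical_p(f)-\Empirical_\infty(f)|$, and symmetrically $\min_f\Empirical_\infty(f)\leq\Empirical_\infty(f^p)\leq\Empirical_p(f^p)+\sup_f|\Empirical_p(f)-\Empirical_\infty(f)|$. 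Together these give $|\min_f\Empirical_p(f)-\min_f\Empirical_\infty(f)|\leq\sup_f|\Empirical_p(f)-\Empirical_\infty(f)|$, so the uniform bound transfers verbatim to the minima and the whole problem reduces to the uniform convergence.

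For that uniform convergence I would first replace $\Xsub$ by its closure $\closure{\Xsub}$, which is compact because $\Xsub$ is bounded, and recall that every $L$-Lipschitz $f$ extends uniquely to $\closure{\Xsub}$ with the same constant. Two structural facts then drive the argument: first, the loss being $K$-Lipschitz in its first argument makes $f\mapsto\Loss(f(\cdot),\cdot)$ a $K$-Lipschitz map of $(C(\closure{\Xsub}),\|\cdot\|_\infty)$, so both functionals $\Empirical_p$ and $\Empirical_\infty$ are $K$-Lipschitz in the uniform norm; second, by Arzel\`a--Ascoli the $L$-Lipschitz functions on $\closure{\Xsub}$ that are uniformly bounded form a totally bounded subset of $\|\cdot\|_\infty$. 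Given a finite $\eta$-net $\{f_1,\dots,f_N\}$ of this class, I would apply the strong law of large numbers to each fixed $f_j$, intersect the finitely many full-measure events, and use the $K$-Lipschitz continuity of both functionals to bound $|\Empirical_p(f)-\Empirical_\infty(f)|$ for an arbitrary $f$ by $2K\eta$ plus the error at its nearest net point. Taking $\eta=1/k$ and intersecting over $k$ (still a full-measure event) then forces $\limsup_p\sup_f|\Empirical_p(f)-\Empirical_\infty(f)|\leq 2K/k$ for every $k$, hence $=0$ almost surely.

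The main obstacle is that $\text{Lip}_L(\Xsub,\Reals)$ is \emph{not} compact: it is invariant under adding constants, so Arzel\`a--Ascoli does not apply as stated and, worse, for losses like the hinge the uniform gap genuinely diverges along the constant direction (for fixed $p$, sending $f\to f+c$ with $c\to\pm\infty$ amplifies the sampling error of a class frequency without bound). Fixing the value of $f$ at one reference point $x_0$ controls only the non-constant part, since $\|f-f(x_0)\|_\infty\leq L\,\diam{\closure{\Xsub}}$, so the genuine difficulty is the additive constant. I would remove it by reducing the minimization to a fixed sup-norm ball $\{\|f\|_\infty\leq M\}$: using the coercivity of the loss in the constant direction (the same attainment mechanism already invoked for BCE in Proposition~\ref{thm:minimizerattained}), one shows that both $\min_f\Empirical_\infty$ and, almost surely for all large $p$, $\min_f\Empirical_p$ are attained inside this ball, so the infima over $\text{Lip}_L(\Xsub,\Reals)$ and over the ball coincide. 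Restricting to the ball restores compactness, makes the net finite, and lets both the uniform-convergence step and the sandwich go through; carefully justifying this reduction (and hence pinning down the precise class of admissible losses for which the statement holds) is the delicate point, the rest being routine.
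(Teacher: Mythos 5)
Your proposal is correct and is essentially the paper's own proof: the same sandwich inequality $|\min_f\Empirical_p(f)-\min_f\Empirical_{\infty}(f)|\leq\sup_f|\Empirical_p(f)-\Empirical_{\infty}(f)|$, the same reduction to a uniformly bounded subclass (the paper restricts to $\|f\|_{\infty}\leq 2L\diam{\Xsub}$) justified by the attainment argument of Proposition~\ref{thm:minimizerattained}, and the same conclusion via a uniform law of large numbers over that restricted class. The only divergence is that you establish the Glivenko--Cantelli property by hand (Arzel\`a--Ascoli compactness, finite $\eta$-nets, strong law at net points) where the paper simply cites finite bracketing entropy of bounded Lipschitz classes from empirical process theory; your closing caveat about pinning down which losses legitimize the ball reduction is a gap the paper shares, since its restriction step likewise leans on the coercivity mechanism inside the proof of Proposition~\ref{thm:minimizerattained}.
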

          
        It is another flavor of the bias-variance trade-off in learning. Thanks to Corollary~\ref{thm:zeroerror} we know the \LipCl class does not suffer of bias, while the generalization gap (i.e the variance) can be made as small as we want by increasing the size of the training set (see Figure~\ref{fig:consistency}). The number of examples required to close the generalization gap is dataset specific in general, however it seems that with low $\tau$ fewer examples are required. This result may seem obvious, but we emphasize \textbf{this property is not shared by \LipInf networks} (see Proposition~\ref{thm:lipinfoverfitter} in Appendix~\ref{app:lipinfnotrobust}). Nonetheless, most practitioners take for granted that bigger training sets ensure generalization for \LipInf networks.       
          
        \begin{figure}
        \centering
        \includegraphics[width=0.9\textwidth]{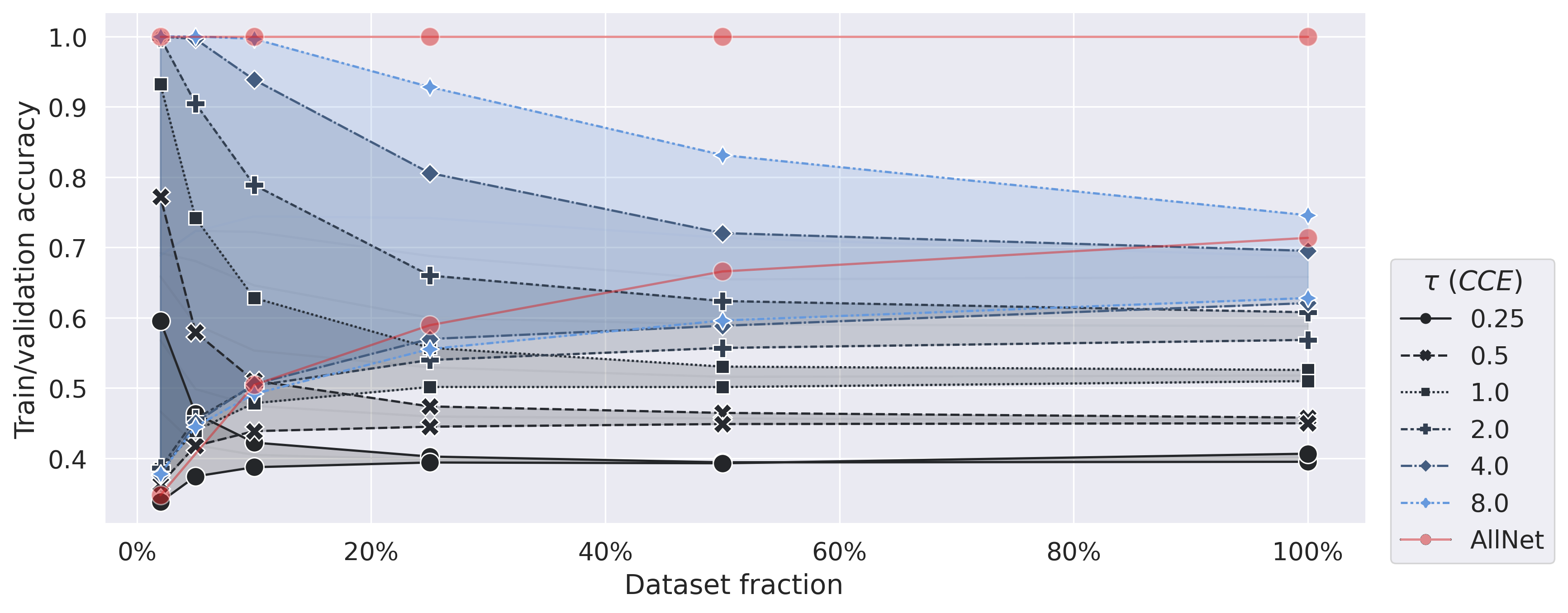}
        \caption{\textbf{Link between \LipCl and generalization gap, dataset size and cross-entropy temperature.} We train a CNN on different fractions of the CIFAR10 train set (2\%, 5\%, 10\%, 25\%, 50\% and 100\% on $x$-axis) with different values of temperature $\tau$ (highlighted by different colors). Train (resp. validation) accuracy forms the upper (resp. lower) bound of each envelope. As $\tau$ increases, more samples are required to reduce the generalization gap. Conversely, training a \LipCl network with small $\tau$ is equivalent to training a Lipschitz network with small $L$: the network generalizes well but the accuracy reaches a plateau (under-fitting). The \LipInf network (in red) severely overfit: the generalization gap is large and validation accuracy corresponds to the limit that would reach a \LipCl as $\tau$ increases. See appendix \ref{ap:consistency} for detailed experimental setting.}
        \label{fig:consistency}
        \end{figure}
    
    \subsection{Understanding why unconstrained networks are prone to overfitting}\label{sec:lipinfnotrobust}

        Surprisingly, on $\LipInf$networks, minimization of BCE leads to uncontrolled growth of Lipschitz constant and saturation of the predicted probabilities. This is an impediment to generalization results.     
          
        \begin{restatable}{proposition}{saturatedlimit}{\normalfont\textbf{Optimizing BCE over \LipInf leads to divergence}.}\label{thm:saturatedlimit}
            Let $f_t$ be a sequence of neural networks, that minimizes the BCE over a non-trivial training set (at least two different examples with different labels) of size $p$, i.e assume that:
            \begin{equation}
                \lim_{t\rightarrow\infty}\frac{1}{p}\sum_{i=1}^p\Loss_{\tau}(f_t(x_i),y_i)=0.
            \end{equation}
            Let $L_t$ be the Lipschitz constant of $f_t$. Then $\lim_{t\rightarrow\infty}L_t=+\infty$. There is at least one weight matrix $W$ such that $\lim_{t\rightarrow\infty}\|W_t\|=+\infty$. Furthermore, the predicted probabilities are saturated:
            \begin{equation}
                \lim_{t\rightarrow\infty}\sigma{(f_t(x_i))}\in\{0,1\}.
            \end{equation}
        \end{restatable}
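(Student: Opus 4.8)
The plan is to read off pointwise consequences of the vanishing average loss at the finitely many training points, and then propagate them first to the outputs (saturation), next to the Lipschitz constant, and finally to the weights.

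First I would use non-negativity of the loss together with the fixed finite sample size $p$. Since $\Loss_{\tau}(f(x),y)=-\log\sigma(y\tau f(x))\ge 0$ and $\frac{1}{p}\sum_{i=1}^p\Loss_{\tau}(f_t(x_i),y_i)\to 0$, the bound $0\le\Loss_{\tau}(f_t(x_i),y_i)\le\sum_{j=1}^p\Loss_{\tau}(f_t(x_j),y_j)\to 0$ forces each individual term to vanish. Because $z\mapsto-\log\sigma(z)$ is continuous, strictly decreasing, and tends to $0$ only as $z\to+\infty$, this is equivalent to $y_i\tau f_t(x_i)\to+\infty$ for every $i$; with $\tau>0$ fixed this gives $y_if_t(x_i)\to+\infty$. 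The saturation claim is then immediate: for $y_i=+1$ we get $f_t(x_i)\to+\infty$ and $\sigma(f_t(x_i))\to 1$, while for $y_i=-1$ we get $f_t(x_i)\to-\infty$ and $\sigma(f_t(x_i))\to 0$, so in all cases $\lim_t\sigma(f_t(x_i))\in\{0,1\}$.

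For the divergence of the Lipschitz constant I would invoke the non-triviality hypothesis to select indices $i,j$ with $x_i\neq x_j$, $y_i=+1$ and $y_j=-1$. Then $f_t(x_i)-f_t(x_j)\to+\infty$, whereas the definition of the Lipschitz constant gives $|f_t(x_i)-f_t(x_j)|\le L_t\|x_i-x_j\|$ with $\|x_i-x_j\|>0$ a fixed constant. Dividing, $L_t\ge|f_t(x_i)-f_t(x_j)|/\|x_i-x_j\|\to+\infty$. Observe that if two examples shared the same input with opposite labels the loss could not tend to $0$, which is exactly why the hypothesis asks for two \emph{distinct} examples with different labels.

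For the weights I would use the composition bound recalled in the preliminaries, $\text{Lip}(f_t)\le\text{Lip}(\sigma)^d\prod_{k=1}^d\|W_{k,t}\|_2$. Since $L_t\to+\infty$ while the depth $d$ and $\text{Lip}(\sigma)$ are fixed, the product $\prod_{k=1}^d\|W_{k,t}\|_2$ must diverge, and being a product of $d$ non-negative sequences at least one factor $\|W_{k,t}\|_2$ must be unbounded. The main obstacle is precisely this last point: the product blowing up only guarantees that some factor has $\limsup=+\infty$ rather than a genuine limit for a single fixed matrix. I would resolve this by passing to a subsequence and using the pigeonhole principle over the finitely many layers, so that along that subsequence one matrix satisfies $\|W_{k,t}\|_2\to+\infty$; equivalently one states the conclusion as the existence of a weight matrix whose spectral norm is unbounded. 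All remaining steps are elementary monotonicity arguments and a single use of the Lipschitz inequality, requiring no compactness or approximation machinery.
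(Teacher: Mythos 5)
Your proof is correct and follows essentially the same route as the paper's: non-negativity forces each individual loss term to vanish, the logits of two oppositely-labeled distinct points then diverge in opposite directions, and the Lipschitz inequality applied to that pair gives $L_t\rightarrow+\infty$. Your write-up is in fact more complete than the paper's, which stops after the $L_t$ claim: you explicitly derive the saturation statement and prove the weight-matrix claim via the composition bound $\text{Lip}(f_t)\leq\text{Lip}(\sigma)^d\prod_{k=1}^d\|W_{k,t}\|_2$, correctly observing that this argument only yields unboundedness ($\limsup=+\infty$, or a limit along a subsequence) of some layer's norm rather than a genuine limit for a single fixed matrix, which is indeed a small imprecision in the proposition as stated.
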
   
          
        This issue is especially important since Lipschitz constant and adversarial vulnerabilities are related~\cite{nar2019cross}. The predicted probability $\sigma(f(x))$ will either be $0$ or $1$ (regardless of the train set), which do not carry any useful information on the true confidence of the classifier
        \vspace{-1mm}
        \begin{figure}[ht]
        \begin{minipage}{0.55\linewidth}
        \begin{example}\label{ex:gotcha}
            Consider a classification task on $\Reals$ with linearly separable inputs $\{-1, 1\}$ and labels $\{-1, 1\}$. We use an affine model $f(x)=Wx+b$ for the logits (with $W\in\Reals$ and $b\in\Reals$) (one-layer neural network). It exists $\bar{W},\bar{b}$ such that $f$ achieves $100\%$ accuracy. However, as noticed in \cite{bishop2006pattern}~(Section~4.3.2) the BCE loss \textit{will not} be zero. The minimization occurs only with the diverging sequence of parameters $(\lambda\bar{W},\lambda\bar{b})$ as $\lambda\rightarrow\infty$. It turns out the infimum is not a minimum!
        \end{example} 
        \end{minipage}
        \begin{minipage}{0.45\linewidth}
        \centering
            \includegraphics[width=0.9\linewidth]{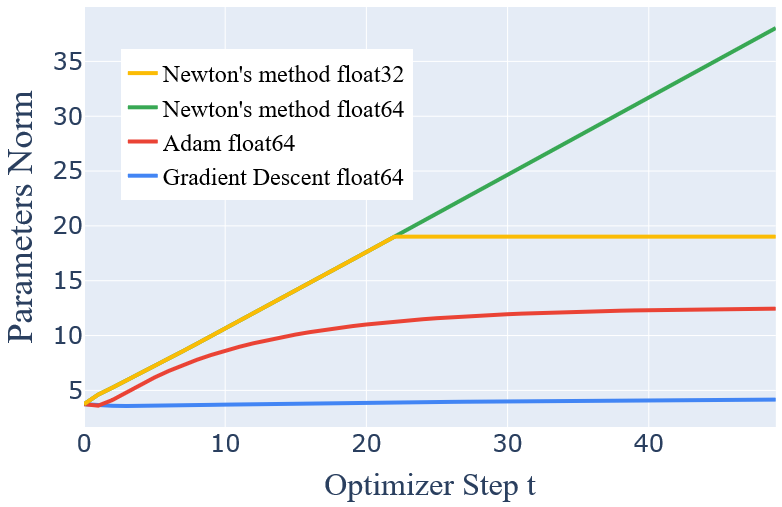}
            \caption{}
        \end{minipage}
        \end{figure}
           
        Even on toy example~\ref{ex:gotcha} with a trivial model, the minimization problem is ill-defined. Without weight regularization, the minimizer can not be attained. This is compliant with the high Lipschitz constant of \LipInf networks that have been observed in practice~\cite{scaman2018lipschitz}, and is confirmed by our experiment on MNIST with a ConvNet (see Figure~\ref{fig:divergence}). The spectral norm of the weights is multiplied by 5 over the course of 25 epochs, whereas the validation accuracy remains the same (around 99\%).
          
        Furthermore, there is an issue of vanishing gradients with BCE : first order methods struggle to saturate the logits of \LipInf networks, whereas second order methods in \textit{float64} diverge as expected. The poor properties of the optimizer, and the rounding errors in 32 bits floating point arithmetic, have greatly contributed to the caveat of BCE minimization remaining mostly unnoticed by the community.  
        
    \subsection{Lipschitz classifiers are PAC learnable}
    
        Hinge loss $\Loss^H_m$ and HKR loss $\Loss^{hkr}$ benefit from Proposition~\ref{thm:consistenceestimator}. The certificate $|f(x)|$ can be understood as confidence. Hence, we are interested in a classifier that makes a decision only if the prediction is above some threshold $m>0$, while $|f(x)|<m$ can be understood as examples $x$ for which the classifier is unsure: the label may be flipped using attacks of norm $\epsilon\leq m$. In this setting, we fall back to PAC learnability~\cite{valiant1984theory}: this theory gives bounds on the number of train samples required to guarantee that the test error will fall below some threshold $0\leq e<\frac{1}{2}$ with probability at least $1>\beta\geq 0$, through the use of Vapnik Chervonenkis (VC) dimension bounds~\cite{vapnik2015uniform}.  
          
        \begin{restatable}{proposition}{lipschitzpaclearnable}{\normalfont\textbf{1-Lipschitz Functions with margin are PAC learnable}.}\label{thm:lipschitzpaclearnable}
            Assume $P$ and $Q$ have bounded support $\Xsub$. Let $m>0$ the margin. Let $\Classifiers^m(\Xsub)=\{c^m_f:\Xsub\rightarrow\{-1,\bot,+1\},f\in\text{Lip}_1(\Xsub,\Reals)\}$ be the hypothesis class defined as follow.
            \begin{equation}
                c^m_f(x)=\begin{cases}
                        +1& \text{if } f(x)\geq m,\\
                        -1& \text{if } f(x)\leq -m,\\
                        \bot& \text{otherwise, meaning ``$f$ doesn't feel confident''.}
                \end{cases}
            \end{equation}
                Let $\Ball$ be the unit ball. Then the VC dimension of $\Classifiers^m$ is finite:
                \begin{equation}
                    (\frac{1}{m})^n\frac{vol(\Xsub)}{vol(\Ball)}\leq VC_{\text{dim}}(\Classifiers^m(\Xsub))\leq (\frac{3}{m})^n\frac{vol(\Xsub)}{vol(\Ball)}.
            \end{equation}
        \end{restatable}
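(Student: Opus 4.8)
The plan is to identify $VC_{\text{dim}}(\Classifiers^m(\Xsub))$ with a packing number of $\Xsub$ and then estimate that packing number by elementary volume comparisons. First I would unwind shattering for the ternary class: a finite set $S=\{x_1,\dots,x_k\}\subseteq\Xsub$ is shattered exactly when, for every sign pattern $\sigma\in\{-1,+1\}^k$, there is an $f\in\text{Lip}_1(\Xsub,\Reals)$ with $c^m_f(x_i)=\sigma_i$ for all $i$, i.e. $\sigma_i f(x_i)\geq m$ (on a shattered set the abstention value $\bot$ is never used). The bridge to geometry is the scalar Lipschitz interpolation/extension lemma: prescribed values $v_i$ at the $x_i$ are attained by some 1-Lipschitz $f:\Reals^n\to\Reals$ if and only if $|v_i-v_j|\leq\|x_i-x_j\|$ for all $i,j$, the sufficiency being witnessed by the McShane extension $f(x)=\min_i\,(v_i+\|x-x_i\|)$ and the necessity being the Lipschitz inequality itself.

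Applying this with $v_i=\sigma_i m$, the constraint is vacuous for equally labelled pairs and becomes $\|x_i-x_j\|\geq 2m$ for oppositely labelled ones. Since shattering demands a witness for the pattern that opposes any prescribed pair, $S$ is shatterable if and only if it is $2m$-separated, meaning all pairwise distances are at least $2m$. Hence $VC_{\text{dim}}(\Classifiers^m(\Xsub))$ equals the maximal cardinality of a $2m$-separated subset of $\Xsub$, i.e. its packing number; the remaining task is purely metric-geometric and no longer refers to neural networks.

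For the lower bound I would take a maximal $2m$-separated family $x_1,\dots,x_M$. Maximality forces every point of $\Xsub$ to lie within $2m$ of some $x_i$, so the balls $x_i+2m\,\Ball$ cover $\Xsub$ and $vol(\Xsub)\leq M\,(2m)^n vol(\Ball)$, which already yields a lower bound of the announced shape $(\tfrac{c}{m})^n vol(\Xsub)/vol(\Ball)$. For the upper bound I would instead use that $2m$-separation makes the balls $x_i+m\,\Ball$ pairwise disjoint, so that $M\,m^n vol(\Ball)$ is bounded by the volume those balls occupy; a standard packing--covering volume inequality then converts this into a bound of the form $(\tfrac{3}{m})^n vol(\Xsub)/vol(\Ball)$.

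The main obstacle is the upper bound. The disjoint balls $x_i+m\,\Ball$ need not sit inside $\Xsub$ but only inside its $m$-neighbourhood $\Xsub+m\,\Ball$, and for a genuinely thin $\Xsub$ (small $vol(\Xsub)$ but large diameter) the packing number is not controlled by $vol(\Xsub)$ alone. Matching the stated constant $(\tfrac{3}{m})^n$ therefore requires either a mild regularity assumption on $\Xsub$ (full-dimensionality, so that $vol(\Xsub+m\,\Ball)$ is comparable to $vol(\Xsub)$ up to the $3^n$ factor) or reading $vol(\Xsub)$ as the volume of a suitable neighbourhood. Reconciling this with the clean $2m$-separation criterion is where the careful bookkeeping of the constants has to be done.
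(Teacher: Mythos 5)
Your proposal follows essentially the same route as the paper's proof: reduce the VC dimension to the packing number of $\Xsub$ at separation $2m$ via the Lipschitz--separation equivalence, then bound that packing number by volume comparison (the paper delegates this last step to Lemma~1 of Szarek's metric-entropy paper). Two remarks are worth making. First, your McShane extension $f(x)=\min_i\,(v_i+\|x-x_i\|)$ actually fills a hole in the paper's own argument: the paper proves only the necessity direction (shattered $\Rightarrow$ $2m$-separated, straight from the Lipschitz inequality) and asserts the converse ``if and only if'' without exhibiting the witnessing $1$-Lipschitz function; your interpolation lemma is exactly the missing ingredient. Second, the obstacle you identify in the upper bound is genuine, and it is a defect of the stated proposition rather than of your argument: for a thin set (e.g., $\Xsub$ of small volume but large diameter, or finitely many far-apart points, which have Lebesgue measure zero) the packing number is not controlled by $vol(\Xsub)$, so the claimed bound $(\frac{3}{m})^n vol(\Xsub)/vol(\Ball)$ cannot hold without a fatness or regularity assumption on $\Xsub$; the paper's appeal to Szarek's Lemma~1 silently imports such an assumption, since that lemma concerns homogeneous metric measure spaces, where balls centered in the space never stick out of it. The same looseness affects the lower bound: your maximality/covering argument gives $(\frac{1}{2m})^n vol(\Xsub)/vol(\Ball)$, and the constant $1$ claimed in the proposition is in fact unattainable in dimension $n\geq 2$ (for $\Xsub$ a large ball it would force a sphere packing of density one), so the factor $2^n$ you lose is not something you should try to recover. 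In short, your core argument is the paper's argument done more carefully, and the constants you cannot match are constants the paper does not actually justify either.
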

          
        Interestingly if the classes are $\epsilon$ separable ($\epsilon>0$), choosing $m=\epsilon$ guarantees that 100\% accuracy is reachable. Prior over the separability of the input space is turned into VC bounds over the space of hypothesis. When $m=0$ the VC dimension of space $\Classifiers^m(\Xsub)$ becomes infinite and the class is not PAC learnable anymore: the training error will not converge to test error in general, regardless of the size of the training set. It is not a contradiction with Proposition~\ref{thm:consistenceestimator}: error $E(c^m_f(x))$ lacks continuity w.r.t $f(x)$ so it is not a consistent estimator.   
          
        This VC bound is \textit{architecture independent} which contrasts with the rest of literature on \LipInf networks. Practically, it means that the \LipCl network architecture can be chosen as big as we want without risking overfitting, as long as the margin $m$ is chosen appropriately. Proposition~\ref{thm:vcdimfullsort} also provides an architecture dependant bound for \LipCl networks. 
        \begin{restatable}{proposition}{vcdimfullsort}{\normalfont\textbf{VC dimension of \LipCl neural networks}.}\label{thm:vcdimfullsort}
            Let $f_{\theta}:\Reals^n\rightarrow\Reals$ a \LipCl neural network with parameters $\theta\in\Theta$, with \textbf{GroupSort2} activation functions, and a total of $W$ neurons. Let $\Hypothesis=\{\text{sign}f_{\theta}|\theta\in\Theta\}$ the hypothesis class spanned by this architecture. Then we have:
            \begin{equation}
                VC_{\text{dim}}(\Hypothesis)=\mathcal{O}\left((n+1)2^W\right).
            \end{equation}
        \end{restatable}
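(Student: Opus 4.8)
The plan is to bound the VC dimension by controlling how many affine pieces the network can produce, and then charging the cost of one halfspace per piece, so that the stated product $(n+1)2^W$ emerges as (per-piece dimension) $\times$ (number of pieces).

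First I would exploit the piecewise-linearity of GroupSort2: on a pair of pre-activations $(a,b)$ it returns $(\max(a,b),\min(a,b))$, so its behaviour is governed entirely by the single comparator test $\sign(a-b)$. For a fixed parameter $\theta$ the map $f_\theta:\Reals^n\to\Reals$ is therefore continuous and piecewise affine, and I would argue by induction over the layers that on any maximal region of $\Reals^n$ where every comparator keeps a constant sign, all activations — and hence $f_\theta$ itself — are affine in the input $x$. Each comparator contributes one boundary that can at most double the number of such regions, and the number of comparators is half the number of neurons; hence $\Reals^n$ is partitioned into at most $2^{W/2}\le 2^{W}$ affine regions.

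Second, I would recast $\sign\circ f_\theta$ as a linear decision tree: an internal node for each comparator (an affine threshold in $\Reals^n$, valid along the current region) and, at each of the $\le 2^{W}$ leaves, the sign of the local affine piece (again an affine threshold). Every node and leaf thus belongs to the class of affine threshold functions on $\Reals^n$, whose VC dimension is exactly $n+1$, and $\Hypothesis$ is contained in the class of all such trees with $\le 2^{W}$ leaves over this base class. To close the bound I would control the growth function: for $m$ points, the label of each point is decided by the $N=\mathcal{O}(2^{W})$ affine functions (comparators and leaves), and a family of VC dimension $n+1$ realizes at most $(em/(n+1))^{n+1}$ dichotomies (Sauer--Shelah), so the tree realizes at most $\big(em/(n+1)\big)^{(n+1)N}$ dichotomies on $m$ points; setting this $\ge 2^{m}$ and solving for the largest shatterable $m$ returns $VC_{\text{dim}}(\Hypothesis)=\mathcal{O}\!\left((n+1)2^{W}\right)$, with $(n+1)$ coming from the per-piece halfspace dimension and $2^{W}$ from the number of pieces.

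The hard part will be the combination step, not the region count. The partition into regions is itself parameter-dependent — the comparator hyperplanes move as $\theta$ moves — so $\Hypothesis$ is \emph{not} a fixed union of halfspace classes and one cannot simply add VC dimensions. The delicate point is to bound the dichotomies induced \emph{simultaneously} by region membership and by the leaf sign, and to do so without the naive Sauer--Shelah union argument, which leaks an extra logarithmic factor and would only give $\mathcal{O}\!\left((n+1)2^{W}\log m\right)$. I expect the clean bound to require a sharper geometric count — e.g. bounding the positive region as a union of at most $2^{W}$ cells of an arrangement of $\mathcal{O}(2^{W})$ hyperplanes, or directly bounding sign-alternations — so that the logarithmic slack is absorbed into the stated $\mathcal{O}\!\left((n+1)2^{W}\right)$.
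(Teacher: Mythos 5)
Your strategy is essentially the paper's: bound the number of affine pieces a GroupSort2 network can produce, observe that on each piece the classifier is an affine threshold of VC dimension $n+1$, and combine the two counts through a growth-function/Sauer argument. Your piece count also agrees with the paper's ($W/2$ comparators, each at most doubling the number of regions, hence at most $2^{W/2}$ pieces). The paper packages the combination step as a standalone lemma: any classifier that is affine on each cell of a convex partition of $\Reals^n$ into at most $B$ cells has VC dimension $\mathcal{O}\left((n+1)B^2\right)$. Its proof handles exactly the ``delicate point'' you flag (the partition moving with $\theta$) by over-counting: the partition is allowed to be \emph{arbitrary}, encoded by the $\frac{B(B-1)}{2}$ separating hyperplanes between pairs of convex cells; Sauer's lemma bounds by $\mathcal{O}(N^{n+1})$ the ways each such hyperplane can split $N$ points, and this is multiplied by the $\mathcal{O}\left((N^{n+1})^{B}\right)$ choices of per-cell affine labelings, giving a polynomial growth function of degree $\mathcal{O}\left((n+1)B^2\right)$ and the claimed bound with $B = 2^{W/2}$, i.e.\ $B^2 = 2^W$. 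Your decision-tree encoding is an equally valid way to decouple the partition from $\theta$: fixing the signs of all comparator functions and all leaf functions on the sample determines every label, so the product-of-growth-functions bound you wrote is legitimate.

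Where you go astray is the diagnosis in your last paragraph: the logarithmic slack is real for the naive argument, but it does not call for a sharper geometric count --- it is absorbed by slack you already had and then discarded, namely the gap between the true piece count $2^{W/2}$ and the $2^W$ of the statement. Carry the tight count through: your tree involves $N = \mathcal{O}(W2^{W/2})$ affine functions (each of the $\leq W/2$ comparators instantiated on at most $2^{W/2}$ regions, plus at most $2^{W/2}$ leaves), so the very Sauer computation you describe yields $VC_{\text{dim}}(\Hypothesis) = \mathcal{O}\left((n+1)N\log N\right) = \mathcal{O}\left((n+1)W^2 2^{W/2}\right)$, and $W^2 2^{W/2} = \mathcal{O}(2^W)$ because the polynomial factor is dominated by $2^{W/2}$. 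Your argument therefore closes as is, landing strictly inside the stated bound. The paper achieves the same absorption through the squaring $B^2 = (2^{W/2})^2$ in its lemma (and its own final invocation of Sauer's lemma is loose about the same logarithmic factor, harmlessly, for the same reason). The only correction your write-up needs is to keep $2^{W/2}$ until the end instead of relaxing it to $2^W$ at the start.
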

            
        In literature, tighter VC dimension bounds for neural networks exist, but they assume element-wise activation function~\cite{bartlett2019nearly}. This hypothesis does not apply to GroupSort2 which is known to be more expressive~\cite{tanielian2021approximating}, however we believe that this preliminary result can be strengthened.

      

\vspace{-2mm}
\section{Related work}
        \begin{table}
    \small
        \centering
        \begin{threeparttable}
            \begin{tabular}{cccc}
                \toprule
                 Properties & & \LipInf network & \LipCl network\\
                \midrule
                \multicolumn{2}{c}{Fit any boundary} & \yes{yes}~\cite{hassoun1995fundamentals} & \yes{yes} (Proposition~\ref{thm:lippowerbinary})\\
                \multicolumn{2}{c}{Robustness certificates} & \no{no} & \yes{yes} (Property~\ref{thm:certificates})\\
                \multicolumn{2}{c}{Consistent estimator} & \no{no} (App~\ref{app:lipinfnotrobust}) & \yes{yes} (Proposition~\ref{thm:consistenceestimator}, Figures~\ref{ex:nosameminimum},~\ref{fig:consistency})\\
                \multicolumn{2}{c}{Gradients} & exploding or vanishing & preserved for GNP (App~\ref{app:gradientpreserving})\\
                \multicolumn{2}{c}{VC dimension bounds} & architecture dependent~\cite{bartlett2019nearly} & when $m>0$ (Proposition~\ref{thm:lipschitzpaclearnable})\\
                \midrule
                \multirow{3}{*}{BCE $\Loss^{bce}_{\tau}$} & minimizer & ill-defined $L_t\veryshortarrow\infty$ (Proposition \ref{thm:saturatedlimit}) & attained (Proposition~\ref{thm:minimizerattained})\\
                & remark & vanishing gradient (Ex \ref{ex:gotcha}) & $L$ or $\tau$ must be tuned (Figure~\ref{fig:pareto_hkr})\\
                \cmidrule(r){1-4}
                \multirow{2}{*}{Wasserstein $\Loss^{W}$} & minimizer & ill-defined $L_t\veryshortarrow\infty$ & attained, robust (Property~\ref{thm:mcriswass})\\
                & remark & diverges during training & weak classifier (Proposition~\ref{thm:weakwass}) \\
                \cmidrule(r){1-4}
                \multirow{2}{*}{Hinge $\Loss^H_{m}$} & minimizer & attained & attained\\
                & remark & no guarantees on margin & $m$ must be tuned\\
                \cmidrule(r){1-4}
                \multirow{2}{*}{HKR $\Loss^{hkr}_{m,\alpha}$} & minimizer & ill-defined $L_t\veryshortarrow\infty$ & accuracy-robustness tradeoff\\
                & remark & diverges during training & $\alpha$ and $m$ must be tuned (Figure~\ref{fig:pareto_hkr})\\
                \bottomrule
            \end{tabular}
        \end{threeparttable}
        \smallskip
        \caption{
        Summary of notable results and the contributions.
        }
    \label{tab:bcesummary}\end{table}

\textbf{\LipCl networks parametrization} benefit from a rich literature (see Appendix~\ref{ap:deellip}) to enforce the Lipschitz constraint in various layers~\cite{gulrajani2017improved, yoshida2017spectral, arjovsky2017wasserstein, salimans2016weight, kim2020lipschitz, miyato2018spectral, helfrich2018orthogonal, erichson2021lipschitz, huang2021training, zhu2022controlling} such as activation functions, affine layers, attention layers or recurrent units. Residual connections are also Lipschitz (see Appendix~\ref{app:gradientpreserving}). \textbf{Gradient Norm Preserving networks} avoid the vanishing gradients~\cite{li2019preventing,bansal2018can} phenomenon to which the \LipCl networks are prone, by using orthogonal matrices in affine layers. This justifies the \textit{``orthogonal neural network''} terminology~\cite{stasiak2006fast,li2019orthogonal,su2021scaling}. ReLU based Lipschitz networks suffer from expressiveness issues~\cite{anil2019sorting}, and activation functions like GroupSort~\cite{anil2019sorting,tanielian2021approximating} (a special case of Householder reflection~\cite{mhammedi2017efficient,singla2021improved}) have been proposed in replacement. 
\textbf{Orthogonal kernels} are still an active research area~\cite{gayer2020convolutional,wang2020orthogonal,liu2021convolutional,achour2021existence,li2019preventing,trockman2021orthogonalizing,singla2021skew,yu2021constructing}. They are used in normalizing flows~\cite{hasenclever2017variational}, ensemble methods~\cite{mashhadi2021parallel}, reinforcement learning~\cite{Gogianu2021} or graph neural networks~\cite{pmlr-v139-dasoulas21a}. The optimization over the group of orthogonal matrices (known as Stiefel manifold) has been extensively studied in~\cite{absil2009optimization}, and algorithms suitable for deep learning are detailed in ~\cite{arjovsky2016unitary,hyland2017learning,hyland2017learning,lezcano2019cheap,huang2018orthogonal,ablin2021fast,kerenidis2021classical,choromanski2020stochastic}.   
  
\textbf{Generalization bounds} for general Lipschitz classifiers are given in~\cite{von2004distance,gottlieb2014efficient,bartlett2017spectrally}. Links between adversarial robustness, large margins classifiers and optimization bias are studied in~\cite{finlay2018lipschitz,jiang2018predicting,NEURIPS2018_48584348,faghri2021bridging}. The importance of the loss in adversarial robustness is studied in~\cite{pang2019rethinking}. See Appendix~\ref{app:generalizationbounds}.

\vspace{-2mm}
\section{Conclusions}

In this paper, we challenged the common belief that constraining Lipschitz constant degrades the classification performance of neural networks. We proved that \LipCl networks exhibit numerous attractive properties (see Table~\ref{tab:bcesummary} in summary): they provide robustness radius certificates without restrictions on their expressive power. They benefit from generalization guarantees. We showed that the hidden parameters of the loss allow to control the generalization gap and certifiable robustness.  
  
While the question of the \LipCl architecture is often in the spotlight, the loss is overlooked. We pointed out that Cross-Entropy is not necessarily the best choice, margin-based losses, such as hinge or its variant HKR, have appealing properties (table \ref{tab:bcesummary}). 

\vspace{-2mm}
\section{Perspectives}

This paper aims to be at the intersection between theoretical ML and (empirical) deep learning. Lipschitz constrained networks allow to directly put in perspective mathematical proofs and we are confident that this theory can be verified empirically on very large-scale vision datasets (such as Imagenet~\cite{deng09imagenet}).


This paper also provides a toolbox of results and experiments to serve as a basis for future works. We aim to open new research directions, including outside the field of robust learning. \LipInf networks could benefit from \LipCl literature: the absence of control over the Lipschitz constant of \LipInf 
is mitigated in practice by elements such as mixup or weight decay. Such elements would be better understood  by looking at how they affect the (uncontrolled) Lipschitz constant of \LipInf.

The efficient training over \LipCl is still an active research area. Moreover, \LipInf networks benefits from architectural elements such as skip connections and batch normalization (see appendix \ref{app:gradientpreserving}). As \LipCl networks get more mature, empirical results will improve, matching theory even more (explaining the emphasis on the theoretical proofs instead of the design of \LipCl depicted in appendix \ref{ap:deellip}).  
  

Many practices in deep learning entangle the questions of architecture, of generalization and of optimization. However, these elements usually have unexpected consequences on the nature of the optimum and the optimization process. Our work is a first step toward a better separation of these components and their role.
\vspace{-2mm}
\begin{ack}
We thank S\'ebastien Gerchinovitz for critical proof checking, Jean-Michel Loubes for useful discussions, and Etienne de Montbrun, Thomas Fel and Antonin Poch\'e for their read-checking. A special thank to Agustin Picard for his useful advice and thorough reading of the paper. This work has benefited from the AI Interdisciplinary Institute ANITI, which is funded by the French ``Investing for the Future – PIA3'' program under the Grant agreement ANR-19-P3IA-0004. The authors gratefully acknowledge the support of the DEEL project.\footnote{\url{https://www.deel.ai/}}  
\end{ack}

{
    \small
    \bibliographystyle{unsrt}
    \bibliography{main}
}

\newpage
\section*{Checklist}
\begin{enumerate}

\item For all authors...
\begin{enumerate}
  \item Do the main claims made in the abstract and introduction accurately reflect the paper's contributions and scope?
    \answerYes{Proofs in appendix, experiments in the paper.}
  \item Did you describe the limitations of your work?
    \answerYes{We talked about the difficulty of training the networks.}
  \item Did you discuss any potential negative societal impacts of your work?
    \answerNA{Mostly theoretical contribution.}
  \item Have you read the ethics review guidelines and ensured that your paper conforms to them?
    \answerYes{}
\end{enumerate}

\item If you are including theoretical results...
\begin{enumerate}
  \item Did you state the full set of assumptions of all theoretical results?
    \answerYes{Assumptions are stated in each Proposition or Theorem.}
        \item Did you include complete proofs of all theoretical results?
    \answerYes{Yes (in appendix).}
\end{enumerate}

\item If you ran experiments...
\begin{enumerate}
  \item Did you include the code, data, and instructions needed to reproduce the main experimental results (either in the supplemental material or as a URL)?
    \answerYes{Experimental setting is given in appendix, and the code is in supplementary material.}
  \item Did you specify all the training details (e.g., data splits, hyperparameters, how they were chosen)?
    \answerYes{Yes, in the core paper for toy experiments and in appendix~\ref{ap:pareto} for large scale experiments.}
        \item Did you report error bars (e.g., with respect to the random seed after running experiments multiple times)?
    \answerNo{Toy experiments were run only once. CIFAR-10 experiments like in Figure\ref{fig:pareto_hkr} and Figure~\ref{fig:consistency} involved multiple runs (with different values for hyper-parameters). All the runs are reported (no cherry-picking), and the results show a coherent trend. Other experiments (see Appendix) showed our experiments are robust with respect to random seed, but discussing stability of the training of \LipCl is out of the scope of this article.}
        \item Did you include the total amount of compute and the type of resources used (e.g., type of GPUs, internal cluster, or cloud provider)?
    \answerYes{See appendix~\ref{ap:xpsetting}.}
\end{enumerate}

\item If you are using existing assets (e.g., code, data, models) or curating/releasing new assets...
\begin{enumerate}
  \item If your work uses existing assets, did you cite the creators?
    \answerYes{Yes for Deel-Lip library we used.}
  \item Did you mention the license of the assets?
    \answerYes{}
  \item Did you include any new assets either in the supplemental material or as a URL?
    \answerYes{Our code is available in supplementary.}
  \item Did you discuss whether and how consent was obtained from people whose data you're using/curating?
    \answerNA{}
  \item Did you discuss whether the data you are using/curating contains personally identifiable information or offensive content?
    \answerNA{}
\end{enumerate}

\item If you used crowdsourcing or conducted research with human subjects...
\begin{enumerate}
  \item Did you include the full text of instructions given to participants and screenshots, if applicable?
    \answerNA{}
  \item Did you describe any potential participant risks, with links to Institutional Review Board (IRB) approvals, if applicable?
    \answerNA{}
  \item Did you include the estimated hourly wage paid to participants and the total amount spent on participant compensation?
    \answerNA{}
\end{enumerate}

\end{enumerate}



\newpage
\appendix

    \tableofcontents

\section{Proofs of Section~\ref{sec:classificationpower}}\label{app:classificationpower}

    \subsection{Single class case}
    
    The properties of Lebesgue measure over euclidean space $\Reals^n$ are recalled in~\cite{royden1988real}. Importantly, the Lebesgue measure $\mu$ is translation invariant and measure the volume of hyperboxes:  $\mu([a_1,b_1],[a_2,b_2],\ldots [a_n,b_n])=\Pi_{i=1}^n(b_i-a_i)$. 
    
    \lippowerbinary*

    The proof of \textbf{Proposition~\ref{thm:lippowerbinary}} is constructive, we need to introduce the Signed Distance Function, already popularized in shape processing~\cite{rousson2002shape}.  
    
    \begin{restatable}{definition}{nlctwoclasses}{\normalfont \textbf{Signed Distance Function associated to decision boundary.}}\label{def:nlctwoclasses}
        Let $c:\Xsub\rightarrow\{-1,+1\}$ be any classifier with closed pre-images. Let $\bar{A}=\{x\in\Reals^n|c(x)=+1\}$ and $\bar{B}=\{x\in\Reals^n|c(x)=-1\}=\Xsub\setminus \bar{A}$. Let $d(x,y)=\|x-y\|$ and $d(x,S)=\min_{y\in S}d(x,y)$ be the distance to a \textbf{closed} set $S$. Let $\partial=\{x\in\Reals^n|d(x,\bar{A})=d(x,\bar{B})\}$.    
        We define $f:\Reals^n\rightarrow\Reals$ as follow:
        \begin{equation}
            f(x)=\begin{cases}
                d(x,\partial)& \text{if } d(x,\bar{B}) \geq d(x,\bar{A})\\
                -d(x,\partial)& \text{if } d(x,\bar{B}) < d(x,\bar{A}).
            \end{cases}
        \end{equation}
        We denote by SDF($c$) the function $f$.  
    \end{restatable}
    The signed distance function $f$ previously defined verifies all the properties, as a special case of Eikonal equation. We give the full proof here for completeness.  
    \begin{proof}
        We start by proving that $f$ is \mbox{1-Lipschitz}. The reasoning applies more broadly to arbitrary Banach space (topological normed vector space), not only $(\Reals^n,\|\cdot\|_2)$.  
          
        First, consider the case $d(x,\bar{B})\geq d(x,\bar{A})$ and $d(y,\bar{B})\geq d(y,\bar{A})$. Then we have $|f(x)-f(y)|=|d(x,\partial)-d(y,\partial)|$. Assume without loss of generality that $d(x,\partial)\geq d(y,\partial)$. Let $z\in\partial$ be such that $d(y,\partial)=d(y,z)$ (it is guaranteed to exist since $\partial$ is a closed set). Then by definition of $d(x,\partial)$ we have $d(x,z)\geq d(x,\partial)$. So:
        \begin{equation}
            \begin{aligned}
            |f(x)-f(y)|&=|d(x,\partial)-d(y,\partial)|\leq d(x,z)-d(y,z)\leq d(x,y).
            \end{aligned}
        \end{equation}
        The cases $d(x,\bar{B}) < d(x,\bar{A})$ and $d(y,\bar{B}) < d(y,\bar{A})$ are identical.  
        Now consider the case $d(x,\bar{B})<d(x,\bar{A})$ and $d(y,\bar{B})\geq d(y,\bar{A})$. Then we have $|f(x)-f(y)|=d(x,\partial)+d(y,\partial)$. We will proceed by contradiction. Such complicated reasoning is superfluous for $(\Reals^n,\|\cdot\|_2)$, but has the appealing property to generalize to any Banach space. Assume that $d(x,\partial)+d(y,\partial)>d(x,y)$.  
        Let $R>0$ be such that $R<d(x,\partial)$ and $R+d(y,\partial)>d(x,y)$.   
        Let:
        $$z=x+\frac{R}{d(x,y)}(x-y).$$  
        Then we have $d(x,z)=\|\frac{R}{d(x,y)}(x-y)\|=\frac{R}{d(x,y)}d(x,y)=R<d(x,\partial)$.  
        So by definition of $\partial$ we have $d(z,\bar{B})<d(z,\bar{A})$.  
        But we also have:  
        \begin{equation}
            \begin{aligned}
            d(y,z)&=\|(x-y)+\frac{R}{d(x,y)}(x-y)\|=|1-\frac{R}{d(x,y)}|\times \|x-y\|\\
                  &=|d(x,y)-R|<|d(y,\partial)|\text{ using the hypothesis on }R.
            \end{aligned}.
        \end{equation}
        So we have $d(z,\bar{B})\geq d(z,\bar{A})$ which is a contradiction. Consequently, we must have $d(x,\partial)+d(y,\partial)\leq d(x,y)$. The function $f$ is indeed \mbox{1-Lipschitz}.  
          
        Now, we will prove that $\|\nabla_xf\|=1$ everywhere it is defined. Let $x$ be such that $y\in\argmin_{y\in\partial}{d(x,y)}$ is unique. Consider $h=\epsilon\frac{(y-x)}{\|y-x\|}$ with $1\geq\epsilon>0$ a small positive real. We have $d(x,x+h)=\epsilon$, it follows by triangular inequality that $d(x+h,\partial)=d(x,\partial)-\epsilon$.   
        We see that:
        $$\lim_{\epsilon\to+\infty}\frac{f(x+h)-f(x)}{\|h\|}=-1.$$
        The vector $u=-\nabla_xf$ is the (unique) vector for which $\langle u,\frac{f(x+h)-f(x)}{\|h\|}\rangle$ is minimal. Knowing that $f$ is \mbox{1-Lipschitz} yields that $\|\nabla_xf\|=1$. For points $x$ for which $\argmin_{y\in\partial}{d(x,y)}$ is not unique, the gradient is not defined because different directions minimize $\langle u,\frac{f(x+h)-f(x)}{\|h\|}\rangle$ which contradicts the uniqueness of gradient vector. The number of points for which $y\in\argmin_{y\in\partial}{d(x,y)}$ is not unique must have null measure, since Lipschitz functions are almost everywhere differentiable (by Rademacher’s Theorem).  
          
        Finally, note that $\sign{f(x)}=c(x)$ on $\bar{A}$ and $\bar{B}$. Indeed, in this case either $d(x,\bar{B})<d(x,\bar{A})$ either $d(x,\bar{B})>d(x,\bar{A})$ and the result is straightforward.  
    \end{proof}
    
    \subsection{Multiclass case}\label{app:mutliclassexpressive}

        The label set is now $\Labels=\{1,2,\ldots,K\}$. In practice we use one-hot encoded vectors to compute the loss, by taking the $\argmax_k$ over a vector of $\Reals^K$.  
          
        \begin{proposition}[Lipschitz Multiclass classification]\label{thm:lippowermulticlass}
            For any multiclass classifier $c:\Xsub\rightarrow\Labels$ with closed pre-images there exists a 1-Lipschitz function $f:\Reals^n\rightarrow\Reals^K$ such that $\argmax_k{f_k(x)}=c(x)$ on $\Xsub$ and such that $\|\Jacobian_x f\|=1$ almost everywhere (w.r.t Lebesgue measure).  
        \end{proposition}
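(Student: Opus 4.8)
The plan is to generalize the signed-distance construction of Definition~\ref{def:nlctwoclasses} by embedding the $K$ labels as the vertices of a regular simplex. Writing $A_k=\{x\in\Reals^n\mid c(x)=k\}$ for the (closed) pre-images, which I take to cover $\Reals^n$ exactly as in the binary case, set $d_k(x)=d(x,A_k)$ and define the margin $g(x)=\min_{j\neq c(x)}d_j(x)$, i.e. the distance from $x$ to the nearest region of a different class. I place the labels at the unit vectors $v_k=\sqrt{\tfrac{K}{K-1}}\bigl(e_k-\tfrac1K\mathbf 1\bigr)\in\Reals^K$, which satisfy $\|v_k\|=1$, $\langle v_i,v_j\rangle=-\tfrac{1}{K-1}$ for $i\neq j$, and $\argmax_j (v_k)_j=k$. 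The candidate map is $f(x)=g(x)\,v_{c(x)}$, which for $K=2$ recovers the scalar SDF up to the isometry sending $\pm1$ to $\pm v$.

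The two pointwise requirements are then immediate. For the decision boundary, on the interior of each $A_k$ one has $g(x)>0$, so $\argmax_j f_j(x)=\argmax_j (v_{c(x)})_j=c(x)$, as wanted (the boundaries, where $g=0$ and ties occur, are Lebesgue-null and do not affect the a.e.\ statement). For the Jacobian, on the interior of $A_k$ we have $f=g\,v_k$, hence $\Jacobian_x f=v_k\,\nabla g(x)^{\top}$, a rank-one matrix whose spectral norm equals $\|v_k\|\,\|\nabla g(x)\|=\|\nabla g(x)\|$. Since $g$ is the distance to the closed set $\bigcup_{j\neq k}A_j$, the eikonal argument already used in the proof of Proposition~\ref{thm:lippowerbinary} (unique nearest point off a null set, Rademacher's theorem) gives $\|\nabla g\|=1$ almost everywhere, so $\|\Jacobian_x f\|=1$ a.e.

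The main work is the global $1$-Lipschitz bound, and here the simplex embedding pays off. When $x,y$ lie in the same region $A_k$, the index set in $g=\min_{j\neq k}d_j$ is fixed, so $g$ is $1$-Lipschitz there and $\|f(x)-f(y)\|=|g(x)-g(y)|\,\|v_k\|\leq\|x-y\|$. The delicate case is $c(x)=k\neq l=c(y)$, and the key observation is that the open balls $B(x,g(x))$ and $B(y,g(y))$ are disjoint: by definition of $g$ no region other than $A_k$ meets $B(x,g(x))$, while $B(y,g(y))\subseteq A_l$, so the two balls cannot intersect. Disjointness of these balls forces $d(x,y)\geq g(x)+g(y)$, and therefore $\|f(x)-f(y)\|=\|g(x)v_k-g(y)v_l\|\leq g(x)\|v_k\|+g(y)\|v_l\|=g(x)+g(y)\leq d(x,y)$ by the triangle inequality. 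This replaces the contradiction argument of the binary proof by a one-line estimate, the point being that unit-norm vertices make the triangle inequality sharp enough. Continuity of $f$ across the null boundary set (where $g\to0$) then upgrades the bound from the dense union of interiors to all of $\Reals^n$. The multi-class adaptation is thus conceptually identical to the binary one; the only genuinely new ingredient is choosing the vertex geometry so that simultaneously $\|v_k\|=1$ (controlling the Lipschitz and Jacobian norms) and $\argmax_j (v_k)_j=k$ (matching the decision boundary), which the regular simplex provides.
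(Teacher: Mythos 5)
There is a genuine gap, and it sits exactly at the step you call the ``main work'': the cross-region Lipschitz bound. Your argument needs the closed pre-images $A_k$ to cover $\Reals^n$, but that assumption is impossible rather than merely unstated: finitely many disjoint closed sets covering the connected space $\Reals^n$ are each clopen, so a non-constant $c$ can never have this property (in the paper's binary case the pre-images cover only $\Xsub$, not $\Reals^n$). Once the covering is dropped, the inclusion $B(y,g(y))\subseteq A_l$ fails --- the ball can spill into the complement of $\Xsub$ --- and with it the disjointness of the two balls and the inequality $d(x,y)\geq g(x)+g(y)$. Concretely, take $\Xsub=\{0,1\}\subset\Reals$ with $c(0)=1$, $c(1)=2$: then $g(0)=g(1)=1$, so $\|f(0)-f(1)\|=\|v_1-v_2\|=\sqrt{2+\tfrac{2}{K-1}}>1=\|0-1\|$, and your $f$ is not $1$-Lipschitz for any $K\geq 2$. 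The root cause is the choice of scalar field: $g$ is the distance to the nearest \emph{other class}, which is up to twice too large. For the same reason your claim that the construction recovers the binary SDF is incorrect: the SDF of Definition~\ref{def:nlctwoclasses} measures the distance to the \emph{bisector} $\partial$ (the set of points equidistant from the two nearest classes), not to the opposite region, and it is precisely this halving that makes the two one-sided distances add up to at most $d(x,y)$.

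The paper's proof avoids the problem by defining (Definition~\ref{def:nlckclasses}) $f_k(x)=d(x,\partial)$ when $k$ is the strictly nearest class and $f_k(x)=0$ otherwise, where $\partial$ is the multiclass tie set; across regions, any segment joining points with different strict winners must cross $\partial$, which gives $d(x,\partial)+d(y,\partial)\leq\|x-y\|$ directly. Note also that the cheap repair of your construction --- replacing $g$ by $g/2$, which does make $f$ $1$-Lipschitz on $\Xsub$ via $g(x),g(y)\leq\|x-y\|$ --- destroys the other requirement, since then $\|\Jacobian_x f\|=\tfrac{1}{2}\|\nabla g\|=\tfrac{1}{2}$ a.e.; the two conditions pull in opposite directions, and distance-to-bisector is what satisfies both simultaneously. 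Your simplex embedding is, in itself, a nice idea (it even produces a strictly positive argmax margin where the paper's one-hot construction leaves ties among the zero coordinates), and the proof can be repaired by keeping the vectors $v_k$ but replacing $g$ by $d(\cdot,\partial)$ and the label by the strictly-nearest-class assignment: the rank-one Jacobian argument then gives $\|\Jacobian_x f\|=\|\nabla d(\cdot,\partial)\|=1$ a.e.\ as you intended. As written, however, the proof does not establish the proposition.
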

        
        For the case $K>2$ we must slightly change the definition to prove \textbf{Proposition~\ref{thm:lippowermulticlass}}.  
          
        \begin{definition}[Multiclass Signed Distance Function]\label{def:nlckclasses}
            Let $c:\Xsub\rightarrow\{1,2,\dots K\}$ be any classifier with closed pre-images. Let $\bar{A_k}=c^{-1}(\{k\})$. Let $\partial=\{x\in\Reals^n|\exists k\neq l, d(x,\bar{A_k})=d(x,\bar{A_l})=\argmin_m d(x,\bar{A_m})\}$.    
            We define $f:\Reals^n\rightarrow\Reals^k$ as follow:
            \begin{equation}
                f_k(x)=\begin{cases}
                    d(x,\partial)& \text{if } d(x,\bar{A_k})<d(x,\bar{A_l})\text{ for all }l\neq k,\\
                    0&\text{otherwise}.
                \end{cases}
            \end{equation}
        \end{definition}
        
        In overall the proof remains the same.  
        
        \begin{proof}
            We start by proving that $f$ is \mbox{1-Lipschitz}.  
              
            We need to prove that $\|f(x)-f(y)\|_p\leq \|x-y\|$ for any $p$-norm on $\Reals^K$ with $p\geq 1$.  
            First, consider the case $f_k(x)=f_k(y)\neq 0$. Then $\|f(x)-f(y)\|_p=|f_k(x)-f_k(y)|=|d(x,\partial)-d(y,\partial)|\leq\|x-y\|$ using the proof of Proposition~\ref{thm:lippowerbinary}.     
            Now, consider the case $f_k(x)>0,f_l(y)>0$ and $k\neq l$. Then:  
            \begin{equation}
                \begin{split}
                    \|f(x)-f(y)\|_p =\sqrt[p]{f^p_k(x)+f^p_l(y)}\leq |f^k(x)|+|f^l(y)|=d(x,\partial)+d(y,\partial).
                \end{split}
            \end{equation}
            Using the same technique as in the previous proof, if we assume $d(x,\partial)+d(y,\partial)>d(x,y)$ then we can construct $z$ verifying both $f_k(z)<f_l(z)$ and $f_l(z)<f_k(z)$ which is a contradiction. Consequently $d(x,\partial)+d(y,\partial)\leq d(x,y)$.  
              
            Each row of $\Jacobian_x f$ is either full of zeros, or the gradient of some $f_k$ on which the reasoning of the case $K=2$ applies (like in the previous proof). In this case, the spectral norm $\Jacobian_x f$ is equal to the norm of the gradient of the non zero row. We conclude similarly that $\|\Jacobian_x f\|=1$ everywhere it is defined.
              
            Finally, note that $\argmax_k f_k(x)$ is equal to $c(x)$ everywhere $c$ is defined, which concludes the proof.  
        \end{proof}  
          
        With this proposition in mind, we can deduce Corollary~\ref{thm:samedecisionfrontier}.
          
        \begin{corollary}[\LipCl is as powerful as \LipInf for classification]\label{thm:samedecisionfrontier}
        For any neural network $f:\Reals^n\rightarrow\Reals$ there exists \mbox{1-Lipschitz} neural network $\tilde{f}:\Reals^n\rightarrow\Reals$ such that $\sign{(f(x))}=\sign{(\tilde{f}(x))}$.  
        \end{corollary}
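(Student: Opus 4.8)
The plan is to exploit the scale invariance of the decision rule $\sign$, exactly as sketched in the introduction, rather than going through the signed distance construction. The only structural fact I need is that every network $f\in\LipInf$ has a \emph{finite} Lipschitz constant: since $f$ is a composition of affine layers (each with Lipschitz constant equal to the spectral norm of its weight matrix) and $\text{Lip}(\sigma)$-Lipschitz activations, submultiplicativity of the Lipschitz constant gives $\text{Lip}(f)\leq \text{Lip}(\sigma)^d\prod_{i=1}^d\|W_i\|_2<\infty$. I do not need to \emph{compute} this constant (which is NP-hard), only to know that it is finite.

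Concretely, I would set $L\defeq\max(1,\text{Lip}(f))>0$ and define $\tilde f\defeq f/L$. The point to check is that $\tilde f$ is again a neural network of the same architecture: the scaling $1/L$ is absorbed into the last affine layer by replacing its weight and bias $W_d,b_d$ with $W_d/L,b_d/L$, leaving every other layer untouched. By construction $\text{Lip}(\tilde f)=\text{Lip}(f)/L\leq 1$, so $\tilde f$ is a 1-Lipschitz neural network. Finally, because $L>0$, multiplication by $1/L$ is a strictly increasing bijection of $\Reals$ fixing $0$, hence $\sign(\tilde f(x))=\sign(f(x)/L)=\sign(f(x))$ for every $x$. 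This already proves the statement, with \emph{exact} equality of signs everywhere, not merely almost everywhere. There is essentially no hard step: the main things to verify are that the rescaled object is still a legitimate network (made transparent by the last-layer folding) and that the degenerate case $\text{Lip}(f)=0$, i.e. a constant $f$, is handled automatically by the choice $L=\max(1,\text{Lip}(f))$.

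It is worth recording how this relates to Proposition~\ref{thm:lippowermulticlass}, with which the corollary is associated, since a reader might expect a signed-distance argument. One could instead note that $c\defeq\sign\circ f$ has closed pre-images (by continuity of $f$, the sets $f^{-1}([0,\infty))$ and $f^{-1}((-\infty,0])$ are closed), apply Proposition~\ref{thm:lippowerbinary} to obtain the 1-Lipschitz function $\text{SDF}(c)$ with $\sign\circ\text{SDF}(c)=c=\sign\circ f$, and then invoke the universal approximation theorem for \LipCl to realize it as a network. The obstacle with that route, and the reason I would not take it, is that uniform approximation only controls $|\tilde f-\text{SDF}(c)|<\eta$, which preserves the sign on $\{x: d(x,\partial)>\eta\}$ but may flip it inside the band of width $\eta$ around the boundary $\partial$; recovering \emph{exact} sign equality everywhere would additionally require matching the zero set. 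The rescaling argument sidesteps this difficulty entirely, which is why I would present it as the proof.
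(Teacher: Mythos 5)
Your proof is correct and takes essentially the same approach as the paper: the paper's proof of this corollary just invokes the rescaling argument sketched in its introduction ($\tilde f = f/L$ with $L$ the finite Lipschitz constant of $f$, under which $\sign$ is invariant), which is exactly your argument, and you additionally supply details the paper leaves implicit (finiteness of $L$ via submultiplicativity, folding $1/L$ into the last affine layer, and the constant-$f$ degenerate case). Your closing discussion of the signed-distance-plus-uniform-approximation route also mirrors the paper, which mentions that path only as a ``more convoluted'' alternative and, consistent with your objection, restricts to a subset where the pre-images are separated rather than claiming exact sign equality everywhere.
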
  
        \begin{proof}  The proof sketched in Introduction is sufficient to show that \LipCl networks and unconstrained ones have the same decision frontiers. We could have also taken a more convoluted path: take the classifier $c$ associated to an \LipInf network, consider the restriction to a subset $\Xsub$ of the input space making the pre-images separated. Then we can apply Proposition~\ref{thm:lippowerbinary} to get a 1-Lipschitz function with the same classification power, and finally approximate those functions (in the sense of uniform convergence) with \LipCl network.  
        \end{proof}
        
        \zeroerror*
          
        \begin{proof}  If classes are separable the optimal Bayes classifier $b$ achieves zero error. Moreover, the topological closure $\closure{b^{-1}}(\{y\}),y\in\Labels$ yields a set of closed sets that are all disjoints (since $\epsilon>0$) and on which Proposition~\ref{thm:lippowermulticlass} can be applied, yielding a \LipCl neural network with the wanted properties.  
          
        \paragraph{Bonus: non separable case.} We can also handle the case of non separable classes by imitating the optimal Baye classifier $c$. We take $\Xsub$ a subset of the input space on which the pre-images of $c$ are closed.  The application of Proposition~\ref{thm:lippowerbinary} for optimal Bayes classifier gives us a 1-Lipschitz function $f$ with same decision frontier as $c$. Finally, we can use the universal approximation theorem of Anil and all. in~\cite{anil2019sorting} to conclude there exists \LipCl network that can approximate arbitrary well the function $f$, and hence approximate arbitrarily well the classifier $c$ on $\Xsub$. Outside $\Xsub$, the error is not controlled but depends of the volume of the set $(\supp{\Prob_X})/\Xsub$ whose Lebesgue measure can be made arbitrary small (by taking $\Xsub$ big enough). As $\Prob_X$ admits a pdf w.r.t Lebesgue measure, then $\Prob_X((\supp{\Prob_X})/\Xsub)$ can be made arbitrary small, and consequently the risk as well.
        \end{proof}
          
        \paragraph{Example~\ref{fig:signeddistancefunction}.} We plot the level set of the network $f$ trained from the discretized ground truth (in $400\times 400$ pixels) of the Signed distance function. The distance to the frontier $\partial$ is easily computed since the frontier $\partial$ is a finite collection of segments (fourth iteration of Von Koch snowflake fractal). We train a $128\veryshortarrow 128\veryshortarrow 128\veryshortarrow 128\veryshortarrow 128$ \LipCl network. The network is trained with Mean Square Error (MSE) and the criterion used to stop training is the Mean Absolute Error (MAE).   
    
    \subsection{Proofs of Section~\ref{sec:lipclminexists}}
    
    \minimizerattained*
    
        The proof of \textbf{Proposition~\ref{thm:minimizerattained}} is an application of Arzelà–Ascoli theorem.\begin{proof}
        Let $\Empirical(f)=\Expect_{(x,y)\sim\Prob_{XY}}[\Loss(f(x),y)]$. Consider a sequence of functions $f^t$ in $\text{Lip}_L(\Xsub,\Reals)$ such that $\lim\limits_{t\rightarrow\infty}\Empirical(f_t)=\inf_{f\in\text{Lip}_L(\Xsub,\Reals)}\Empirical(f)=\Empirical^{*}$.   
          
        Consider the sequence $u_t=\|f_t\|_{\infty}$. We want to prove that $(u_t)_{t\in\Natural}$ is bounded. Proceed by contradiction and observe that if $\limsup\limits_{t\rightarrow\infty}u_t=+\infty$ then $\limsup\limits_{t\rightarrow\infty}\Empirical(f_t)=+\infty$. Indeed, for $\|f_t\|_{\infty}\geq2L\diam{\Xsub}$ we can guarantee that $\sign{f_t}$ is constant over $\Xsub$ and in this case one of the two classes $y$ is misclassified, knowing that $\lim\limits_{f(x)\rightarrow\infty}\Loss(-yf(x),y)=\BigO(f(x))\rightarrow+\infty$ yields the desired result. But if $\limsup\limits_{t\rightarrow\infty}\Empirical(f_t)=+\infty$, then $\Empirical(f_t)$ cannot not converges to $\Empirical^{*}$. Consequently, $u_t$ must be upper bounded by some $M$.  
          
        Hence the sequence $f_t$ is uniformly bounded. Moreover each function $f_t$ is \mbox{$L$-Lipschitz} so the sequence $f_t$ is uniformly equicontinuous. By applying Arzelà–Ascoli theorem we deduce that it exists a subsequence $f_{\phi(t)}$ (where $\phi:\Natural\rightarrow\Natural$ is strictly increasing) that converges uniformly to some $f^{*}$, and $f^{*}\in\text{Lip}_L(\Xsub,\Reals)$. As $\Empirical(f^{*})=\Empirical^{*}$, the infimum is indeed a minimum.  
        \end{proof}  
        The upper bound on Lip($f$) is turned into a lower bound on $\|\nabla_{\Params}\Loss(f_L^{\Params*}(x),y)\|$ (no element-wise vanishing gradient), but its average $\|\nabla_{\Params}\Expect_{(x,y)\sim\Prob_{XY}}[\Loss((f_L^{\Params*}(x),y)]\|=0$ is null (see Appendix~\ref{app:gradientpreserving}).  
        
\section{Proofs of Section~\ref{sec:robustness}}\label{app:robustness}

        We recall below the definition of the Signed Distance Function (SDF) associated to a classifier.
        
        \nlctwoclasses*
        
        In proof of Corollary~\ref{thm:thightboundcertificate} we use the Bayes classifier $b:\Xsub\rightarrow\Labels$ associated to the classification task between $P$ and $Q$.  

        \thightboundcertificate*
    
        \begin{proof} Those properties hold by construction. The risk $\mathcal{R}(\sign{(f)})$ is minimal since $f$ is build with the optimal Bayes classifier. Note that, in general, for any classifier $c:\Xsub\rightarrow\Labels$ the bound of Property~\ref{thm:certificates} is tight by construction for SDF($c$). Indeed $f(x)$ is the distance to the frontier, and the direction is given by $\nabla_xf(x)$. 
        \end{proof}
        
        For the proof of Proposition~\ref{thm:mcriswass} we recall below the definition of Wasserstein-1 distance, as found in~\cite{villani2008optimal} (Definition 6.1).  
          
        \begin{definition}[Wasserstein-1 distance]\label{def:wasserstein}
        Let $d:\Reals^n\times\Reals^n\rightarrow\Reals$ be a metric. For any two measures $P$ and $Q$ on $\Reals^n$ the Wasserstein-1 distance is defined by the following formula:
        $$\Wasserstein_1(P,Q)\defeq\inf_{\pi\in\Pi(P,Q)}\int_{\Reals^n}d(x,y)\mathrm{d}\pi(x,y)$$
        where $\Pi(P,Q)$ denote the set of measures on $\Reals^n\times\Reals^n$ whose marginals are $P$ and $Q$ respectively. Equivalently we can write:
        $$\Wasserstein_1(P,Q)\defeq\inf_{\substack{\text{Law}(X)=P\\\text{Law}(Y)=Q}} \Expect{[d(X,Y)]}.$$
        \end{definition}
        
        In our case we are working with neural networks that are Lipschitz with respect to $l2$ distance, so we have $d(x,y) \defeq \|x-y\|_2$.  
        
        \mcriswass*
        \begin{proof} The result is straightforward by writing the dual formulation (following Kantorovich-Rubinstein) of Wasserstein $\Wasserstein_1$ metric.  
          
        By Remark 6.3 of~\cite{villani2008optimal} the Wasserstein-1 distance is the Kantorovich-Rubinstein distance:  
        $$\Wasserstein_1(P,Q)=\sup_{f\in\text{Lip}_1(\Xsub,\Reals)}\Expect_{x\sim P}[f(x)]+\Expect_{z\sim Q}[f(z)]$$
        
        We see that:
        
        \begin{equation}\label{eq:wassrobustnessproof}
            \begin{aligned}
            \Wasserstein_1(P,Q)&=\sup_{f\in\text{Lip}_1(\Xsub,\Reals)}\Expect_{x\sim P}[f(x)]-\Expect_{z\sim Q}[f(z)]\\
            &=\inf_{f\in\text{Lip}_1(\Xsub,\Reals)}\Expect_{x\sim P}[-f(x)]+\Expect_{z\sim Q}[-(-f(z))]\\
            &=\inf_{f\in\text{Lip}_1(\Xsub,\Reals)}\Expect_{(x,y)\sim\Prob_{XY}}[\Loss^W(f(x),y)].
            \end{aligned}
        \end{equation}
        
        By Kirszbraun's theorem the optimum of Equation~\ref{eq:wassrobustnessproof} can be extended into a 1-Lipschitz function over $\Reals^n$. This function can, in turn, be approximated by a \LipCl network over the domain of interest.  
        \end{proof}
        
        \weakwass*
        \begin{proof}
            We will build $P$ and $Q$ as a finite collection of Diracs. Let $P=\frac{1}{n}\sum_{i=1}^n\delta_{4(i-1)}$ and $Q=\frac{1}{n}\sum_{i=1}^n\delta_{4i-1}$ for some $n\in\Natural$, where $\delta_x$ denotes the Dirac distribution in $x\in\Reals$. A example is depicted in Figure~\ref{fig:counter_example_wass} for $n=20$. In dimension one, the optimal transportation plan is easy to compute: each atom of mass from $P$ at position $i$ is matched with the corresponding one in $Q$ to its immediate right. Consequently we must have $f(4i-1)=f(4(i-1))+3$. The function $f$ is not uniquely defined on segments $[4i-1, 4i]$ but it does not matter: since $f$ is \mbox{1-Lipschitz} we must have $|f(4i-1)-f(4i)|\leq 1$. Consequently in every case for $i<j$ we must have $f(4(i-1))<f(4(j-1))$ and $f(4i-1)<f(4j-1)$.  Said otherwise, $f$ is strictly increasing on $\supp{P}$ and $\supp{Q}$.  
  
            The solutions of the problems are invariant by translations: if $f$ is the solution, then $f - T$ with $T\in\Reals$ is also a solution. Let's take a look at classifier $c(x)=\sign{(f(x)-T)}$. If $T$ is chosen such that $f(4(i-1))-T<0$ and $f(4i-1)-T>0$ for some $1\leq i\leq n$ then $(n-1)+2=n+1$ points are correctly classified on a total of $2n$ points. It corresponds to an error of $\frac{n-1}{2n}=\frac{1}{2}-\frac{1}{2n}$. Take $n=\ceil{\frac{1}{2\epsilon}}$ to conclude.  
        \end{proof}
  
        \begin{figure}[!th]
            \centering
            \includegraphics[width=0.8\textwidth]{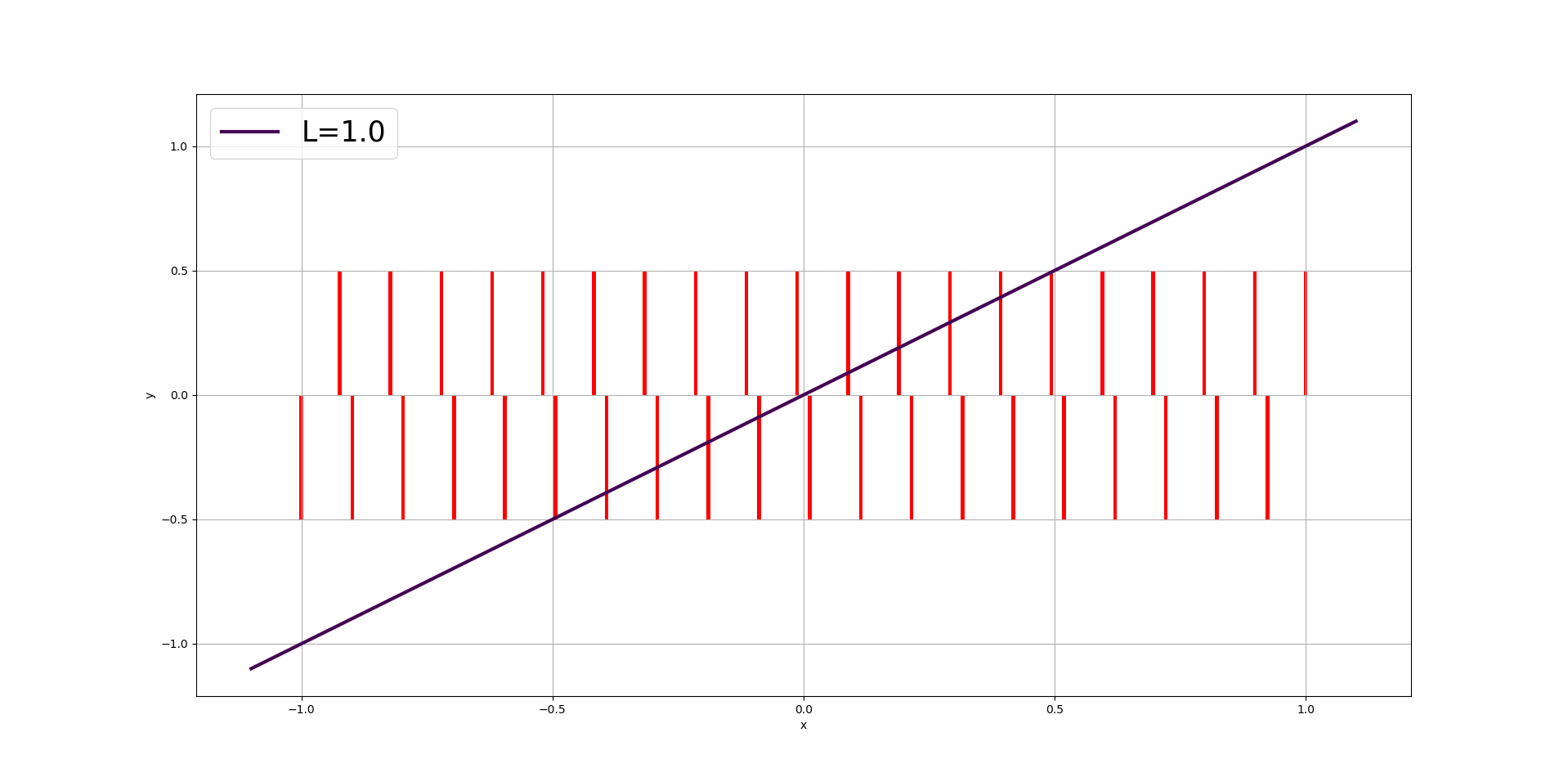}
            \vspace{-0.5cm}
            \caption{Pathological distributions $P$ and $Q$ of $20$ points each, on which the accuracy of the Wasserstein minimizer cannot be better than $52.5\%$.}
            \label{fig:counter_example_wass}
        \end{figure}

\section{Proofs of Section~\ref{sec:generalize}}\label{ap:generalize}

    \subsection{Consistency of Lipschitz estimators}
    
    \consistenceestimator*
    
        \textbf{Proof of Theorem~\ref{thm:consistenceestimator}} is an application of Glivenko-Cantelli theorem.
    
        \begin{proof}
            We proved in Proposition~\ref{thm:minimizerattained} that the minimum of equation~\ref{eq:optim1lip} is attained, so we replace $\inf$ by $\min$ for the Lipschitz loss function $\Loss$. We restrict ourselves to a subset of $\text{Lip}_L(\Xsub,\Reals)$ on which $\|f\|_{\infty}\leq2L\diam{\Xsub}$ because the minimum lies in this subspace.   
            We have:
            $$|\min_f\Empirical_p(f)-\min_f\Empirical_{\infty}(f)|\leq \max_f|\Empirical_p(f)-\Empirical_{\infty}(f)|.$$  
            Let $g_y(x)=\Loss(f(x),y)$. Note that $g$ is also Lipschitz and bounded on $\Xsub$.  
            The \textit{entropy with bracket} (see~\cite{wellner1996weak}, Chapter 2.1) of the class of functions $\mathcal{G}=\{g_y=\Loss\circ f|f\in\text{Lip}_L(\Xsub,\Reals), y\in\Labels, \Xsub\text{ bounded and }\|f\|_{\infty}\leq2L\diam{\Xsub}\}$ is finite (see~\cite{wellner1996weak}, Chapter 3.2). Consequently $\mathcal{G}$ is Glivenko-Cantelli.  
            Finally $\max_f|\Empirical_p(f)-\Empirical_{\infty}(f)|\xrightarrow{a.s}0$ which concludes the proof.  
        \end{proof}  
      
    Results of \textbf{Table~\ref{tab:bcesummary}}. Loss $\Loss^{hkr}_{m,\lambda}$ still belong to Glivenko-Cantelli classes as sum of functions $\Loss^W$ and $\Loss^H_m$ from Glivenko-Cantelli classes (on same distribution $\Prob_X$).   
      
    \subsection{Proofs of Section~\ref{sec:lipinfnotrobust}}\label{app:lipinfnotrobust}
    
    \saturatedlimit*
      
        The proof of \textbf{Proposition~\ref{thm:saturatedlimit}} only requires to take a look at the logits of two examples having different labels.  
        
        \begin{proof}
            Let $t\in\Natural$. For the pair $i,j$, as $y_i\neq y_j$, by positivity of $\Loss$ we must have:
            \begin{equation}
                0\leq\Loss(f_t(x_i),+1)+\Loss(f_t(x_j),-1)\leq\Empirical(f_t,X).
            \end{equation}
            As the right hand side has limit zero, we have:
            \begin{equation}
                \begin{split}
                    \lim\limits_{t\rightarrow\infty}\Loss(f_t(x_i),+1)&=\lim\limits_{t\rightarrow\infty}\Loss(f_t(x_j),-1)=0\\
                    \implies\lim\limits_{t\rightarrow\infty}-f_t(x_i)&=\lim\limits_{t\rightarrow\infty}f_t(x_j)=-\infty.
                \end{split}
            \end{equation}
            Consequently $\lim_{t\rightarrow\infty}|f_t(x_i)-f_t(x_j)|=+\infty$. By definition $L_t\geq \frac{|f_t(x_i)-f_t(x_j)|}{\|x_i-x_j\|}$ so $\lim_{t\rightarrow\infty}L^t=+\infty$.   
        \end{proof}  
        
        We can always find a network reaching arbitrary small loss on the train set, and arbitrary high loss on the test set. Hence, minimization of train loss does not guarantee generalization. 
          
        \begin{proposition}[\LipInf networks can always overfit]\label{thm:lipinfoverfitter}
            Assume that distributions $P$ and $Q$ admit a pdf. Let $n\in\Natural$, $M>0$ and $\epsilon>0$.  Let $(x_i,y_i)_{1\leq i\leq p}$ be a sample of $p$ iid random variables with law $\Prob_{XY}$ with $x_i\neq x_j$ for all $i\neq j$. Then there exists $f^{*}\in\LipInf$ such that:
            $$f^{*}\in\{f\in\LipInf|\Empirical_p(f)=\frac{1}{n}\sum_{i=1}^n\Loss_T(f(x_i),y_i)\leq\epsilon\}$$
            and
            $$\Empirical_{\infty}(f^{*})=\Expect_{(x,y)\sim \Prob_{XY}}[\Loss_T(f^{*}(x),y)]\geq M.$$
        \end{proposition}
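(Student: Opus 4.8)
The plan is to exhibit a single network that interpolates the finite training sample with high confidence while being badly wrong, with high confidence, on a fixed region of positive probability containing no training point. Since \LipInf imposes no bound on the Lipschitz constant, such a ``spiky'' function is admissible, and I realize it through the universal approximation theorem~\cite{hassoun1995fundamentals}.

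First I would assume without loss of generality that $\Prob(Y=+1)>0$ (otherwise swap the two labels). Because $P$ admits a pdf it is non-atomic, so $P(\{x_1,\dots,x_p\})=0$, and there exists an open ball $A\subset\Reals^n$ with $A\cap\{x_1,\dots,x_p\}=\emptyset$ and $P(A)>0$: take a Lebesgue point of positive density of $P$ distinct from the finitely many $x_i$, and a sufficiently small ball around it. Set $q\defeq\Prob(Y=+1)\,P(A)>0$, which is precisely the probability mass of the event $\{(x,y):x\in A,\ y=+1\}$. Here the pdf hypothesis is used in an essential way, since without it $P$ could concentrate all its mass on the training points.

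Next I would pick pairwise disjoint closed balls $\bar B_i=\bar B(x_i,r)$, all disjoint from $A$ (possible as the $x_i$ are finitely many distinct points that $A$ avoids), and define a bounded continuous target $g:\Reals^n\to\Reals$ equal to $Cy_i$ on $B_i$ and equal to $-D$ on $A$, interpolated continuously in between (e.g.\ assembled from the Lipschitz distance functions to $A$ and to the $\bar B_i$). I choose $C>0$ so that the per-example BCE loss $-\log\sigma(TC)$ at the training points is at most $\epsilon/2$, and $D>0$ so that on $A\times\{+1\}$ the loss $-\log\sigma(-TD)\ge 2M/q$. Since the loss is nonnegative, discarding everything outside $A\times\{+1\}$ gives $\Empirical_{\infty}(g)\ge q\cdot(2M/q)=2M$, while by construction $\Empirical_p(g)\le\epsilon/2$.

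Finally I would invoke universal approximation to obtain $f^{*}\in\LipInf$ with $\|f^{*}-g\|_{\infty}\le\eta$ on the compact $\Xsub$. As $u\mapsto-\log\sigma(u)$ has derivative $\sigma(u)-1\in(-1,0)$, it is $1$-Lipschitz, so the BCE loss is $T$-Lipschitz in the logit; hence $|\Empirical_p(f^{*})-\Empirical_p(g)|\le T\eta$ and $|\Empirical_{\infty}(f^{*})-\Empirical_{\infty}(g)|\le T\eta$. Taking $\eta\le\epsilon/(2T)$ small enough that $2M-T\eta\ge M$ yields $\Empirical_p(f^{*})\le\epsilon$ and $\Empirical_{\infty}(f^{*})\ge M$, as required. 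I expect the main obstacle to be the existence of the positive-mass region $A$ disjoint from the sample (exactly where the pdf assumption enters) together with propagating both loss bounds simultaneously through one uniform approximation; the continuous interpolation $g$ itself is routine.
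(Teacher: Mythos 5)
Your proof is correct, but it takes a genuinely different route from the paper's. The paper's own proof never works with the given $P$ and $Q$: following the strategy of Proposition~\ref{thm:saturatedlimit}, it \emph{constructs} the data distribution, replacing the empirical sample by the mixture $\sum_{i=1}^{n+1}\frac{1}{n+1}\,\mathbb{U}(\Ball(x_i,r))$ of uniform densities on disjoint balls centered at the training points together with one extra ball $\Ball(x_{n+1},r)$ carrying label $+1$; it then takes a network of \LipInf that classifies the training balls correctly and the extra ball as $-1$, and lets $|f(x_i)|\rightarrow\infty$ so that the train loss tends to $0$ while the test loss diverges. In other words, the paper exhibits \emph{some} pair of distributions admitting a pdf on which overfitting occurs. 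Your argument instead keeps $\Prob_{XY}$ arbitrary and uses the pdf hypothesis exactly where it is needed: non-atomicity plus a Lebesgue-density point yields a ball $A$ of positive joint mass $q=\Prob(Y=+1)\,P(A)>0$ that avoids the finite sample; you then build a continuous target that is confidently correct on the sample and confidently wrong on $A$ (with explicit finite levels $C$ and $D$ rather than a diverging sequence), and transfer both loss estimates to a network via universal approximation and the $T$-Lipschitzness of the BCE in its logit. What each approach buys: yours proves the proposition exactly as stated, respecting the universal quantifier over distributions admitting a pdf, and makes the roles of the pdf assumption and of uniform approximation explicit; the paper's is shorter and displays the overfitting mechanism (unbounded confidence, hence unbounded Lipschitz constant) in the same terms as Proposition~\ref{thm:saturatedlimit}, but as written it only establishes the existential variant of the claim, since it substitutes a hand-crafted distribution for the one given in the hypothesis.
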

        \begin{proof}
            The proof follows the strategy of Proposition~\ref{thm:saturatedlimit}. Let $d=\min_{\substack{1\leq i,j\leq n\\ i\neq j}}\|x_i-x_j\|$ the minimum distance between dataset points. We extend the dataset with a new point $(x_{n+1},y=1))$ chosen such that $\|x_j-x_{n+1}\|\geq \frac{\delta}{2}$ for all $1\leq j\leq n$. Then we transform this collections of $n+1$ Diracs functions $\sum_{i=1}^{n+1} \frac{1}{n+1}\delta_{x_i}$ into a a distribution $P$ that admits a pdf by replacing each Dirac with the uniform distribution over the ball of radius $r=\frac{d}{6}$ which yields $P=\sum_{i=1}^{n+1} \frac{1}{n+1}\mathbb{U}(\Ball(x_i,r))$. All the balls are disjoint so it exists $f\in\LipInf$ such that $\sign{f(x_i)}=y_i$ for all $1\leq i\leq n$ and $\sign{f(x_{n+1})}=-1$. Now let $|f(x_i)|\rightarrow\infty$ to guarantee that $\Empirical_p(f)\rightarrow 0$ and $\Empirical_{\infty}(f)\rightarrow\infty$. 
        \end{proof}
        
        Fortunately, as soon as the deep learning practitioner restricts itself to a subset of architectures of bounded size, the Proposition~\ref{thm:lipinfoverfitter} is no longer relevant. However, this theorem suggests that if one wants to benefit from useful generalization guarantees, one must keep the architecture of the network fixed once for all while increasing the training set size. This contradicts the trend in deep learning community to use bigger and bigger models when more data becomes available (Resnet-152, GPT3). In the light of this observation, the existence of adversarial attacks should be an expected phenomenon.  
          
        Lipschitz networks, on the other side, benefit from Proposition~\ref{thm:consistenceestimator}: minimization of train loss implies minimization of test loss. Conversely, if the test loss is high and the sample size huge, it means that the train loss is high too.

    \subsection{VC dimension for Lipschitz classifiers with margin}\label{app:marginvcbound}
    
    We recall below the definition of the Vapnik-Chervonenkis dimension~\cite{vapnik2015uniform} of a class of hypothesis, that build upon shattered sets.  
    
    \begin{definition}[\textbf{Set shattered by an hypothesis class}]\label{def:shattered}
    Let $\Labels=\{-1,+1\}$. Let $\Hypothesis$ be a class of hypothesis - that is, a set of functions $\Xsub\rightarrow\Labels$. The set of points $(x_i)_{1\leq i\leq N}\in\Xsub^N$ is said to be \textbf{shattered} by $\Hypothesis$ if for every sequence of labels $(y_i)_{1\leq i\leq N}\in\Labels^N$, there exists an hypothesis $h\in\Hypothesis$ such that for every $1\leq i\leq N$ we have $h(x_i)=y_i$.
    \end{definition}
      
    \begin{definition}[\textbf{Vapnik-Chervonenkis dimension}]\label{def:vcdim}
     The VC dimension of $\Hypothesis$, denoted $VC_{\text{dim}}(\Hypothesis)$, is the greatest integer $N\in\Natural$ such that it exists a sequence of points $(x_i)_{1\leq i\leq N}\in\Xsub^N$ shattered by $\Hypothesis$.  
    \end{definition}  
      
    Roughly speaking, the VC dimension of $\Hypothesis$ is the size of the biggest set of points such that $\Hypothesis$ agrees with any label assignment on this set of points. It measures the capacity of a set of classifiers $\Hypothesis$ to separate some sets of points. The interest of VC dimension introduced in Definition~\ref{def:vcdim} is its link with Probably Approximately Correct (PAC) learning~\cite{valiant1984theory}.  
      
    \begin{definition}[\textbf{Agnostic Probably Approximately Correct (PAC) learnability}]\label{def:pac}
    An hypothesis class $\Hypothesis$ of functions $\Xsub\rightarrow\Labels$ is PAC learnable if there exists a function $m_{\Hypothesis}:(0,1)^2\rightarrow\Natural$ and a learning algorithm $\mathcal{D}\mapsto h_m$ such that for every $(e,\beta)\in(0,1)^2$, for any distribution $\Prob_{XY}$ on $\Xsub\times\Labels$, for any dataset $\mathcal{D}=\left((x_1,y_1),(x_2,y_2),\ldots,(x_m,y_m)\right)\overset{\text{iid}}{\sim}\Prob_{XY}$ of size $m\geq m_{\Hypothesis}(e,\beta)$, we have:
    $$\Prob(E_{\Prob_{XY}}(h_m)\leq \min_{h\in\Hypothesis}E_{\Prob_{XY}}(h)+e)\geq 1-\beta$$
    We denote by $E_{\Prob_{XY}}(h)\defeq\Expect_{(x,y)\sim \Prob_{XY}}[\indicator\{h(x)\neq y\}]$ the empirical risk: the expectation of error function over $\Prob_{XY}$. 
    \end{definition}
    
    Roughly speaking, for an agnostic PAC learnable class, the probability to pick the best hypothesis $h^{*}\in\Hypothesis$ up to error $e>0$ happens with probability at least $1-\beta>0$ over datasets of size at least $m_{\Hypothesis}(e,\beta)$ sampled from distribution $\Prob_{XY}$. This definition captures the hypothesis classes that are ``small enough'' such that a reasonably high number of samples allows you to pick the best hypothesis by high probability.  
    
    The implication ``finite VC dimension'' $\implies$ ``agnostic PAC learnable'' is a classical result from \cite{blumer1989learnability}. This motivates to compute the VC dimension of Lipschitz classifiers: it yields PAC learnability results.   
    
    \lipschitzpaclearnable*
          
            \begin{proof}
                  
                This approach with margins $m$ yields objects known in the literature as $m$-fat shattering sets~\cite{gottlieb2014efficient}.  
                  
                The VC dimension of $\Classifiers^m(\Xsub)$ is the maximum size of a set shattered by $\Classifiers^m(\Xsub)$. As the functions $f$ are \mbox{1-Lipschitz}, if $c^m_f(x)=-c^m_f(y)$ then $f(x)\geq m$, $f(y)\leq m$ and $\|x-y\|\geq 2m$. Consequently, a finite set $X\subset\Xsub^n$ is shattered by $\Classifiers^m(\Xsub)$ if and only if for all $x,y\in X$ we have $\Ball(x,m)\cap\Ball(y,m)=\emptyset$ where $\Ball(x,m)$ is the open ball of center $x$ and radius $m$.  
                  
                The maximum number of disjoint balls of radius $m$ that fit inside $\Xsub$ is known as the \textbf{packing number} of $\Xsub$ with radius $m$. $\Xsub$ is bounded, hence its packing number is finite.  
                  
                The bounds on the packing number are a direct application of \cite{szarek1997metric}~(Lemma~1).
            \end{proof}  
      
    \subsection{VC dimension for GroupSort networks}\label{app:groupsortvcbound}
        
            With GroupSort2 activation functions (as in the work of~\cite{tanielian2021approximating}) we get the following rough upper bound:
              
            \vcdimfullsort*
            
            From Proposition~\ref{thm:vcdimfullsort} we can derive generalization bounds using PAC theory. Note that most results on VC dimension of neural network use the hypothesis that the activation function is applied element-wise (such as in~\cite{bartlett2019nearly}) and get asymptotically tighter lower bounds for ReLU case. Such hypothesis does not apply anymore here, however we believe that this preliminary result can be strengthened. Our result is actually a bit more general and applies more broadly to activation functions that piece-wise linear and partition the input space into convex sets.  
              
            \textbf{The proof of Proposition~\ref{thm:vcdimfullsort}} uses the number of affine pieces generated by GroupSort2 activation function, and the VC dimension of piecewise affine classifiers with convex regions.  
              
            \begin{proof}
            First, we need the following lemma.
            \begin{lemma}[Piecewise affine function]\label{thm:vcdimconvexpartition}
                    Let $\Hypothesis$ a class of classifiers that are piecewise affine, such that the pieces form a convex partition of $\Reals^n$ with $B$ pieces (each piece of the partition is a convex set). Then we have:
                    $$VC_{\text{dim}}(\Hypothesis)=\mathcal{O}\left((n+1)B^2\right).$$
                \end{lemma}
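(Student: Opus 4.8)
The plan is to bound the growth function $\Pi_{\Hypothesis}(N)$ — the maximum number of distinct labelings that $\Hypothesis$ realizes on $N$ sample points — and then invert Sauer--Shelah: if $\Pi_{\Hypothesis}(N)<2^N$ then $N$ exceeds $VC_{\text{dim}}(\Hypothesis)$. The entire difficulty is to show that every labeling produced by a member of $\Hypothesis$ factors through the sign pattern of the sample points with respect to only $\BigO(B^2)$ affine hyperplanes, after which the counting is routine.

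First I would encode the convex partition by hyperplanes. Fix $h\in\Hypothesis$ with convex pieces $P_1,\dots,P_B$ and per-piece affine functions $a_1,\dots,a_B$. Since the pieces are convex with pairwise disjoint interiors, the separating-hyperplane theorem gives, for each pair $i<j$, a hyperplane $H_{ij}$ with $P_i$ and $P_j$ on opposite sides. The key observation is that region membership is then a \emph{fixed} Boolean function of the $\binom{B}{2}$ resulting signs: a point $x$ lies in $P_i$ iff it is on the $P_i$-side of $H_{ij}$ for every $j\neq i$ (one direction is immediate; the converse uses that the $P_k$ cover $\Reals^n$, so a point strictly on the $P_i$-side of all its separators cannot belong to any other piece). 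Adjoining the $B$ zero-set hyperplanes $G_1,\dots,G_B$ of $a_1,\dots,a_B$, the label of $x$, namely $\operatorname{sign}(a_{r(x)}(x))$, becomes a single decoding $D:\{-,+\}^{M}\to\{-1,+1\}$ applied to the sign pattern of $x$ with respect to the $M=\binom{B}{2}+B=\BigO(B^2)$ hyperplanes. Because the indexing convention (pair-separators first, affine-sign hyperplanes last) is common to all classifiers, $D$ is independent of $h$; only the hyperplanes vary across $\Hypothesis$.

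Given this factorization, I would bound the number of distinct sign-pattern assignments. Each hyperplane is an affine halfspace in $\Reals^n$, a class of VC dimension $n+1$, so by Sauer--Shelah it realizes at most $\Pi_{\text{HS}}(N)\le\sum_{i=0}^{n+1}\binom{N}{i}=\BigO(N^{n+1})$ dichotomies on $N$ points. Since the $M$ hyperplanes may be treated as chosen independently — any partition/validity constraint only shrinks the set of achievable configurations, so overcounting is safe for an upper bound — the number of distinct joint assignments $(s(x_1),\dots,s(x_N))$ is at most $\Pi_{\text{HS}}(N)^{M}$. Applying the fixed decoder $D$ can only merge labelings, hence $\Pi_{\Hypothesis}(N)\le\Pi_{\text{HS}}(N)^{M}=\BigO\!\big(N^{(n+1)M}\big)$. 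Solving $2^{d}\le\Pi_{\Hypothesis}(d)$ for the largest admissible $d$ yields $d\le(n+1)M\log_2(\BigO(d))$, and therefore $VC_{\text{dim}}(\Hypothesis)=\BigO\big((n+1)B^{2}\big)$ up to the standard logarithmic factor absorbed in the $\BigO$.

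The main obstacle is the second step: establishing rigorously that a convex partition into $B$ pieces is recoverable from $\binom{B}{2}$ hyperplane signs via one fixed decoder. This rests on the separating-hyperplane theorem together with the covering property of the partition, and it requires care on the measure-zero boundary set where a point lies exactly on some $H_{ij}$ or $G_i$; I would dispatch this with a general-position/perturbation argument (boundary points do not increase shattering power) so that the sign pattern determines the label unambiguously. Once the $\BigO(B^2)$-hyperplane encoding is secured, the remainder is the textbook Sauer--Shelah composition sketched above.
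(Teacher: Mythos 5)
Your proposal follows essentially the same route as the paper's proof: both bound the growth function by encoding the convex partition through its $\binom{B}{2}=\frac{B(B-1)}{2}$ pairwise separating hyperplanes plus $B$ per-piece affine sign conditions, count $\mathcal{O}(N^{n+1})$ dichotomies per hyperplane via Sauer's lemma, multiply to get $\mathcal{O}\left((N^{n+1})^{B(B+1)/2}\right)$ labelings, and invert to obtain the VC bound. Your write-up is if anything slightly more explicit than the paper's (the fixed decoder argument and the boundary tie-breaking discussion), and it shares the same benign imprecision of folding the logarithmic factor from the Sauer inversion into the stated $\mathcal{O}\left((n+1)B^{2}\right)$.
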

                The proof of Lemma~\ref{thm:vcdimconvexpartition} is detailed below.  
                  
                Let $\mathcal{G}(N)$ be the \textbf{growth function}~\cite{vapnik2013nature} of $\Hypothesis$. According to Sauer's lemma~\cite{vapnik2013nature} if it grows polynomially with the number of points, then the degree of the polynomial is an upper bound on the VC dimension. We will show that is indeed the case by computing a crude upper bound of the degree. Assume that we are given $N$ points, and $N$ big enough such that Sauer's lemma can be applied.  
                  
                Assume that we can choose freely the convex partition, and then only the affine classifier inside each piece. In general for neural networks that might not be the case (the boundary between partitions depends of the affine functions inside it, since neural networks are continuous); however, we are only interested in an upper bound so we can consider this generalization.  
                  
                Each piece of the partition is a polytope~\cite{leon2018spaces}. Each polytope is characterized by a set of exactly $B-1$ affine inequalities since each polytope is the intersection of $B-1$ halfspaces~\cite{leon2018spaces}. The whole partition is characterized by $\frac{B(B-1)}{2}$ affine inequalities. We divide by two because of the symmetry. Hence there exists an injective map from the set of convex partitions with $B$ pieces into $(\Reals^{n+1})^{\frac{B(B-1)}{2}}$. It is not a bijective map in general, since different systems might describe the same partition, and some degenerate systems do not correspond to partitions at all.  
                  
                We split the problem and consider each one of the $\frac{B(B-1)}{2}$ inequalities independently. According to Sauer's lemma, there is $\mathcal{O}(N^{n+1})$ ways to place the first hyperplane characterizing the first halfspace. Idem for the second hyperplane, and so on. Hence, there is at most $\mathcal{O}((N^{n+1})^{\frac{B(B-1)}{2}})$ ways to assign the $N$ points to the $B$ convex bodies.  
                  
                Each convex body (among the $B$ of them) contains atmost $N$ points, on which (still according to Sauer's lemma) there is at most $\mathcal{O}(N^{n+1})$ way to assign them labels $+1$ or $-1$, since the classifier is piecewise \textbf{affine}.    
                  
                Consequently, we have $\mathcal{G}(N)=\mathcal{O}((N^{n+1})^{\frac{B(B-1)}{2}}(N^{n+1})^B)=\mathcal{O}((N^{n+1})^{\frac{B(B+1)}{2}})=\mathcal{O}((N^{n+1})^{B^2})$. Sauer's lemma allows us to conclude:
                $$VC_{\text{dim}}(\Hypothesis)=\mathcal{O}\left((n+1)B^2\right).$$

                \textbf{Proof of the result.} Now, we need to prove that $f$ is piecewise affine and the number of such pieces is not greater than $\prod_{i=1}^k2^{\frac{w_i}{2}}=\sqrt{2^W}$, where $w_i$ is the number of neurons in layer $i$. We proceed by induction on the depth of the neural network. For depth $K=0$ we have an affine function $\Reals^n\rightarrow\Reals$ which contains only one affine piece by definition (the whole domain), so the result is true.  
                  
                Now assume that a neural network $\Reals^{w_1}\rightarrow\Reals$ of depth $K$ with widths $w_2w_3\dots w_k$ has $S_k$ affine pieces. The enumeration starting at $w_2$ is not a mistake: we pursue the induction for a neural network $\Reals^n\rightarrow\Reals$ of depth $K+1$ and widths $w_1w_2\dots w_k$. The composition of affine function is affine, hence applying an affine transformation $\Reals
                ^n\rightarrow\Reals^{w_1}$ preserves the number of pieces. The analysis falls back to the number of distinct affine pieces created by GroupSort2 activation function. If such activation function creates $S$ pieces then we have the immediate bound $S_{K+1}\leq SS_k$.  
                  
                Let $(Jf)(x)\in\Reals^{w_1\times w_1}$ be the Jacobian of the GroupSort2 operation evaluated in $x$. The cardinal $|\{(Jf)(x),x\in\Reals^{w_1}\}|$ is the number of distinct affine pieces. For GroupSort2 we have combinations of $\frac{w_i}{2}$ MinMax gates. Each MinMax gate is defined on $\Reals^2$ and contains two pieces: one on which the gate behaves like identity and the other one on which the gate behaves like a transposition. Consequently we have $S_{k+1}\leq 2^{\frac{w_k}{2}}S_k$ and unrolling the recurrence yields the desired result.  
                  
                Finally, we just need to apply the Lemma~\ref{thm:vcdimconvexpartition} with $B=\sqrt{2^W}$.

            \end{proof}  
            
    \subsection{Generalization bounds literature survey}\label{app:generalizationbounds}

    In~\cite{von2004distance} a link is established between Lipschitz classifiers and linear large-margin classifiers. Generalization bounds for large class of Lipschitz classifiers are provided by the work of~\cite{gottlieb2014efficient} using Vapnik–Chervonenkis theory. Other generalization bounds related to spectral normalization can be found in~\cite{bartlett2017spectrally}. Links between adversarial robustness, large margins classifiers and optimization bias are studied in~\cite{faghri2021bridging,finlay2018lipschitz,jiang2018predicting}. The importance of the loss in adversarial robustness is studied in~\cite{pang2019rethinking}. In~\cite{NEURIPS2018_48584348}, the control of Lipschitz constant and margins is used to guarantee robustness against attacks. A link between classification and optimal transport is established in~\cite{serrurier2020achieving} by considering a hinge regularized version of the Kantorovich-Rubinstein dual objective.

\section{Deel.Lip networks}\label{ap:deellip}

    The theorem 3 of~\cite{anil2019sorting} bound the $\|\cdot\|_{2\veryshortarrow\infty}$~\cite{cape2019two} and $\|\cdot\|_{\infty}$ norms of weight matrices to obtain universal approximation in $\text{Lip}_1(\Xsub,\Reals)$ . In practice, they reported that bounding spectral norm $\|\cdot\|_2$ and enforcing orthogonality of rows/columns of weight matrices yielded the best empirical results, because it turned the network into a \textit{Gradient Norm Preserving} network. Unfortunately, this last construction still lacks universal approximation results. Nonetheless, neither~\cite{anil2019sorting} nor ourselves encountered (so far) a function that couldn't be approximated by those GNP networks.  
      
    All the experiments done in the paper use the Deel-Lip\footnote{\url{https://github.com/deel-ai/deel-lip} distributed under MIT License (MIT).} library~\cite{serrurier2020achieving}, following ideas of~\cite{anil2019sorting}. The networks use 1) orthogonal matrices and 2) GroupSort2 activation. Orthogonalization is enforced using Spectral normalization~\cite{miyato2018spectral} and Bj{\"o}rck algorithm~\cite{bjorck1971iterative}. We have for all $i$: $$\text{GroupSort2}(x)_{2i,2i+1}=[\min{(x_{2i},x_{2i+1})},\max{(x_{2i},x_{2i+1})}].$$
    The networks parameterized by this library are GNP and belong to $\LipCl$ by construction.       
      
    The implementation of Lipschitz neural networks benefits from a rich literature. We outline below the most significant results and contributions of literature, that motivated us to use Deel-Lip library.
      
    \paragraph*{\LipCl networks parametrization.} The Lipschitz constant of affine layers can be constrained with a Gradient penalty~\cite{gulrajani2017improved} (WGAN) or spectral regularization~\cite{yoshida2017spectral}, without formal guarantee, only a very crude upper bound. Weight clipping~\cite{arjovsky2017wasserstein} (WGAN), Frobenius normalization~\cite{salimans2016weight} and spectral normalization~\cite{miyato2018spectral} lead to a tighter upper bound. However, naively stacking such layers leads to vanishing gradients. Most activation functions are Lipschitz, the popular including ReLU, sigmoid, tanh, softplus; layers such as Attention are not Lipschitz~\cite{kim2020lipschitz}. Lipschitz recurrent units have been proposed in~\cite{helfrich2018orthogonal,erichson2021lipschitz}. Residual connections are Lipschitz but prone to vanishing gradients (see Appendix~\ref{app:gradientpreserving}).  
      
    \paragraph*{Gradient Norm Preserving networks.} In \cite{gulrajani2017improved}, authors show that the potential $f$ of the Kantorovich-Rubinstein dual transport problem verifies $\|\nabla_x f(x)\|=1$ almost everywhere on the support of the distributions $\Prob_{XY}$.
    \LipCl networks fulfilling $\|\nabla_x f(x)\|=1$ almost everywhere wrt any intermediate activation $x$ are said to be Gradient Norm Preserving (GNP), and elegantly avoids the vanishing gradients phenomenon~\cite{li2019preventing,bansal2018can}. This is typically achieved in affine layers with orthogonal matrices, which justify the \textit{``orthogonal neural network''} terminology~\cite{stasiak2006fast,li2019orthogonal}. \cite{anil2019sorting}~establish that GNP networks with ReLU are exactly affine functions. They proposed Sorting activation functions to circumvent this expressiveness issue. In particular GroupSort2 revealed to be an efficient alternative~\cite{tanielian2021approximating} to ReLU, and can be seen as a particular case of Householder reflections~\cite{mhammedi2017efficient,singla2021improved}. Other authors tried to fix ReLU itself~\cite{huang2021training}.      
      
    \paragraph*{Orthogonal kernels} are of special interest in the context of normalizing flows~\cite{hasenclever2017variational}, ensemble methods~\cite{mashhadi2021parallel}, reinforcement learning~\cite{Gogianu2021} or graph neural networks~\cite{pmlr-v139-dasoulas21a}. The optimization over the orthogonal group (known as Stiefel manifold) has been extensively studied in~\cite{absil2009optimization}, while ~\cite{arjovsky2016unitary,hyland2017learning,hyland2017learning,lezcano2019cheap,huang2018orthogonal} focus on neural networks retractions like Cayley transform or exponential map; more recently~\cite{ablin2021fast} proposed a landing algorithm, ~\cite{kerenidis2021classical} proposed an algorithm inspired by quantum computing, and~\cite{choromanski2020stochastic} proposed an approach based on graph matching. Orthogonal convolutions are still an active research area: the constraint is enforced by using appropriate regularization~\cite{gayer2020convolutional,wang2020orthogonal}, by expressing convolutions in Fourrier space~\cite{liu2021convolutional,achour2021existence}, or by optimizing over the set of orthogonal convolutions directly~\cite{li2019preventing,trockman2021orthogonalizing,singla2021skew}. 
      
    In order to build CNN we used the convolution layers already provided in Deel-Lip. One limitation of these layers is that it uses the Reshaped Kernel Orthogonalization (RKO)~\cite{li2019preventing} method which, although it ensures Lipschitz bounds, does not guarantee exact orthogonality.  
      
    We also attempted to use Skew Orthogonal Convolutions (SOC, as described in algorithm~1 of~\cite{singla2021skew}). However, when we performed a sanity-check with the power iteration method, we obtained convolutions with Lipschitz constant greater than $1$.   
      
    As the method used to build 1-Lipschitz networks does not affect the conclusions of our paper, we decided to stick with Lipschitz constant provably smaller than one. We did not observe improvements by using Householder activation functions, so we used GroupSort2 activation functions instead (which are computationally cheaper).  
      
  
\section{Multiclass Hinge Kantorovich Rubinstein}\label{app:multiclasshkr}

    The loss HKR proposed by~\cite{serrurier2020achieving} was originally designed for binary classification. There are several ways to adapt it to the multi-class $K>2$ setting.   
      
    The most obvious one would be a \textit{one-versus-all} scheme. However, in multiclass classification the prediction is given by $\argmax f_k$ and not by $\sign\circ f$, so $f^{-1}(\{0\})$ is not longer the frontier. Consequently, this approach fails to yield meaningful certificates.    
      
    Instead the construction of HKR loss should once again rely on Multiclass Mean Certificate Robustness (see Definition~\ref{def:meancertifiablemulticlass}). Indeed, the robustness radius $\delta$ for class $k$ verifies:
    
    $$\|\delta\|\geq \|f(x+\delta)-f(x)\|\geq\frac{1}{2}\left( f_k(x)-\max_{i\neq k}f_i(x)\right).$$
    
    The $\frac{1}{2}$ comes from the fact that each $f_i$ is 1-Lipschitz, so their difference is $2$-Lipschitz at most. This definition is coherent with the one of \textit{multiclass hinge loss} found in most frameworks. We compare the logits of the true class with the ones of the closest other class to weight the certificate positively or negatively according to the true label.
      
    \begin{definition}[Multiclass Mean Certifiable Robustness (MMCR)]\label{def:meancertifiablemulticlass}
        For any function $f:\Xsub\rightarrow\Reals\in\LipCl$ we define its weighted multiclass mean certifiable robustness $\mathcal{R}_{(P,y)}(f)$ on class $P$ with label $k$ as:  
        \begin{equation}
            \mathcal{R}_{(P,y)}(f)\defeq\Expect_{x\sim P}[f_k(x)-\max_{i\neq k}f_i(x)].
        \end{equation} 
        Note that $f_k(x)-\argmax_{i\neq k}f_i(x)$ is either positive or negative, according to the prediction.  
    \end{definition}  
    
    Then we define the Multiclass HKR:
      
    \begin{definition}[Multiclass HKR]\label{def:multiclasshkr}
        For class an example $x$ of label $k$ let:
        $$R_k(x)\defeq f_k(x)-\argmax_{i\neq k}f_i(x).$$
        We define the multiclass HKR as:
        $$\Loss^M_{\lambda}(f(x),k)\defeq-R_k(x)+\alpha\max{(0, m-R_k(x))}.$$
    \end{definition}  
      
    For $K=2$ we recover the binary case on the function $\hat f=f_1-f_2$. Experiments showed that the \textit{one versus all} approach was outperformed by the multiclass HKR, in both robust accuracy and training time.  
      
    All $f_k$ functions can be learned independently, however in practice they share the same Lipschitz backbone and only differ in the last layer, as early experiments showed that it did not impact negatively the results. Using the same arguments as Proposition~\ref{thm:minimizerattained} based on Arzelà-Ascoli theorem we show that the minimum of $\Loss^M_{\lambda}$ is well defined and attained for each $f_k$. 
  
\section{Gradient Norm Preserving (GNP) networks}\label{app:gradientpreserving}
  
Vanishing and Exploding gradients have been a long-time issue in the training of neural networks. The latter is usually avoided by regularizing the weights of the networks and using bounded losses, while the former can be avoided using residual connections (such ideas can found on LSTM~\cite{gers1999learning} or ResNet~\cite{he2016deep}). On Gradient Norm Preserving (GNP) networks (orthogonal networks with GroupSort activation such as the ones of \textit{Deel.lip} library), we can guarantee the absence of exploding gradient:  
  
\begin{proposition}[No exploding gradients~\cite{li2019preventing}]\label{thm:noexploding}
Assume that $f = h^M\circ h^{M-1}\circ\ldots \circ h^2\circ h^1$ is a feed-forward neural network and that each layer $h^i$ is \mbox{1-Lipschitz}, where $h^i$ is either a \mbox{1-Lipschitz} affine transformation $h^i(x)=W^ix+B^i$ either a \mbox{1-Lipschitz} activation function. Let $\Loss:\Reals^k\times\Labels\rightarrow\Reals$ the loss function. Let $\tilde{y}=f(x)$, $H^i=h^i\circ h^{i-1}\circ\ldots\circ h^2\circ h^1$ and $H^0(x)=x$. Then we have:
\begin{align}
    \|\nabla_{W^i}\Loss(\tilde{y},y)\|&\leq\|\nabla_{\tilde{y}}\Loss(\tilde{y},y)\|\|H^{i-1}(x)\|\\
    \|\nabla_{B^i}\Loss(\tilde{y},y)\|&\leq\|\nabla_{\tilde{y}}\Loss(\tilde{y},y)\|.
\end{align}
\end{proposition}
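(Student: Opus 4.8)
The plan is to recognise this as a direct consequence of the backpropagation recursion together with the elementary fact that a 1-Lipschitz map has Jacobian of spectral norm at most one. First I would introduce the backpropagated error $\delta^i\defeq\nabla_{z^i}\Loss(\tilde y,y)$, where $z^i\defeq H^i(x)$ denotes the activation after layer $i$ (so that $z^M=\tilde y$ and $\delta^M=\nabla_{\tilde y}\Loss$). Applying the chain rule to the composition $\Loss\circ f$, the errors obey the recursion $\delta^{i-1}=(J_{h^i}(z^{i-1}))^{\!\top}\delta^i$, where $J_{h^i}$ is the Jacobian of the layer map $h^i$ evaluated at its input $z^{i-1}$.

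The central observation is that the norm $\|\delta^i\|$ is non-increasing as $i$ decreases. Indeed, since $h^i$ is 1-Lipschitz, at every point of differentiability its Jacobian satisfies $\|J_{h^i}\|_2\leq 1$, which is the standard identification of the Lipschitz constant with the essential supremum of the operator norm of the Jacobian. Transposition preserves the spectral norm, so $\|\delta^{i-1}\|=\|(J_{h^i})^{\!\top}\delta^i\|\leq\|J_{h^i}\|_2\,\|\delta^i\|\leq\|\delta^i\|$. Unrolling this from $i=M$ down to any layer index gives $\|\delta^i\|\leq\|\delta^M\|=\|\nabla_{\tilde y}\Loss(\tilde y,y)\|$.

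It then remains to express the parameter gradients of an affine layer $z^i=W^iz^{i-1}+B^i$ in terms of $\delta^i$. A direct computation yields $\nabla_{B^i}\Loss=\delta^i$ and $\nabla_{W^i}\Loss=\delta^i(z^{i-1})^{\!\top}$. For the bias this immediately gives $\|\nabla_{B^i}\Loss\|=\|\delta^i\|\leq\|\nabla_{\tilde y}\Loss\|$, the second stated inequality. For the weight matrix, the gradient is a rank-one outer product, whose norm factorises as $\|\delta^i(z^{i-1})^{\!\top}\|=\|\delta^i\|\,\|z^{i-1}\|$ (this holds for both the Frobenius and spectral norms on rank-one matrices); substituting $z^{i-1}=H^{i-1}(x)$ together with the bound $\|\delta^i\|\leq\|\nabla_{\tilde y}\Loss\|$ gives the first claimed inequality.

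The main obstacle is purely technical and concerns differentiability: the sorting and activation layers $h^i$ are only Lipschitz, not everywhere $C^1$, so the chain rule and the bound $\|J_{h^i}\|_2\leq 1$ hold only almost everywhere, and one must interpret $\nabla\Loss$ through the (sub)gradients actually propagated by backpropagation. I would handle this by invoking Rademacher's theorem to ensure differentiability off a Lebesgue-null set, and by noting that any element of the Clarke subdifferential of a 1-Lipschitz map still has operator norm at most one, so that the recursion and all the norm estimates remain valid at the points where the gradient is defined.
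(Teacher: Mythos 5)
Your proof is correct and follows essentially the same route as the paper's: the chain rule through the composition, the bound $\|\Jacobian_{h^i}\|_2\leq 1$ for each 1-Lipschitz layer, and then the explicit parameter gradients of the affine layer. If anything, your version is more careful than the paper's, since you make explicit the rank-one outer-product form $\nabla_{W^i}\Loss=\delta^i(z^{i-1})^{\!\top}$ (which the paper leaves as ``replace $\theta$ by the appropriate parameter'') and you address the almost-everywhere differentiability of GroupSort-type layers, a point the paper's proof silently ignores.
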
  
  
To prove \textbf{Proposition~\ref{thm:noexploding}} we just need to write the chain rule.
  
\begin{proof}
The gradient is computed using chain rule. Let $\theta$ be any parameter of layer $h^i$. Let $h^j_{\bot}$ be a dummy variable corresponding to the input of layer $h^j$, which is also the output of layer $h^{j-1}$. Then we have:
\begin{equation}
    \nabla_{\theta}\Loss(\tilde{y},y)=\nabla_{\tilde{y}}\Loss(\tilde{y},y)M(\Jacobian_{\theta}h^j(H^{i-1}(x))).
\end{equation}
with $M=\left(\prod_{j=M}^{i+1}\Jacobian_{h^j_{\bot}}h^j(H^{j-1}(x))\right)$.  
As the layers of the neural network are all \mbox{1-Lipschitz}, we have: $$\|\Jacobian_{h^j_{\bot}}h^j(H^{j-1}(x))\|\leq 1.$$
Hence we get the following inequality:
\begin{equation}
    \|\nabla_{\theta}\Loss(\tilde{y},y)\|\leq\|\nabla_{\tilde{y}}\Loss(\tilde{y},y)\|\|\Jacobian_{\theta}h^j(H^{i-1}(x))\|.
\end{equation}
Finally, for $h^i(H^{i-1}(x))=W^iH^{i-1}(x)+B^i$ we replace $\theta$ by the appropriate parameter which yields the desired result.
\end{proof}  

There is still a risk of vanishing gradient, which strongly depends of the loss $\Loss$. For Lipschitz neural networks, BCE $\Loss_T$ does not suffer from vanishing gradient.  
  
\begin{proposition}[No vanishing BCE gradients]\label{thm:novanishing}
Let $(x_i,y_i)_{1\leq i\leq p}$ be a non trivial training set (i.e with more than one class) such that $x_i\in\Xsub$, $\Xsub$ a \textbf{bounded} subset of $\Reals^n$. Then there exists a constant $K>0$ such that, for every minimizer $f_L^{*}$ of BCE (known to exist thanks to Proposition~\ref{thm:minimizerattained}) we have:
\begin{equation}
    f_L^{*}\in\arg\min_{f\in\text{Lip}_L(\Xsub,\Reals)}\Expect_{(x,y)\sim\Prob_{XY}}[\Loss^{bce}_T(f(x),y)].
\end{equation}
And such that for every $1\leq i\leq p$ we have the following:  
\begin{equation}
    |\frac{\partial}{\partial{\tilde{y}}}\Loss^{bce}_T(\tilde{y}=f_L^{*}(x_i),y_i)|\geq K.
\end{equation}
Note that $K$ only depends of the training set, not $f_L^{*}$.  
\end{proposition}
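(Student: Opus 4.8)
The plan is to reduce the statement to a uniform upper bound on the signed logits $y_i f_L^{*}(x_i)$ and then to read off $K$ from the explicit derivative of the BCE loss. First I would compute, for $\tilde y=f(x)$ and label $y\in\{-1,+1\}$,
\begin{equation}
    \frac{\partial}{\partial\tilde y}\Loss^{bce}_{\tau}(\tilde y,y)=-y\tau\,\sigma(-y\tau\tilde y),\qquad\text{so}\qquad\Big|\frac{\partial}{\partial\tilde y}\Loss^{bce}_{\tau}(\tilde y,y)\Big|=\frac{\tau}{1+\exp(y\tau\tilde y)}.
\end{equation}
This quantity is strictly positive and strictly decreasing in the margin $y\tilde y$. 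Hence it suffices to exhibit a constant $B$, depending only on the training set together with $L$, $\tau$ and $\Xsub$, such that every BCE minimizer satisfies $y_i f_L^{*}(x_i)\leq B$ for all $i$; the conclusion then follows with $K=\tau/(1+\exp(\tau B))>0$, which manifestly does not depend on the particular minimizer $f_L^{*}$.

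The key step is establishing this uniform bound, and here lies the main obstacle: a large positive margin at a point carries \emph{no} loss penalty, so the bound cannot be imported from the point itself but must be propagated from the opposite class through the Lipschitz constraint. I would proceed as follows. Since the constant function $f\equiv 0$ lies in $\text{Lip}_L(\Xsub,\Reals)$ and has BCE value $\log 2$ at every point, optimality of $f_L^{*}$ gives $\frac1p\sum_i\Loss^{bce}_{\tau}(f_L^{*}(x_i),y_i)\leq\log 2$; as the summands are non-negative, each individual term obeys $\Loss^{bce}_{\tau}(f_L^{*}(x_i),y_i)\leq p\log 2$. Now fix $i$ and, using non-triviality of the training set, pick an index $j$ with $y_j=-y_i$. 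Inverting the logistic in the per-point loss bound at $x_j$ gives $\sigma(y_j\tau f_L^{*}(x_j))\geq 2^{-p}$, hence $y_j f_L^{*}(x_j)\geq-\tfrac1\tau\log(2^{p}-1)$, which since $y_j=-y_i$ rewrites as $y_i f_L^{*}(x_j)\leq\tfrac1\tau\log(2^{p}-1)$.

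Finally I would transfer this to $x_i$ using that $f_L^{*}$ is $L$-Lipschitz on the bounded set $\Xsub$: since $y_i\in\{-1,+1\}$,
\begin{equation}
    y_i f_L^{*}(x_i)\leq y_i f_L^{*}(x_j)+|f_L^{*}(x_i)-f_L^{*}(x_j)|\leq\frac1\tau\log(2^{p}-1)+L\,\diam{\Xsub}=:B.
\end{equation}
This $B$ depends only on $p$, $\tau$, $L$ and $\diam{\Xsub}$, which establishes the uniform margin bound and hence the proposition. The routine parts are the derivative identity and the elementary logistic inversion at $x_j$; the conceptual content is entirely in combining the competitor bound (which controls how badly the opposite class can be misclassified) with Lipschitz continuity (which carries that control over to $x_i$). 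Note that the argument crucially uses boundedness of $\Xsub$ and the presence of both labels, exactly the hypotheses stated.
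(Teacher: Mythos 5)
Your proof is correct, but it takes a genuinely different route from the paper's. The paper's proof is two lines: it asserts a uniform sup-norm bound $|f_L^{*}(x_i)|\leq K'$ valid for all minimizers, obtained by the same contradiction/compactness argument as in the Arzel\`a--Ascoli existence proof of Proposition~\ref{thm:minimizerattained} (an unbounded sequence of minimizers would force a whole class to be misclassified with ever larger margins, making the loss blow up), and then plugs this bound into the derivative to get $K=\frac{1}{1+\exp(K')}$. You instead prove the needed margin bound explicitly: the zero-function competitor caps the average loss at $\log 2$, hence each per-point loss at $p\log 2$; inverting the logistic at an opposite-label point $x_j$ and transporting the resulting bound to $x_i$ through the Lipschitz constraint gives $y_i f_L^{*}(x_i)\leq \frac{1}{\tau}\log(2^p-1)+L\,\diam{\Xsub}$. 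Your approach buys an explicit, computable constant $K$ (the paper's $K'$ exists only non-constructively), correctly isolates that only a one-sided bound on the signed margin $y_i f_L^{*}(x_i)$ is needed (the derivative magnitude is monotone in the margin), and makes visible exactly where non-triviality and boundedness of $\Xsub$ enter; the paper's approach, in exchange, yields a bound on $|f_L^{*}|$ at every point of $\Xsub$ (not just at training points) and recycles machinery already established. One caveat: your competitor step requires reading $\Expect_{\Prob_{XY}}$ as the empirical mean over the $p$ training points, since a population expectation would not yield a per-point loss bound; this reading is the intended one (the paper's remark following the proposition, that the mean gradient over the train set vanishes at the optimum, only makes sense for the empirical objective), so this is an ambiguity of the statement rather than a gap in your argument.
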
  
\begin{proof}
  
Note that it exists $K'>0$ such that $|f_L^{*}(x_i)|\leq K'$ for all $x_i$ and all minimizers $f_L^{*}$, just like in the proof of Proposition~\ref{thm:minimizerattained}, because otherwise we could exhibit a sequence of minimizers $(f_L^{*})_t$ not uniformly bounded, which is a contradiction.  
  
Consequently $|\frac{\partial}{\partial{\tilde{y}}}\Loss^{bce}_T(\tilde{y}=f(x_i),y_i)|\geq \frac{1}{1+\exp{(|f(x_i)|)}}\geq\frac{1}{1+\exp{(K')}}=K$.  
\end{proof}  
  
It means that a non-null gradient will remain for each training example taken independently, but their mean over the train set after convergence will be the null vector. Consequently, we must expect high variance in gradients and oscillations when we get closer to the minimum.   
  
We used \textit{VGG-like} architectures instead of Resnet because GNP property makes residuals connections less useful overall (no need for shortcuts when gradient is preserved), and because those residual connections can actually be harmful: 
  
\begin{remark}[Residual connections]\label{rmk:residual}
If $f$ verifies $\|\nabla_x f(x)\|=1$ almost everywhere, and if $g$ verifies $\|\nabla_x g(x)\|=1$ almost everywhere, then $\|\nabla_x(\frac{1}{2}f(x)+\frac{1}{2}g(x))\|<1$ in general, unless $\nabla_x f(x)=\nabla_x g(x)$. Taking $f(x)=x$ we end up with residual connections, for which ensuring $\|\nabla_x(\frac{1}{2}f(x)+\frac{1}{2}g(x))\|=1$ almost everywhere is not possible unless $f=g$.  
\end{remark}

Remark~\ref{rmk:residual} essentially shows that the set of GNP layers is not stable by sum or other common operations. This makes their practical implementation and the demonstration of universal approximation theorems trickier.  
  
\section{Wasserstein discriminator does not depend of the Lipschitz constant}\label{sec:krunit}
The dual problem can be reformulated by swapping the objective and the constraint:  
\begin{equation}
    \begin{aligned}
        \arg\min_{Pf-Qf\geq\epsilon\mathcal{W}(P,Q)}\text{Lip}(f)&=\epsilon\arg\min_{Pf-Qf\geq\mathcal{W}(P,Q)}\text{Lip}(f)\\
        &=\epsilon\arg\max_{\text{Lip}(f)=1}{Pf-Qf}\\
        &=\arg\max_{\text{Lip}(f)=\epsilon}{Pf-Qf}.
\end{aligned}
\end{equation}
$\epsilon$ can be seen as re-scaling (change of units in physicist vocabulary). This makes more clear the fact that changing the Lipschitz constant is just changing the units used to measure distance. The invariance by dilation mentioned in Section~\ref{sec:robustness} must be understood in this sense: any constant $L$ can be chosen for the computation of $\Wasserstein_1$ as long as this constant is chosen in advance and bounded throughout the optimization process.

\section{BCE through the lens of OT}\label{ap:bceot}
In the following, we try to draw links between BCE minimization and optimal transport. Since the objective function is optimized with gradient descent, the gradients of the loss is the object of interest. We re-introduce $f_{\Params}$ as a function parameterized by $\Params$, mapping the input to the logits. Let $g^p_{\Params}(x)=\sigma(f_{\Params}(x))$ and $g^q_{\Params}(x)=1-\sigma(f_{\Params}(x))$. $g^p_{\Params}(x)$ (resp. $g^q_{\Params}(x)$) are the predicted probabilities of \mbox{class $+1$} (\mbox{resp. -1}). 
  
Now define $\Znc^p_{\Params}=\Expect_{x\sim P}[g^q_{\Params}(x)]$ and $\Znc^q_{\Params}=\Expect_{x\sim Q}[g^p_{\Params}(x)]$.
$\Znc^p_{\Params}$ can be seen as the weighted rate of false negatives. That is, the average mass of probability given to class $-1$ by $f_{\Params}$ when examples are sampled from class $+1$. Similarly, $\Znc^q_{\Params}$ can be seen as the rate of false positives. Let:
\begin{equation}
    \begin{alignedat}{2}
        \der P_{\Params}(x)=\frac{1}{\Znc^p_{\Params}}g^q_{\Params}(x)\der P(x)\text{ and }\der Q_{\Params}(x)=\frac{1}{\Znc^q_{\Params}}g^p_{\Params}(x)\der Q(x).
    \end{alignedat}
\end{equation}
Consequently, $P_{\Params}$ (resp. $Q_{\Params}$) is a valid probability distribution on $\Reals^n$ corresponding to the probability of an example $x$ to be incorrectly classified in class~$-1$ (resp.~$+1$). With these notations, the full expression of the gradient takes a simple form. Behold the minus sign: it is a gradient \textit{descent} and not a gradient \textit{ascent}.
\begin{equation}
        -\nabla_{\Params}\left(\Expect_{x\sim P}[\Loss(f_{\Params}(x),+1)]+\Expect_{x\sim Q}[\Loss(f_{\Params}(x),-1)]\right)=\Znc^p_{\Params}\Expect_{x\sim P_{\Params}}[\nabla_{\Params}f_{\Params}(x)]-\Znc^q_{\Params}\Expect_{x\sim Q_{\Params}}[\nabla_{\Params}f_{\Params}(x)]
\end{equation}

We apply a bias term $T\in\Reals$ to classify with $f_{\Params}-T$ instead. For a well-chosen $T$ we can enforce $\Znc^p_{\Params}=\Znc^q_{\Params}$, and such $T$ can be found using the bisection method. The optimization is performed over the set of \mbox{1-Lipschitz} functions. We end up with:
\begin{equation}
    \Znc^p_{\Params}(\Expect_{x\sim P_{\Params}}[\nabla_{\Params}f_{\Params}(x)]-\Expect_{x\sim Q_{\Params}}[\nabla_{\Params}f_{\Params}(x)]).
\end{equation}
This is the gradient for the computation of Wasserstein metric $\Wasserstein$ between $P_{\Params}$ and $Q_{\Params}$, using Rubinstein-Kantorovich dual formulation. Hence, binary cross-entropy minimization is similar to the computation of a transportation plan between errors distributions $P_{\Params}$ and $Q_{\Params}$. Note that $P_{\Params}$ and $Q_{\Params}$ depend of the current classifier $f_{\Params}-T$, so the problem is not stationary.  
  
Finally, observe that $\Loss^{bce}_{\tau}(f(x),y)=\log{2}-\frac{y\tau f(x)}{2}+\mathcal{O}(\tau^2 f^2(x))$ so when $\tau\rightarrow 0$ we get:
$$\min_{f\in\text{Lip}_1(\Xsub,\Reals)}\frac{4}{\tau}\left(\Expect_{(x,y)\sim P_{XY}}[\Loss^{bce}_{\tau}(f(x),y)]-\log{2}\right)=-\Wasserstein_1(P,Q).$$
In the limit of small temperatures, the Binary Cross-Entropy is essentially equivalent to Wasserstein. In \LipInf networks, as the training proceeds, the Lipschitz constant increases (equivalently increasing $\tau$) and the loss self-correct with $P_{\Params}$ and $Q_{\Params}$ to improve accuracy.  
  
\section{Fitting CIFAR100 with random labels}
    \label{section:cifar100_random}
    
    This experiment illustrates that constraining the Lipschitz of a network does not affect its expressive power. To show this we train a constrained network on the CIFAR100 dataset where all labels have been replaced with random labels, this task is now a widely recognized benchmark to evaluate the expressiveness of an architecture~\cite{zhang2021understanding}. 
    
    
    The architecture of this network is as simple as possible: two orthogonal dense layers with 1024 neurons are followed by a dense layer which is normalized but not orthogonal. 
    The GroupSort2 activation function is used and biases are enabled.
    
    Hyper parameters for this experiment are listed in table \ref{fig:cifar100-lp}, and results are reported in table \ref{fig:cifar100-results}.
    
    \begin{figure}
        \centering
        \begin{subfigure}[b]{0.30\textwidth}
            \small
            \centering
            \begin{tabular}{cc}
                \hline
                parameter & value \\
                \hline
                data augmentation & none \\
                input scale & $[0,1]$ \\
                batch size & 1000 \\
                learning rate & 0.001 \\
                optimizer & Adam \\
                cosine decay & 0.01 \\
                architecture & 1024-1024-100 \\
                activation & GroupSort2 \\
                epochs &  250 \\
                \hline
            \end{tabular}
            \caption{Learning parameters on CIFAR-100 with random labels.}
            \label{fig:cifar100-lp}
        \end{subfigure}
        \hfill
        \begin{subfigure}[b]{0.60\textwidth}
            \small
            \centering
            \begin{tabular}{cp{1.5cm}p{1.5cm}}
                \hline
                loss & $CCE$ \newline $\tau=256$ & $HKR$ \newline $\alpha=256$ \newline $m=\frac{36}{255}$ \\
                \hline
                accuracy & 0.999 & 0.998 \\
                robust accuracy $\epsilon=36$ & 0.382 & 0.91 \\
                robust accuracy $\epsilon=72$ & 0.021 & 0.19 \\
                lipschitz upper bound &  1.002 & 1.002 \\
                \hline
            \end{tabular}
            \caption{Learning results for CIFAR-100 with random labels: validation accuracy is not reported as its value is always 0.01 for clean accuracy and 0.00 for robust accuracy. The Lipschitz upper bound is computed using the power iteration method on each layer.}
            \label{fig:cifar100-results}
        \end{subfigure}
        \caption{}
    \end{figure}

    At first glance it's might seem surprising to see both high accuracy and high provable robustness on a dataset with random labels. This is compliant with the idea expresses by the authors of \cite{yang2020closer}: for a given accuracy one can increase the robustness radius around a sample $x_1$ up to the value $\| \frac{x_1 -x_2}{2} \| $ where $x_2$ is the closest sample with a different label. The decision frontier is close to the decision frontier of the 1-nearest neighbor based on the trained set.
    This illustrates that constraining the Lipschitz constant does not necessarily decrease accuracy and does not necessarily increase robustness. Also, it shows that there is no trivial link between robustness and generalization.

\section{1-Lipschitz estimators are consistent (experimental protocol of figure \ref{fig:consistency})}\label{ap:consistency}

    Lipschitz classifiers are consistent: as the size of the training set increases, the training loss becomes a proxy for the test loss. However, we do not give convergence speed bounds: we do not know how many samples are needed for a given task to observe the convergence between train and test losses. Moreover, the losses are parametrized (e.g by $\tau,\alpha,m$) so we expect to have different convergence rates, depending on those parameters. In order to observe this empirically on the CIFAR10 dataset, the same architecture (described in fig \ref{fig:consistency-archi}) was trained successively on 2\%, 5\%, 10\%, 25\%, 50\% and 100\% of the dataset. The sub-sampling was performed with a different seed each time, showing that, for this range of $\tau$ training is still stable. Similarly, this procedure has been repeated with different values for $\tau$. Hyper-parameters used for learning are reported in fig \ref{fig:consistency-lp}. Learning results are reported in Figure~\ref{fig:consitency-results}. It also shows certifiable accuracy and empirical accuracy, that were not displayed in the fig~\ref{fig:consistency} from the main paper. We see that lower values for $\tau$ yield tighter robustness certificates (certificate value is close to the distance found by L2-PGD).

    \begin{figure}
        \small
        \centering
        \begin{tabular}{llp{1.2cm}p{1.2cm}p{1.2cm}p{1.2cm}p{1.2cm}p{1.2cm}p{1.2cm}}
            \hline
            $\tau$ & \% dataset & accuracy & certifiable\newline $\epsilon:36$ & L2-PGD \newline $\epsilon:36$ &  certifiable \newline $\epsilon:72$ & L2-PGD \newline $\epsilon:72$ & certifiable \newline $\epsilon:108$ & L2-PGD \newline $\epsilon:108$ \\
            \hline
               8.0 &         1 &   0.6279 &          0.3075 &      0.525 &          0.0996 &      0.462 &           0.0204 &       0.388 \\
               4.0 &         1 &   0.6207 &          0.4289 &      0.568 &          0.2553 &      0.502 &           0.1308 &       0.443 \\
               2.0 &         1 &   0.5683 &          0.4456 &      0.535 &          0.3298 &      0.496 &           0.2313 &       0.457 \\
               1.0 &         1 &   0.5097 &          0.4331 &      0.482 &          0.3615 &      0.436 &           0.2926 &       0.404 \\
               0.5 &         1 &   0.4497 &           0.398 &      0.434 &          0.3478 &        0.4 &           0.3035 &       0.369 \\
              0.25 &         1 &   0.4064 &          0.3703 &      0.395 &          0.3346 &      0.373 &           0.2974 &       0.351 \\
               8.0 &       0.5 &   0.5959 &          0.2862 &      0.526 &          0.0973 &      0.443 &           0.0218 &       0.363 \\
               4.0 &       0.5 &   0.5884 &          0.3967 &      0.511 &          0.2275 &       0.44 &           0.1123 &       0.383 \\
               2.0 &       0.5 &   0.5569 &          0.4393 &      0.503 &          0.3298 &      0.463 &           0.2277 &       0.419 \\
               1.0 &       0.5 &   0.5012 &          0.4235 &      0.443 &          0.3525 &      0.403 &           0.2875 &       0.366 \\
               0.5 &       0.5 &   0.4486 &          0.3996 &      0.419 &          0.3517 &      0.397 &           0.3093 &       0.367 \\
              0.25 &       0.5 &   0.3928 &          0.3574 &      0.373 &          0.3224 &      0.352 &           0.2969 &       0.331 \\
               8.0 &      0.25 &   0.5553 &          0.2801 &      0.484 &          0.0987 &      0.402 &           0.0252 &       0.322 \\
               4.0 &      0.25 &   0.5696 &          0.3835 &      0.493 &          0.2208 &      0.437 &            0.112 &       0.382 \\
               2.0 &      0.25 &   0.5397 &          0.4156 &      0.482 &           0.299 &      0.431 &           0.2059 &       0.384 \\
               1.0 &      0.25 &   0.5013 &          0.4205 &      0.455 &           0.345 &      0.423 &           0.2779 &       0.389 \\
               0.5 &      0.25 &   0.4448 &          0.3919 &      0.418 &          0.3449 &      0.392 &           0.2983 &       0.372 \\
              0.25 &      0.25 &   0.3939 &          0.3593 &      0.359 &          0.3278 &      0.348 &           0.2944 &       0.327 \\
               8.0 &       0.1 &   0.4914 &          0.2617 &      0.396 &          0.1133 &      0.335 &           0.0396 &       0.271 \\
               4.0 &       0.1 &   0.5053 &          0.3397 &      0.449 &          0.2001 &      0.378 &           0.1024 &       0.322 \\
               2.0 &       0.1 &    0.503 &          0.3858 &      0.478 &          0.2793 &      0.426 &           0.1898 &       0.387 \\
               1.0 &       0.1 &   0.4783 &          0.3977 &      0.428 &          0.3188 &      0.391 &           0.2484 &       0.355 \\
               0.5 &       0.1 &   0.4385 &          0.3824 &      0.425 &           0.331 &      0.401 &           0.2799 &       0.383 \\
              0.25 &       0.1 &   0.3872 &          0.3517 &      0.379 &          0.3168 &       0.35 &           0.2874 &       0.328 \\
               8.0 &      0.05 &    0.445 &           0.266 &      0.406 &          0.1324 &      0.337 &           0.0585 &       0.264 \\
               4.0 &      0.05 &   0.4508 &          0.3052 &      0.439 &           0.187 &      0.374 &           0.1078 &       0.309 \\
               2.0 &      0.05 &   0.4562 &          0.3444 &      0.395 &          0.2478 &      0.342 &           0.1723 &       0.293 \\
               1.0 &      0.05 &   0.4392 &          0.3579 &      0.416 &          0.2898 &      0.377 &           0.2272 &       0.343 \\
               0.5 &      0.05 &   0.4176 &          0.3629 &      0.396 &          0.3106 &      0.365 &           0.2679 &       0.326 \\
              0.25 &      0.05 &   0.3741 &          0.3379 &      0.352 &          0.3028 &      0.323 &           0.2707 &       0.306 \\
               8.0 &      0.02 &   0.3778 &          0.2539 &      0.327 &           0.159 &       0.27 &           0.0922 &       0.227 \\
               4.0 &      0.02 &   0.3761 &          0.2738 &      0.298 &          0.1879 &      0.254 &           0.1236 &       0.207 \\
               2.0 &      0.02 &   0.3859 &          0.2965 &      0.325 &          0.2167 &      0.278 &           0.1549 &       0.236 \\
               1.0 &      0.02 &    0.385 &          0.3129 &       0.35 &          0.2522 &      0.312 &           0.1999 &       0.273 \\
              0.25 &      0.02 &   0.3378 &          0.3021 &      0.305 &          0.2684 &      0.288 &           0.2375 &        0.26 \\
               0.5 &      0.02 &   0.3631 &          0.3091 &      0.322 &          0.2605 &      0.302 &           0.2172 &       0.282 \\
            \hline
        \end{tabular}
        \caption{Network trained on different fractions of the CIFAR-10 dataset. For each value of $\tau$ and each dataset fraction, clean accuracy, certifiable and empirical accuracies are reported. We report accuracy under l2-PGD attack to perform a sanity check of the network's certificates. Interestingly, a lower temperature leads to a tighter bound for certifiable robustness (a lower gap between certifiable robustness and empirical robustness).}
        \label{fig:consitency-results}
    \end{figure}
    
    \begin{figure}
    \begin{subfigure}[b]{0.49\textwidth}
            \small
            \centering
            \begin{tabular}{c}
                \hline
                network architecture \\
                \hline
                conv-3x3-32 (groupsort 2)\\
                conv-3x3-32 (groupsort 2) \\
                invertible downsampling \\
                conv-3x3-64 (groupsort 2) \\
                conv-3x3-64 (groupsort 2) \\ 
                invertible downsampling \\
                conv-3x3-128 (groupsort 2) \\
                conv-3x3-128 (groupsort 2) \\ 
                flatten \\
                dense-128 (groupsort 2) \\
                dense-101 (None) \\
                \hline
            \end{tabular}
            \caption{Network architecture used in the consistency experiment. It has 1.6M trainable parameters.}
            \label{fig:consistency-archi}
        \end{subfigure}
        \hfill
        \begin{subfigure}[b]{0.49\textwidth}
            \small
            \centering
            \begin{tabular}{cc}
                \hline
                parameter & value \\
                \hline
                data augmentation & None \\
                input scale & $[0,1]$ \\
                batch size & 1000 \\
                learning rate & 1e-5 \\
                optimizer & Adam \\
                cosine decay & None \\
                epochs &  300 \\
                \hline
            \end{tabular}
            \caption{Training parameters used in the consistency experiment. No data augmentation has been used, as it would artificially increase the number of samples in the dataset and biases the results.}
            \label{fig:consistency-lp}
        \end{subfigure}
        \caption{}
    \end{figure}

\section{Controlling the accuracy/robustness tradeoff (experimental protocol  of figure \ref{fig:pareto_hkr})}\label{ap:pareto}

    
    A small CNN architecture, described more precisely in Figure~\ref{fig:pareto-archi} was trained multiple times with different losses and different loss parameters. Besides the learning rate, other parameters were left unchanged, and are depicted in Figure~\ref{fig:pareto-lp}. The learning rate is chosen depending on the loss parameters: when trained with CCE, changing $\tau$ implicitly changes the norm of the gradient, thus when doubling $\tau$ one must divide the learning rate by a factor of two. The same phenomenon occurs with the $\alpha$ parameter of HKR. We kept $m=20$ fixed for HKR, and we tuned $\alpha$.  
    
    For each run the final validation accuracy is reported on \textbf{x}-axis while the provable accuracy at $\epsilon=36$ is reported on \textbf{y} axis. The choice of the robustness metric was set to the robust accuracy at $\epsilon=36$ because of its wide use in the community. However,  other robustness metrics also yield a Pareto front: two examples are shown in Figures~\ref{fig:pareto-avg} and~\ref{fig:pareto-MCR}. In those examples, we also test different combinations of values for $m$ in HKR.  
    
    \begin{figure}
        \begin{subfigure}[b]{\textwidth}
                \small
                \centering
                \includegraphics[width=\textwidth]{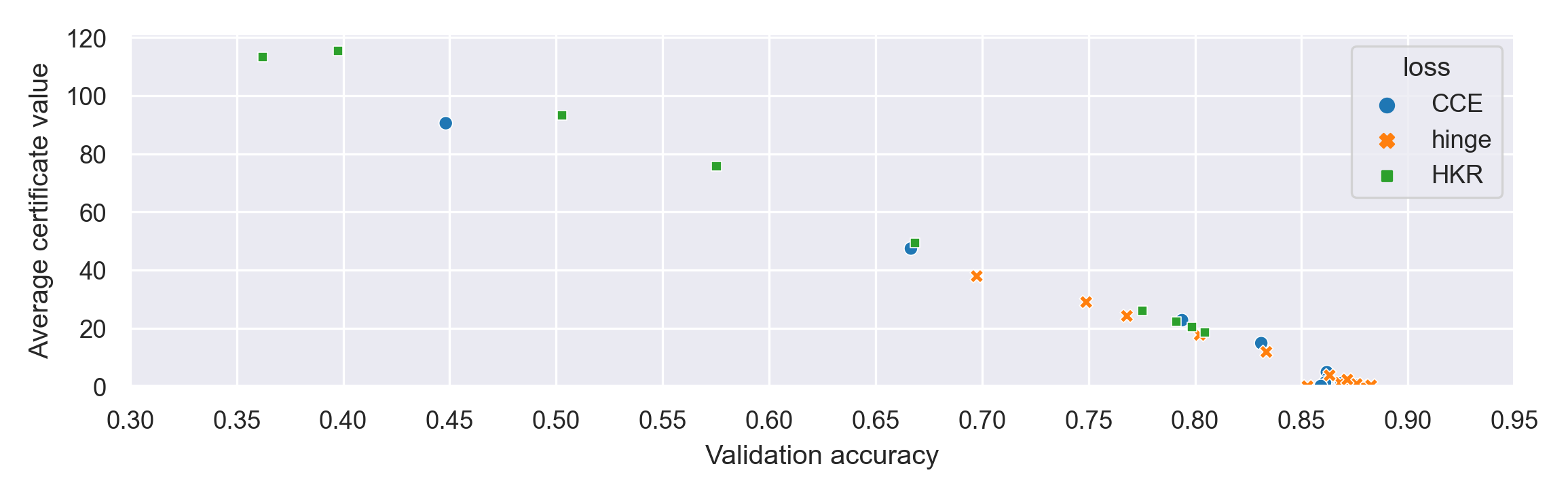}
                \caption{}
                \label{fig:pareto-avg}
        \end{subfigure}
        \hfill
        \begin{subfigure}[b]{\textwidth}
            \small
            \centering
            \includegraphics[width=\textwidth]{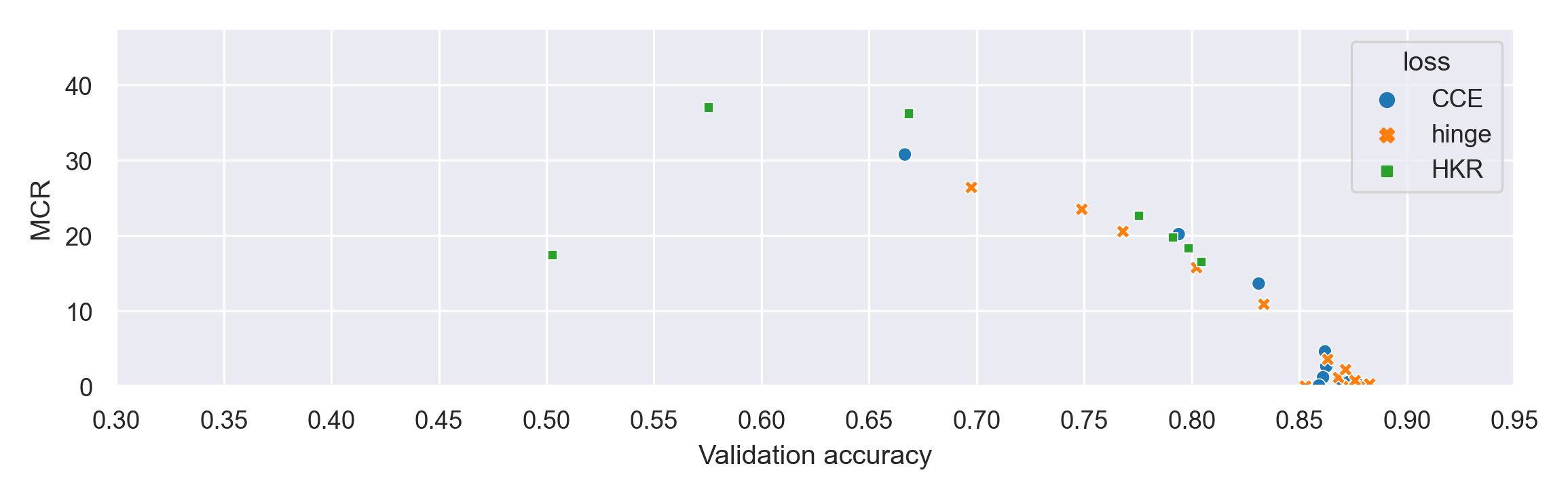}
            \caption{}
            \label{fig:pareto-MCR}
        \end{subfigure}
        \caption{Pareto front for other robustness metrics: depending on the metric chosen to evaluate robustness, the shape of the Pareto front is changed. Upper chart shows use the average certificate value (robustness that does \textbf{not} take into account the true label, only the average value of $|f(x)|$), while the lower uses the MCR. The same models are used for these two graphs and Fig~\ref{fig:pareto_hkr}.}
    \end{figure}
    
    Note that this Pareto front can also be influenced by other factors: training larger architectures could improve both accuracy and robustness. Similarly, data augmentation has an impact on accuracy and robustness, but studying the phenomenon is out of the scope of this paper.
    
    Finally, it is important to note that, the comparison between two architectures (or two robustness methods) cannot be done properly with a single training (and fixed hyper-parameters): comparing their Pareto front is more relevant.
    
    \begin{figure}
        \begin{subfigure}[b]{0.35\textwidth}
                \small
                \centering
                \begin{tabular}{c}
                    \hline
                    network architecture \\
                    \hline
                    conv-3x3-32 (groupsort 2)\\
                    conv-3x3-32 (groupsort 2) \\
                    L2 norm pooling 2D \\
                    conv-3x3-64 (groupsort 2) \\
                    conv-3x3-64 (groupsort 2) \\ 
                    conv-3x3-64 (groupsort 2) \\ 
                    L2 norm pooling 2D \\
                    conv-3x3-128 (groupsort 2) \\
                    conv-3x3-128 (groupsort 2) \\
                    conv-3x3-128 (groupsort 2) \\ 
                    global L2 norm pooling 2D \\
                    dense-128 (groupsort 2) \\
                    dense-101 (None) \\
                    \hline
                \end{tabular}
                \caption{Network architecture used in the Pareto front experiment. It has 0.4M trainable parameters.}
                \label{fig:pareto-archi}
            \end{subfigure}
            \hfill
            \begin{subfigure}[b]{0.6\textwidth}
                \small
                \centering
                \begin{tabular}{cp{5.5cm}}
                    \hline
                    parameter & value \\
                    \hline
                    data augmentation & \begin{itemize}[leftmargin=*]
                        \item random flip left right
                        \item random brightness: $\delta=0.2$
                        \item random contrast: lower=0.75, upper=1.3
                        \item random hue: $\delta=0.1$
                        \item random saturation: lower=0.8 upper=1.2
                        \item random crop: $\text{scale}\in[0.8, 1.0]$
                    \end{itemize} \\
                    input scale & $[0,256]$ \\
                    batch size & 512 \\
                    learning rate & $[5\times10^{-2}, 1\times10^{-2}, 5\times10^{-3}, 1\times10^{-3}]$ \\
                    optimizer & Adam \\
                    cosine decay & 1e-2 \\
                    epochs &  300 \\
                    \hline
                \end{tabular}
                \caption{Training parameters used to build the Pareto front between accuracy and robustness. As the loss parameters implicitly change gradient norm, learning rate has been changed adequately, ranging from $5e-2$ (low $\tau$ and low $\alpha$) to $1e-3$ (high $\tau$ and high $\alpha$).}
                \label{fig:pareto-lp}
            \end{subfigure}
            \caption{}
    \end{figure}

\section{Hardware}\label{ap:xpsetting}
  
Toy experiments depicted in Fig.~\ref{ex:nosameminimum}, Fig.~\ref{fig:signeddistancefunction}, Fig.~\ref{fig:twomoonstradeoff} and example~\ref{ex:gotcha}  were run on a personal workstation with NVIDIA Geforce 1080 GTX and 8GB VRAM, 16 cores Xeon and 32GB RAM.  
  
Large scales experiments depicted in Fig.~\ref{fig:consistency}, Fig.\ref{fig:pareto_hkr} were run on Google Cloud with TPU v2-8. For reference, the experiments with CNNs on CIFAR10 (appendix  \ref{ap:pareto}), took 4.9s per epoch on average.

\textit{Tensorflow} framework was used in every experiment but the one of Example~\ref{ex:gotcha}, where \textit{Jax} was used instead (because {order2} and \textit{float64} experiments are easier to write in this library).  
  
\section{Divergence of the weights on \LipInf networks}\label{app:divergence}

In this example, we illustrate that example\ref{ex:gotcha} behavior can be observed at larger scale on MNIST with a ConvNet of \LipInf. We used $3\times 3$ convolution filters of widths $32\veryshortarrow 64$ with \textbf{MaxPool} and \textbf{ReLU}, followed by a flattening operation and densely connected layers of widths $256\veryshortarrow 10$.  
  
Newton's method cannot be used due to its memory requirements on ConvNet. We tested SGD with learning rate $\eta=0.1$ and momentum $m=0.9$, and Adam with learning rate $\eta=1e-3$ and other default parameters. Experiments were run both in \textit{float32} and \textit{float64} precision. We monitor the maximum spectral norm of the weights of the network throughout training for each epoch $t\in\Natural$:
$$\mathcal{M}^t=\max_i\|W^t_i\|_2.$$
  
We report $\mathcal{M}^t$ as function of epoch $t$ in Figure~\ref{fig:divergence}. The validation accuracy is above 98\% after the first epoch, and fluctuates between 98.5\% and 99.5\% during the following epochs (in either cases). Similarly the validation loss fluctuates between $1e-1$ and $1e-3$. We see that on this simple task the spectral norm of weight matrices continues to grow indefinitely, even though the classifier is almost perfect after one epoch. Interestingly, on this experiment the vanishing gradient phenomenon cannot be observed after 25 epochs and the results are robust with respect to the precision of the floating point arithmetic.     
  
This is compliant with the observations made in the literature about the high Lipschitz constant of \LipInf networks~\cite{scaman2018lipschitz}. We observe that Adam makes the problem worse, even if its learning rate is smaller. This may explain why many practitioners reported that Adam was more susceptible to overfit than SGD with a carefully tuned learning rate scheduling.   
  
\begin{figure}
    \centering
    \includegraphics[scale=0.8]{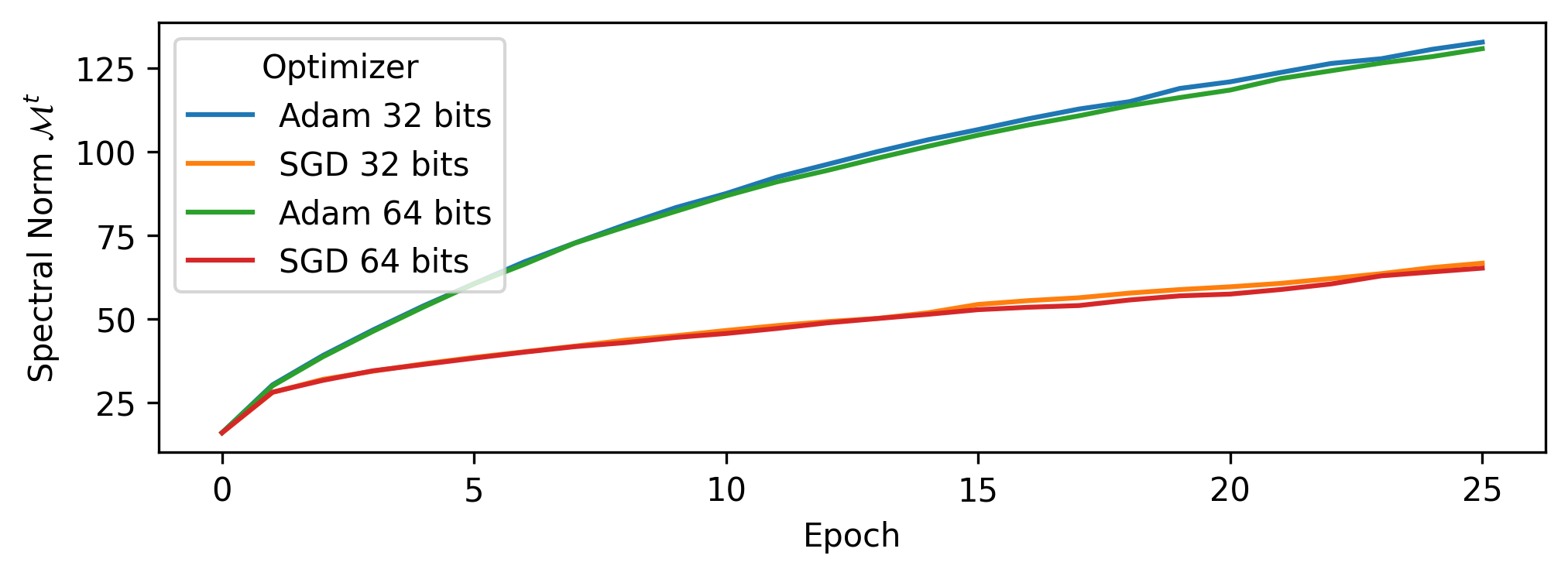}
    \caption{Maximum spectral norm of the weights of a simple ConvNet of \LipInf trained with different optimizers on MNIST dataset. The validation accuracy remains above $98.5\%$ after the second epoch but the network's weights do not converge: the spectral norm seems to grow indefinitely.}
    \label{fig:divergence}
\end{figure}

\section{Stability of training of \LipCl}\label{app:stability}

To check this, we trained different \LipCl networks, either by tuning the value of temperature $\tau$, or by tuning the number of filters in convolutional layers. We used Fashion-Mnist dataset.  

\subsection{Moving along Pareto front by tuning temperature}

\begin{wrapfigure}{R}{0.65\textwidth}
  \includegraphics[width=0.65\textwidth]{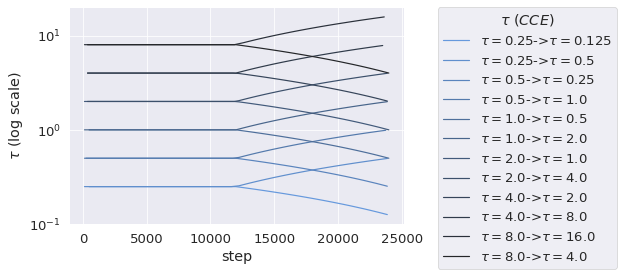}
  \caption{
    Schedule for the $\tau$ parameter: $\tau$ is set to be constant for 200 epochs, followed by a linear increased/decreased by a factor of 2 for 200 epochs. 
  }
  \label{fig:schedule}
\end{wrapfigure}

In this experiment, we explore the stability of training with respect to the loss parameter (here we use CCE with $\tau$). To do so we perform a scheduling on $\tau$: each network of the experiment is trained with a fixed $\tau$ for 200 epochs, then $\tau$ is increased/decreased linearly by a factor of 2 for 200 more epochs (see \ref{fig:schedule}.
We train a total of 12 \LipCl networks with the same architecture: three blocks of two convolutions, bias and group sort are followed by a Pooling layer (L2NormPooling) finally followed by a flatten and a Dense layer (architecture synthesized as c32-c32-P-c64-c64-P-c128-c128-D256 ). Each training is performed with the same optimizer (Adam) and the same learning rate (0.001). Each dot in the graph of Figure~\ref{fig:tau_curriculum} corresponds to the metrics of a network after one epoch. The validation accuracy can be found on \textbf{x-axis} and MCR metric on \textbf{y-axis}.
  
During the first epochs, the dots can be found inside the region delimited by the Pareto front: the network has not converged yet, and both Mean Certifiable Robustness and validation accuracy are low. After few epochs, the dots start to accumulate on the Pareto front. Then, the value of temperature $\tau$ is tuned \textit{during} the training, from initial $\tau_{\text{start}}$ to $\tau_{\text{final}}$. Each of the color corresponds to a different value of Tau. We see that the temperature can be modified during training to move along the Pareto front.

\begin{figure}
    \centering
    \includegraphics[width=\textwidth]{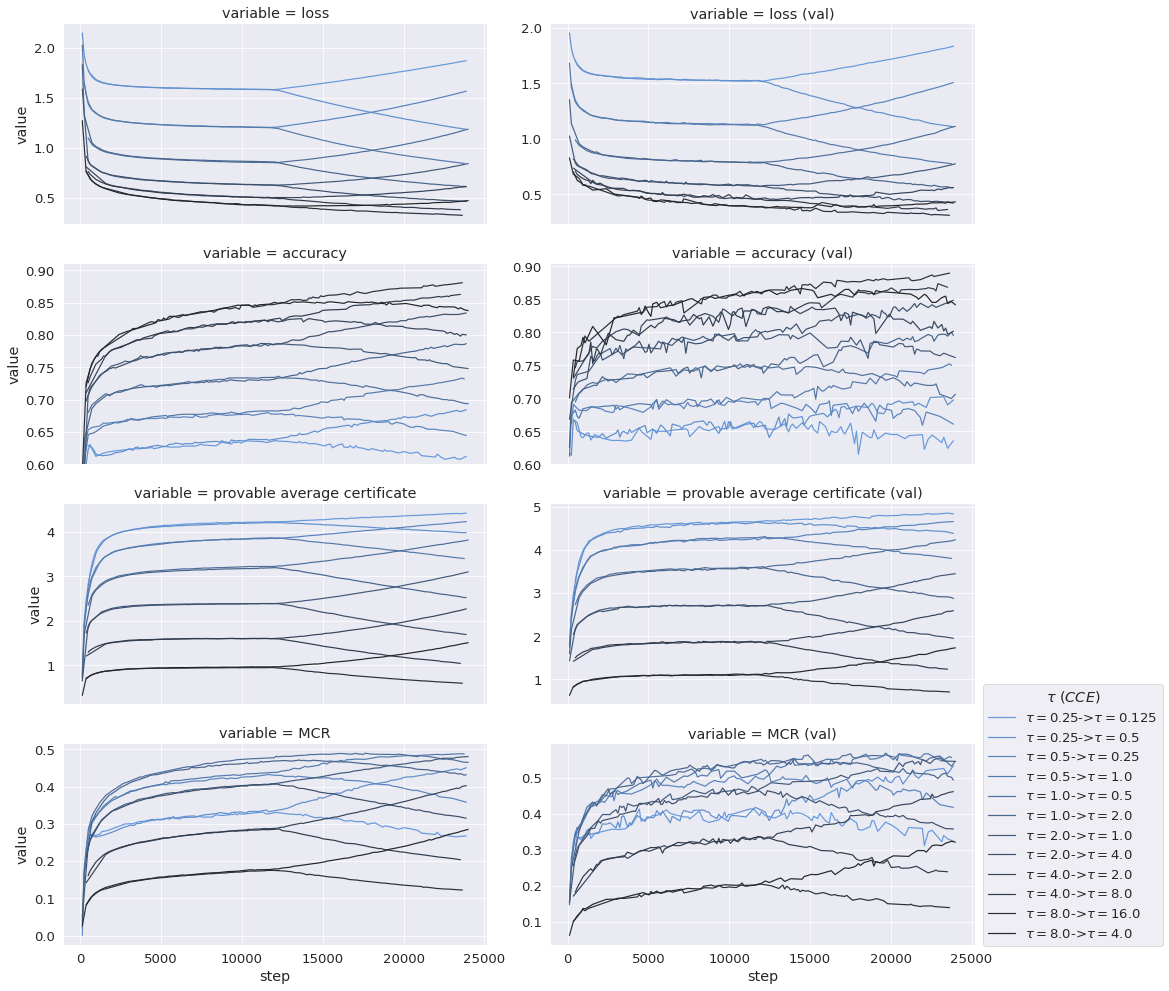}
    \caption{\textbf{Learning curves on Fashion-Mnist}: 12 \LipCl networks are trained using the schedule depicted in fig \ref{fig:schedule} with respectively $\tau_{\text{init}}\in [0.25, 0.5, 1, 2, 4, 8]$. At each epoch accuracy (x axis) and MCR (y axis) are reported. }
    \label{fig:fashion-mnist_learning}
\end{figure}

We can get a closer look at the trajectory of two networks, which are reported in fig \ref{fig:local_similarity} to better illustrate this phenomenon. Despite their starting point being different, they end up on a minimum with the same MCR/accuracy tradeoff. It seems that the position on the Pareto front only depends on the value of $\tau_{\text{final}}$. Not only the functions are similar at a global scale, but it is also valid at a local scale: while the two nets are only 77\% accurate, they agree on 93\% of the validation samples (ie. they make the same error on the same sample).  

\begin{wrapfigure}{r}{0.55\textwidth}
    \centering
    \begin{tabular}{|c|c|c|}
        \hline
        model & model1 & model2 \\
        \hline
        \hline
        $\tau_i$ & 1.0 & 4.0 \\
        \hline
        $\tau_f$ & 2.0 & 2.0 \\
        \hline
        accuracy (train) & 0.7838 & 0.7793 \\
        \hline
        coincidence (train) & \multicolumn{2}{c|}{0.9421} \\
        \hline
        accuracy (val) & 0.7754 & 0.7724 \\
        \hline
        coincidence (val) & \multicolumn{2}{c|}{0.9350} \\
        \hline
        coincidence (random) & \multicolumn{2}{c|}{0.9937} \\
        \hline
    \end{tabular}
    
    \caption{two \LipCl networks with different initialization and learning curriculum learns similar function as long as the $\tau$ is the same at the end. Although these models only have 69\% accuracy, their predictions match on 92\% of the test set samples.}
    \label{fig:local_similarity}
\end{wrapfigure}
  
Observe that all dots tend to accumulate on the Pareto front, even though they are 12 different networks being trained. It suggests that this method is stable with respect to the input seed. Some of the networks trained with high $\tau_{\text{start}}$ (for example $\tau_{\text{start}}=8$ and $\tau_{\text{final}}=16$) seem to ``lag behind'': empirically we observe that more epochs are required to make the network converge. Hence, the speed at which $\tau$ is modified must be scaled appropriately to ensure that the best Pareto front is recovered.  

This experiment also suggests that a curriculum can be a satisfying approach to tune $\tau$ : an expensive grid search over $\tau$ could be replaced by a single training with a scheduler placed on $\tau$. 
  
\begin{figure}
    \centering
    \includegraphics[scale=0.44]{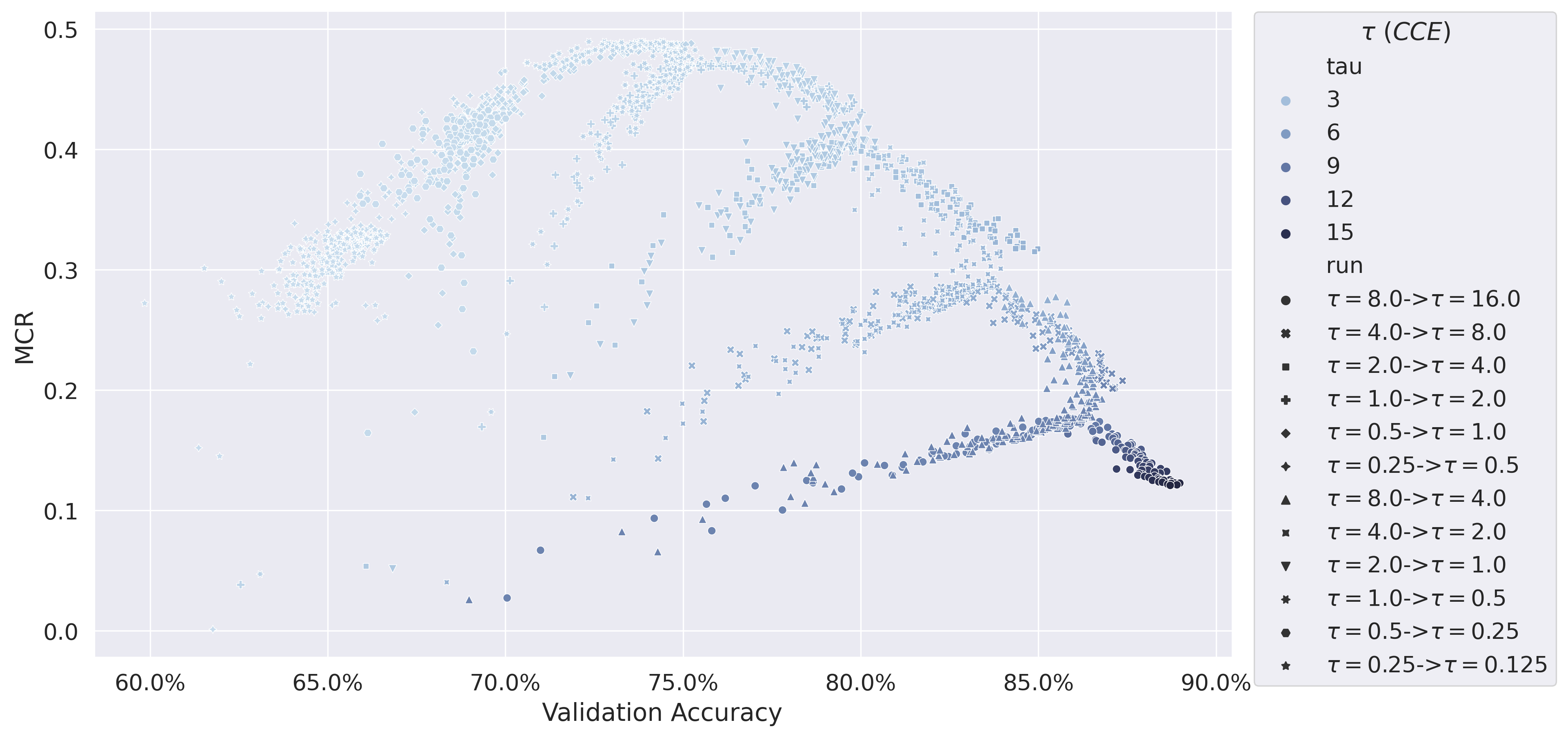}
    \caption{\textbf{MCR/accuracy tradeoff on the validation set of Fashion-Mnist}: 12 \LipCl networks are trained using the schedule depicted in fig \ref{fig:schedule} with respectively $\tau_{\text{init}}\in [0.25, 0.5, 1, 2, 4, 8]$. At each epoch accuracy (x axis) and MCR (y axis) are reported. The Pareto front is still apparent: the MCR/accuracy tradeoff only depends on $\tau$ and not on the initialization.}
    \label{fig:tau_curriculum}
\end{figure}  
  
\subsection{Shifting Pareto front by tuning architecture}

In this experiment we explore an important question: what happens when the architecture is changed? To explore this we perform the same experiment as \ref{app:stability}, but with smaller and larger architectures: the architectures are denoted by the number of filters in their first convolutions (filter of other block are adjusted accordingly by doubling the number of filter of the previous block). It shows how the expressiveness of the architectures affects the Pareto front. We report the results in Figure~\ref{fig:archi_curriculum}.  
  
The validation accuracy can be found on \textbf{x-axis} and the MCR on \textbf{y-axis}. Each dot corresponds to an epoch/a network. Different colors correspond to different architecture widths.
  
As expected, larger networks are more expressive, and as a result, the Pareto front is shifted toward higher accuracy and higher robustness. This observation holds for every scheduling $\tau$. The MCR/accuracy is also architecture dependent.  

\begin{figure}
    \centering
    \includegraphics[scale=0.44]{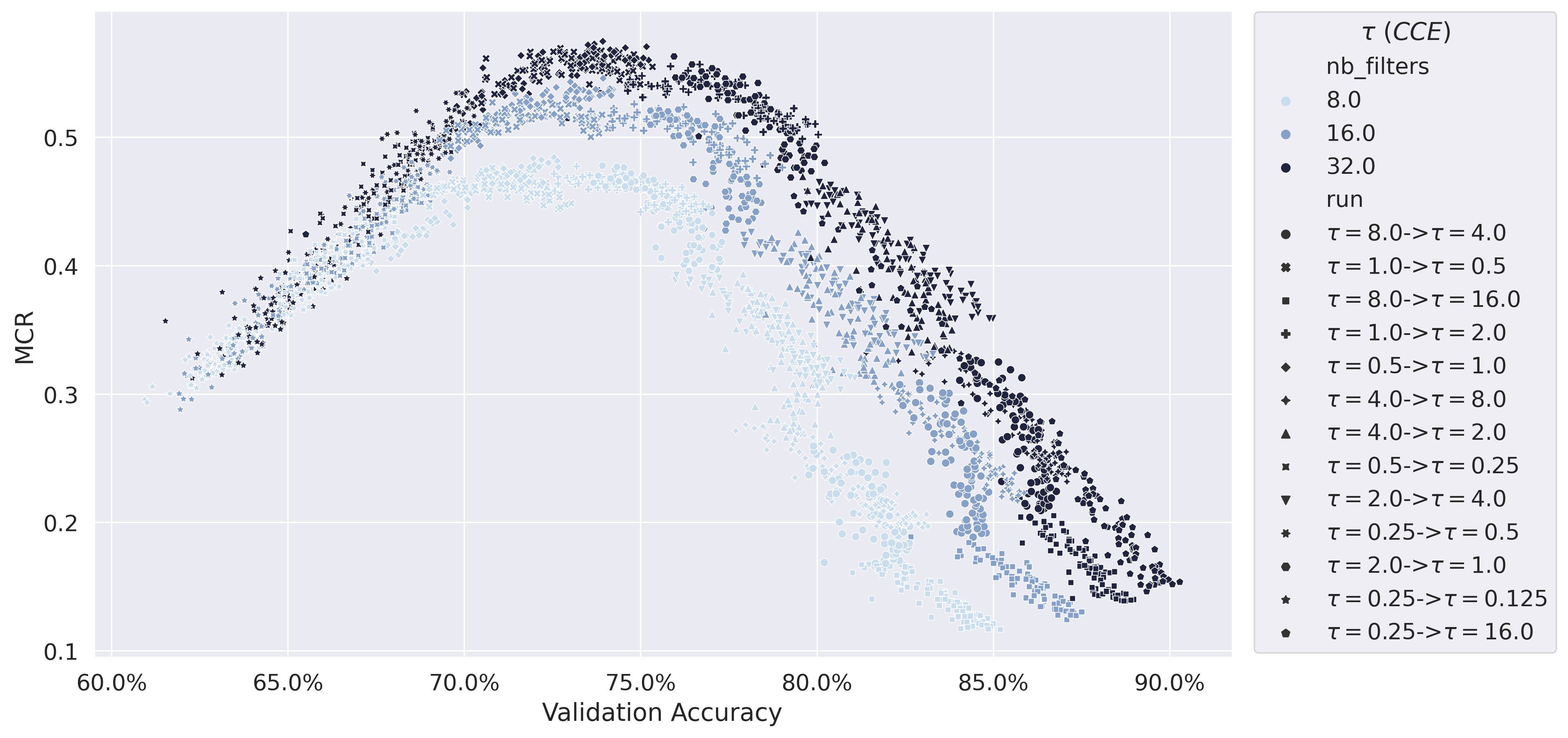}
    \caption{\textbf{MCR/accuracy tradeoff on Fashion-Mnist for \LipCl when architecture size changes:} At each epoch, validation accuracy and MCR are reported. The networks are trained following a scheduling for $\tau$ as described in Figure~\ref{fig:schedule}. We see that larger networks are more expressive, and the Pareto front is shifted toward higher accuracy and higher robustness.}
    \label{fig:archi_curriculum}
\end{figure}

\end{document}